\documentclass[11pt,oneside]{article}
\usepackage{graphicx}

\usepackage{amsfonts,bm,enumerate}
\usepackage{amsthm,amsmath,amssymb,amsfonts}
\allowdisplaybreaks
\usepackage{mathtools,xfrac}
\usepackage{url,xcolor}
\usepackage{graphicx,epsfig}
\usepackage{enumerate}
\usepackage{pdfsync,array}
\usepackage[normalem]{ulem}

\usepackage[sort,numbers]{natbib}

\usepackage[ruled,linesnumbered]{algorithm2e}
\usepackage{verbatim}
\usepackage{makeidx,latexsym}
\usepackage[colorlinks=true,citecolor=blue]{hyperref}
\usepackage{booktabs}

\usepackage{slashbox}

\usepackage{tikz}
\usetikzlibrary{calc}
\usetikzlibrary{patterns}
\usepackage{pgfplots,pgfplotstable}

\usepackage{caption,subcaption}

\usepackage[capitalize]{cleveref}

\newcommand{\eps}{\epsilon}

\theoremstyle{plain}
\newtheorem{theorem}{Theorem}[section]

\newtheorem{lemma}[theorem]{Lemma}
\newtheorem*{lemma*}{Lemma}
\newtheorem{proposition}[theorem]{Proposition}
\newtheorem*{proposition*}{Proposition}

\newtheorem*{observation*}{Observation}

\theoremstyle{definition}

\newtheorem{definition}{Definition}[section]
\newtheorem{assumption}{Assumption}[section]

\theoremstyle{remark}
\newtheorem{remark}{Remark}[section]
\newtheorem*{remark*}{Remark}

\newtheorem*{example*}{Example}

\definecolor{gold}{rgb}{0.85,0.65,0}
\colorlet{dgreen}{green!50!black}

\def\beq{\begin{equation}}
\def\eeq{\end{equation}}

\def\fnote#1{\footnote}
\newcommand{\BlBox}{\rule{1.5ex}{1.5ex}}
\newcommand{\epr}{\hfill\BlBox\vspace{0.0cm}\par}

\newcommand{\grad}{\ensuremath{\nabla}}

\def\R{{\mathbb{R}}}

\def\cI{{\cal I}}

\def\cR{{\cal R}}

\def\cT{{\cal T}}

\def\cW{{\cal W}}

\newcommand{\bbE}{\mathbb{E}}

\newcommand{\bbI}{\mathbb{I}}

\newcommand{\bbN}{\mathbb{N}}

\newcommand{\bbP}{\mathbb{P}}

\newcommand{\bbR}{\mathbb{R}}

\newcommand{\CI}{\mathcal{I}}

\newcommand{\CP}{\mathcal{P}}

\newcommand{\CW}{\mathcal{W}}

\DeclareMathOperator*{\argmin}{arg\,min}
\DeclareMathOperator*{\argmax}{arg\,max}

\DeclareMathOperator{\sign}{sign}

\DeclareMathOperator{\CVaR}{CVaR}

\DeclareMathOperator{\Span}{Span}

\DeclareMathOperator{\cl}{cl}

\DeclareMathOperator{\Conv}{Conv}

\DeclareMathOperator{\Cone}{Cone}

\def\log{\mathop{{\rm log}}}

\newcommand{\psisig}{\psi_{\sigma}}
\newcommand{\beps}{\bar\epsilon}

\usepackage{authblk}
\usepackage[margin=1in]{geometry}
\usepackage{setspace}

\title{Adversarial Classification via Distributional Robustness with Wasserstein Ambiguity}
\author[1]{Nam Ho-Nguyen}
\author[2]{Stephen J. Wright}
\affil[1]{The University of Sydney}
\affil[2]{University of Wisconsin--Madison}

\begin{document}

\maketitle

\begin{abstract}
We study a model for adversarial classification based on distributionally robust chance constraints. We show that under Wasserstein ambiguity, the model aims to minimize the conditional value-at-risk of the distance to misclassification, and we explore links to adversarial classification models proposed earlier and to maximum-margin classifiers. We also provide a reformulation of the distributionally robust model for linear classification, and show it is equivalent to minimizing a regularized ramp loss objective. Numerical experiments show that, despite the nonconvexity of this formulation, standard descent methods appear to converge to the global minimizer for this problem. Inspired by this observation, we show that, for a certain class of distributions, the only stationary point of the regularized ramp loss minimization problem is the global minimizer.
\end{abstract}

\section{Introduction} \label{sec:intro}

Optimization models have been used for prediction and pattern
recognition in data analysis as early as the work of
\citet{Mangasarian1965} in the 1960s. Recent developments have seen
models grow in size and complexity, with success on a variety of
tasks, which has spurred many practical and theoretical advances in
data science. However, it has been observed that models that achieve
remarkable prediction accuracy on unseen data can lack robustness to
small perturbations of the data
\citep{SzegedyEtAl2014,GoodfellowEtAl2015}.
For example, the correct classification of a data point for a
trained model can often be switched to incorrect by adding a small
perturbation, carefully chosen.  This fact is particularly problematic
for image classification tasks, where the perturbation that
yields misclassification can be imperceptible to the human
eye\footnote{See, for example,
	\url{https://adversarial-ml-tutorial.org/introduction/}.}.

This observation has led to the emergence of \emph{adversarial machine learning}, a field that examines robustness properties of models to (potentially adversarial) data perturbations. 
Two streams of work in this area are particularly notable. 
The first is \emph{adversarial attack} \citep{CarliniWagner2017a,CarliniWagner2017b,MoosaviDezfooliEtAl2016}, where the aim is to ``fool'' a trained model by constructing adversarial perturbations. 
The second is \emph{adversarial defense}, which focuses on model training methods that produce classifiers that are robust to perturbations \citep{CohenEtAl2019,MoosaviDezfooliEtAl2018,WangEtAl2019,MadryEtAl2018,TramerEtAl2018,ChenEtAl2019neurips,WongKolter2018,BertsimasEtAl2019}. 
Most models for adversarial defense are based on robust optimization, where the training error is minimized subject to arbitrary perturbations of the data in a ball defined by some distance function (for example, a norm in feature space). As such, these algorithms are reminiscent of iterative algorithms from robust optimization \citep{HoNguyenKK2018,MutapcicBoyd2009,BenTalEtAl2015}. 
Theoretical works on adversarial defense also focus on the robust optimization model, discussing several important topics such as hardness and fundamental limits \citep{GilmerEtAl2019,BubeckEtAl2019,FawziEtAl2018,FawziEtAl2018neurips}, learnability and risk bounds \citep{YinEtAl2019,ZhangEtAl2019}, as well as margin guarantees and implicit bias for specific algorithms \citep{CharlesEtAl2019,LiEtAl2020Implicit}.

In optimization under uncertainty and data-driven decision-making, the concept of \emph{distributional robustness} offers an intriguing alternative to stochastic optimization and robust optimization \citep{MohajerinEsfahaniKuhn2018,BlanchetMurthy2019,ChenKuhnWiesemann2018,Xie2019}. 
Instead of considering perturbations of the data (as in robust optimization), this approach considers perturbations in the \emph{space of distributions} from which the data is drawn, according to some distance measure in distribution space (for example, $\phi$-divergence or Wasserstein distance). 
This technique enjoys strong statistical guarantees and its numerical performance often outperforms models based on stochastic or robust optimization. 
In particular, for perturbations based on Wasserstein distances, the new distributions need not have the same support as the original empirical distribution.

In this paper, we explore adversarial defense by using ideas from
distributional robustness and Wasserstein ambiguity sets. We
focus on the fundamental classification problem in machine learning,
and its formulation as an optimization problem in which we seek to
minimize the probability of misclassification. We study a
distributionally robust version of this problem and explore
connections between maximum-margin classifiers and conditional
value-at-risk objectives. We then focus on the linear classification
problem. While convex linear classification formulations are
well-known \citep{BennettMangasarian1992}, the model we study is
based on a ``zero-one'' loss function $r \mapsto \bm{1} (r \leq 0)$,
which is discontinuous and thus nonconvex. However, we show that in
the case of binary linear classification, the reformulation of the
distributionally robust model gives rise to the ``ramp loss''
function $L_R$ defined in \eqref{eq:ramploss}, and we propose
efficient first-order algorithms for minimization. While the ramp
loss is nonconvex, the nonconvexity is apparently ``benign''; the
global minimizer appears to be found for sufficiently dense
distributions. Indeed, we prove that for a certain class of distributions, the global
minimizer is the only stationary point. Numerical experiments
confirm this observation.

\subsection{Problem description}\label{sec:description}

Suppose that data $\xi$ is drawn from some distribution $P$ over a set $S$. 
In the learning task, we need to find a decision variable $w$, a classifier, from a space $\CW$. 
For each $(w,\xi) \in \CW \times S$, we evaluate the result of choosing classifier $w$ for outcome $\xi$ via a ``safety function'' $z:\CW \times S \to \bbR \cup \{+\infty\}$. 
We say that $w$ ``correctly classifies'' the point $\xi$ when $z(w,\xi) > 0$, and $w$ ``misclassifies'' $\xi$ when $z(w,\xi) \leq 0$. 
Thus, we would like to choose $w$ so as to minimize the probability of misclassification, that is, 
\begin{equation}\label{eq:prob-nominal}
	\inf_{w \in \CW} \bbP_{\xi \sim P} \left[ z(w,\xi) \leq 0 \right].
\end{equation}

This fundamental problem is a generalization of the binary
classification problem, which is obtained when $S = X \times
\{\pm 1\}$, where $\xi = (x,y)$ is a feature-label pair, $\CW$
describes the space of classifiers under consideration (e.g., linear
classifiers or a reproducing kernel Hilbert space), and $z(w,(x,y)) = y
w(x)$; so $w$ correctly classifies $(x,y)$ if and only if $\sign(w(x))
= y$.

In the context of adversarial classification, we are interested in finding decisions $w \in \CW$ which are \emph{robust to (potentially adversarial) perturbations of the data $\xi$}. In other words, if our chosen $w$ correctly classifies $\xi$ (that is, $z(w,\xi) > 0$), then any small perturbation $\xi+\Delta$ should also be correctly classified, that is, $z(w,\xi+\Delta)>0$ for ``sufficiently small'' $\Delta$. 
To measure the size of perturbations, we use a distance function $c:S \times S \to [0,+\infty]$ that is nonnegative and lower semicontinuous with $c(\xi,\xi') = 0$ if and only if $\xi = \xi'$. (For binary classification $\xi = (x,y)$ mentioned above, the distance function can be $c((x,y),(x'y')) = \|x-x'\|+\bbI_{y=y'}(y,y')$ where $\bbI_A$ is the convex indicator of the set $A$.) 
For a classifier $w \in \cW$, we define the
\emph{margin}, or \emph{distance to misclassification}, of a point $\xi \in S$ as
\begin{equation}\label{eq:distance}
	d(w,\xi) := \inf_{\xi' \in S} \, \left\{ c(\xi,\xi') : z(w,\xi') \leq 0 \right\}.
\end{equation}
(Note that $d(w,\xi)=0 \Leftrightarrow z(x,\xi) \le 0$.)

Two optimization models commonly studied in previous works on adversarial classification are the following:
\begin{subequations} \label{eq:prob-previous-adversarial-forms}
	\begin{align}
		&\inf_{w \in \CW} \, \bbP_{\xi \sim P} \left[ d(w,\xi) \leq \epsilon \right], \label{eq:prob-robust}\\
		&\sup_{w \in \CW} \, \bbE_{\xi \sim P} \left[ d(w,\xi) \right]. \label{eq:prob-max-dist}
	\end{align}
\end{subequations}
The first model \eqref{eq:prob-robust}, which is the most popular such model, aims to minimize the probability that the distance of a data point $\xi$ to a bad result will be smaller than a certain threshold $\epsilon \geq 0$.
This is more commonly stated as a robust optimization type problem:
\[ \inf_{w \in \cW} \bbP_{\xi \sim P} \left[ \inf_{\xi' : c(\xi,\xi') \leq \eps} z(w,\xi') \leq 0 \right]. \]
Note that \eqref{eq:prob-nominal} is a special case of \eqref{eq:prob-robust} in which $\epsilon = 0$. 
The second model \eqref{eq:prob-max-dist} maximizes the expected margin.
This model removes the need to choose a parameter $\epsilon$, but \citet[Lemma~1]{FawziEtAl2018} has shown that this measure is inversely related to the probability of misclassification $\bbP_{\xi \sim P} [z(w,\xi) \leq 0]$, that is, a lower probability of misclassification (good) leads to a lower expected distance (bad), and vice versa. 
Thus, this model is not used often.

With \emph{distributional robustness}, rather than guarding against perturbations in the data points $\xi$, we aim to guard against perturbations of the distribution $P$ of the data. 
In this paper, we study the following distributionally robust optimization (DRO) formulation (stated in two equivalent forms that will be used interchangeably throughout):
\begin{equation}\label{eq:prob-DRO}
	\inf_{w \in \CW} \, \sup_{Q:d_W(P,Q) \leq \epsilon} \, \bbP_{\xi \sim Q} \left[ z(w,\xi) \leq 0 \right] \;\; \Leftrightarrow \;\; \inf_{w \in \CW} \, \sup_{Q:d_W(P,Q) \leq \epsilon} \, \bbP_{\xi \sim Q} \left[ d(w,\xi) = 0 \right],
\end{equation}
that is, we aim to minimize the \emph{worst-case} misclassification probability over a ball of distributions $\{Q : d_W(Q,P) \leq \epsilon\}$. 
The ball is defined via the \emph{Wasserstein distance} between two distributions, which is defined via the function $c$ as follows:
\begin{equation} \label{eq:def.Wasserstein}
	d_W(P,Q) := \inf_{\Pi} \, \left\{ \bbE_{(\xi,\xi') \sim \Pi} \left[ c(\xi,\xi') \right] : \Pi \text{ has marginals } P,Q \right\}.
\end{equation}
We use the Wasserstein distance due to the fact that distributions in the Wasserstein ball can also capture perturbations in data points themselves, similar to \eqref{eq:prob-robust}. Indeed, \citet[Corollary 1]{PydiJog2021} prove that \eqref{eq:prob-robust} is upper bounded by \eqref{eq:prob-DRO}, thus the two are intimately related. One of our aims in this paper, however, is to understand the optimal solutions to \eqref{eq:prob-DRO} and how they differ to those of \eqref{eq:prob-robust}.

In practice, the true distribution $P$ is not known to us; we typically have only a finite sample  $\xi_i \sim P$, $i \in [n]$ of training data, drawn from $P$, from which we can define the empirical distribution $P_n := \frac{1}{n} \sum_{i \in [n]} \delta_{\xi_i}$. 
We use $P_n$ as the center of the ball of distributions, that is, we solve the formulation \eqref{eq:prob-DRO} in which $P_n$ replaces $P$.

\subsection{Contributions and outline} \label{sec:contributions}
In this paper, we first explore how using a Wasserstein ambiguity set in \eqref{eq:prob-DRO} can result in stronger guarantees for robustness to perturbations than \eqref{eq:prob-robust}.
Specifically, in Section~\ref{sec:margin-finite}, we show that for sufficiently small $\epsilon$, \eqref{eq:prob-DRO}
yields
the maximum-margin classifier. 
In Section~\ref{sec:margin-cvar}, we extend the link between the conditional value-at-risk of the distance function \eqref{eq:distance} and a chance-constrained version of \eqref{eq:prob-DRO} (observed by \citet{Xie2019}) to the probability minimization problem \eqref{eq:prob-DRO}. 
This link yields an
interpretation of optimal solutions of \eqref{eq:prob-DRO} for large $\epsilon$, as optimizers of the conditional value-at-risk of the distance function \eqref{eq:distance}. Thus, solving the DRO problem ensures that, for a certain fraction $\rho$, the average of the $\rho$-proportion of \emph{smallest} margins is as large as possible.

In Section~\ref{sec:reformulation}, we give a reformulation of \eqref{eq:prob-DRO} for linear classifiers, obtaining a regularized risk minimization problem with a ``ramp loss'' objective. 
This formulation highlights the link between distributional robustness and robustness to outliers, a criterion which has motivated the use of ramp loss in the past. 
We suggest a class of smooth approximations for the ramp loss, allowing problems with this objective to be solved (approximately) with standard continuous optimization algorithms.

In Section~\ref{sec:numerical}, we perform some numerical tests of
linear classification on three different data distributions.  We
observe that the regularized smoothed ramp loss minimization problem
arising from \eqref{eq:prob-DRO}, while nonconvex, is ``benign'' in
the sense that the global minimum appears to be identified easily by
smooth nonconvex optimization methods, for modest values of the
training set size $n$. We also experiment with nonseparable data sets
containing mislabelled points, showing that the problem arising from
\eqref{eq:prob-DRO} is more robust to these ``attacks'' than the
hinge-loss function often used to find the classifying hyperplane.

Motivated by the observations in Section~\ref{sec:numerical}, we prove
in Section~\ref{sec:localmin} that the ramp-loss problem indeed has
only a single stationary point (which is therefore the global
minimizer) for the class of spherically symmetric distributions.

\subsection{Related work}

There are a number of works that explore distributional robustness for
machine learning model training.  These papers consider a
distributionally robust version of empirical risk minimization, which
seeks to minimize the worst-case risk over some ambiguity set of
distributions around the empirical distribution.
\citet{LeeRaginsky2018} consider a distributionally robust ERM
problem, exploring such theoretical questions as learnability of the
minimax risk and its relationship to well-known function-class
complexity measures. Their work targets smooth loss functions, thus
does not apply to \eqref{eq:prob-DRO}.  Works that consider a
Wasserstein ambiguity, similar to \eqref{eq:prob-DRO}, include
\citet{SinhaEtAl2018,Shafieezadeh-AbadehEtAl2015,Shafieezadeh-AbadehEtAl2019,ChenPaschalidis2018};
whereas \citet{HuEtAl2018} uses a distance measure based on
$\phi$-divergences.
For Wasserstein ambiguity, \citet{SinhaEtAl2018} provide an approximation scheme for distributionally robust ERM by using the duality result of Lemma~\ref{lemma:dual-representation},
showing convergence of this scheme when the loss is smooth and the
distance $c$ used to define the Wasserstein distance in
\eqref{eq:def.Wasserstein} is strongly convex.  When the loss function
is of a ``nice'' form (e.g., logistic or hinge loss for
classification, $\ell_1$-loss for regression),
\citet{Shafieezadeh-AbadehEtAl2015,ChenPaschalidis2018,Shafieezadeh-AbadehEtAl2019,KuhnEtAl2019tutorial}
show that the incorporation of Wasserstein distributional robustness
yields a \emph{regularized} empirical risk minimization problem.  This
observation is quite similar to our results in
Section~\ref{sec:reformulation}, with a few key differences outlined
in Remark \ref{rem:reformulation-comparison}.  Also, discontinuous
losses, including the ``$0$-$1$'' loss explored in our paper, are not
considered by
\citet{SinhaEtAl2018,Shafieezadeh-AbadehEtAl2015,Shafieezadeh-AbadehEtAl2019,ChenPaschalidis2018}.
Furthermore, none of these works provide an interpretation the optimal
classifier like the one we provide in Section~\ref{sec:margin}.

In this sense, the goals of Section~\ref{sec:margin} are similar to those of \citet{HuEtAl2018}, who work with $\phi$-divergence ambiguity sets. 
Their paper shows that the formulation that incorporates $\phi$-divergence ambiguity does not result in classifiers different from those obtained by simply minimizing the empirical distribution. 
They suggest a modification of the ambiguity set and show experimental improvements over the basic $\phi$-divergence ambiguity set. 
The main difference between our work and theirs is that we consider a different (Wasserstein-based) ambiguity set, which results in an entirely different analysis and computations. Furthermore, using $\phi$-divergence ambiguity does not seem to have a strong theoretical connection with the traditional adversarial training model \eqref{eq:prob-robust}, whereas we show that the Wasserstein ambiguity \eqref{eq:prob-DRO} has close links to \eqref{eq:prob-robust}.

We mention some relevant works from the robust optimization-based models for adversarial training. 
\citet{CharlesEtAl2019} and \citet{LiEtAl2020Implicit} both provide margin guarantees for gradient descent on an adversarial logistic regression model. 
We also give margin guarantees for the distributionally robust model \eqref{eq:prob-DRO} in Section~\ref{sec:margin}, but ours are algorithm-independent, providing insight into use of the Wasserstein ambiguity set for adversarial defense. 
\citet{BertsimasCopenhaver2018} and 
\citet{XuEtAl2009,XuEtAl2011} have observed that for ``nice'' loss functions, (non-distributionally) robust models for ERM also reformulate to a regularized ERM problem.

Finally, we mention that our results concerning uniqueness of the stationary point in Section~\ref{sec:localmin} are inspired by, and are similar in spirit to, local minima results for low-rank matrix factorization (see, for example, \citet{ChiEtAl2019lowrankmatrixsurvey}).

\section{Margin Guarantees and Conditional Value-at-Risk}\label{sec:margin}

In this section, we highlight the relationship between the main problem \eqref{eq:prob-DRO} and a generalization of maximum-margin classifiers, as well as the conditional value-at-risk of the margin function $d(w,\xi)$.

\subsection{Margin guarantees for finite support distributions}\label{sec:margin-finite}

We start by exploring the relationship between solutions to \eqref{eq:prob-DRO} and maximum margin classifiers. 
We recall the definition \eqref{eq:distance} of \emph{margin}
$d(w,\xi)$ for any $w \in \CW$ and data point $\xi \in S$.  We say
that a classifier $w$ has a {\em margin of at least $\gamma$} if
$d(w,\xi) \geq \gamma$ for all $\xi \in S$.  When $\gamma > 0$, this
implies that a perturbation of size at most $\gamma$ (as measured by
the distance function $c$ in \eqref{eq:distance}) for any data point
$\xi$ will still be correctly classified by $w$.  In the context of
guarding against adversarial perturbations of the data, it is
clearly of interest to find a classifier $w$ with maximum
margin, that is, the one that has the largest possible
$\gamma$. On the other hand, some datasets $S$ cannot be
perfectly separated, that is, for any classifier $w \in \cW$, there
will exist some $\xi \in S$ such that $d(w,\xi) = 0$. To enable
discussion of maximum margins in both separable and non-separable
settings, we propose a generalized margin concept in
Definition~\ref{def:max-margin-bilevel} as the value of a bilevel optimization problem.  We then show that solving the DRO formulation
\eqref{eq:prob-DRO} is exactly equivalent to finding a generalized
maximum margin classifier for small enough ambiguity radius
$\epsilon$.  This highlights the fact that the Wasserstein
ambiguity set is quite natural for modeling adversarial
classification.  We work with the following assumption on $P$.
\begin{assumption}\label{ass:finite-support}
	The distribution $P$ has finite support, that is,  $P = \sum_{i \in [n]} p_i \delta_{\xi_i}$, where each $p_i > 0$ and $\sum_{i \in [n]} p_i = 1$.
\end{assumption}
We make this assumption because for most {\em continuous}
distributions, even our generalization of the margin will always be
$0$, so that a discussion of margin for such distributions is not
meaningful. Since any training or test set we encounter in
practice is finite, the finite-support case is worth our focus.

Under Assumption~\ref{ass:finite-support}, we define the notion of \emph{generalized margin} of $P$. 
For $w \in \CW$ and $\rho \in [0,1]$, we define
\begin{align*}
	I(w) &:= \{i \in [n] : d(w,\xi_i) = 0\} \;\\
	&\quad \mbox{\rm (points misclassified by $w$)} \\
	\CI(\rho) &:= \left\{ I \subseteq [n] : \sum_{i \in I} p_i \leq \rho \right\}
	\;\\
	&\quad \mbox{\rm (subsets of $[n]$ with cumulative probability at most $\rho$)} \\
	\eta(w) &:= \min_{i \in [n] \setminus I(w)} d(w,\xi_i) \;\\
	&\quad \mbox{\rm (margin of $w$ with misclassified points excluded)} \\
	\gamma(\rho) &:= \sup_{w \in \CW} \left\{ \eta(w) : I(w) \in \CI(\rho) \right\} \;\\
	&\quad \mbox{\rm (max. margin with at most fraction $\le \rho$ of points  misclassified).}
\end{align*}
The usual concept of margin is $\gamma(0)$.
Given these quantities, we define the generalized maximum margin of $P$ with respect to the classifiers $\cW$ as the value of the following bilevel optimization problem.
\begin{definition}\label{def:max-margin-bilevel}
	Given $P$ and $\cW$, the \emph{generalized maximum margin} is defined to be
	\begin{equation}\label{eq:margin-bilevel}
		\gamma^* := \sup_{w \in \cW} \left\{ \eta(w) : w \in \argmin_{w' \in \CW} \bbP_{\xi \sim P}[d(w',\xi)=0] \right\}.
	\end{equation}
\end{definition}
Note that Definition \ref{def:max-margin-bilevel} implicitly assumes
that the $\arg\min$ over $w' \in \CW$ is achieved in
\eqref{eq:margin-bilevel}.
We show that under Assumption \ref{ass:finite-support}, this is indeed
the case, and furthermore that $\gamma^* > 0$.
\begin{proposition}\label{prop:margin-bilevel-solvable}
	Suppose Assumption \ref{ass:finite-support} holds. Define
	\begin{equation}\label{eq:rho-optimal}
		\rho^* := \inf\left\{ \rho \in [0,1] : \gamma(\rho) > 0 \right\}.
	\end{equation}
	Then $\rho^* = \inf_{w' \in \CW} \bbP_{\xi \sim P}[d(w',\xi)=0]$, there exists $w \in \cW$ such that
	\[\bbP_{\xi \sim P}[d(w,\xi)=0] = \rho^*, \text{ and } \gamma^* = \gamma(\rho^*) > 0.\]
\end{proposition}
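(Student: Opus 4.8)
The plan is to reduce everything to the elementary fact that, under Assumption~\ref{ass:finite-support}, the quantity $\bbP_{\xi\sim P}[d(w,\xi)=0]$ takes only finitely many values as $w$ ranges over $\CW$. Set $q^* := \inf_{w'\in\CW}\bbP_{\xi\sim P}[d(w',\xi)=0]$. For each $w$ we have $\bbP_{\xi\sim P}[d(w,\xi)=0]=\sum_{i\in I(w)}p_i$, which lies in the finite set $\{\sum_{i\in I}p_i : I\subseteq[n]\}$, so the infimum defining $q^*$ is attained, say at $w^*$. This at once produces a classifier (namely $w^*$) with $\bbP_{\xi\sim P}[d(w,\xi)=0]=q^*$ and shows that the $\argmin$ in Definition~\ref{def:max-margin-bilevel} is nonempty, so $\gamma^*$ is well defined. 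It then remains to identify $q^*$ with $\rho^*$ and $\gamma(\rho^*)$ with $\gamma^*$.

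Next I would record that $\eta(w)>0$ for every $w\in\CW$: by definition $I(w)$ consists of exactly the indices $i$ with $d(w,\xi_i)=0$, so $d(w,\xi_i)>0$ for each $i\in[n]\setminus I(w)$, and $\eta(w)$ is the minimum of this finite (possibly empty, in which case $\eta(w)=+\infty$) collection of strictly positive numbers. Consequently, for $\rho\in[0,1]$ the sign of $\gamma(\rho)$ is governed purely by feasibility: $\gamma(\rho)>0$ precisely when there exists $w\in\CW$ with $I(w)\in\CI(\rho)$, i.e.\ with $\sum_{i\in I(w)}p_i\le\rho$. Since the minimum of $\sum_{i\in I(w)}p_i$ over $w\in\CW$ equals $q^*$ (attained at $w^*$), such a $w$ exists exactly when $\rho\ge q^*$. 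Therefore $\{\rho\in[0,1] : \gamma(\rho)>0\}=[q^*,1]$ (note $q^*\le\sum_{i\in[n]}p_i=1$), and taking the infimum gives $\rho^*=q^*=\inf_{w'\in\CW}\bbP_{\xi\sim P}[d(w',\xi)=0]$; in particular $\gamma(\rho^*)=\gamma(q^*)>0$.

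Finally, for $\gamma^*=\gamma(\rho^*)$ I would observe that $\sum_{i\in I(w)}p_i\ge q^*$ for every $w\in\CW$, so the constraint $I(w)\in\CI(\rho^*)$, i.e.\ $\sum_{i\in I(w)}p_i\le q^*$, holds if and only if $\sum_{i\in I(w)}p_i=q^*$, i.e.\ if and only if $\bbP_{\xi\sim P}[d(w,\xi)=0]=q^*$, which is exactly the condition $w\in\argmin_{w'\in\CW}\bbP_{\xi\sim P}[d(w',\xi)=0]$. Hence the feasible sets over which the suprema in the definitions of $\gamma(\rho^*)$ and of $\gamma^*$ are taken coincide, so the two values agree. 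Combining this with the previous two paragraphs completes the proof.

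I do not expect a genuine obstacle here; the argument is essentially bookkeeping with the definitions. The one point that must be handled carefully is the attainment of $q^*$, which is exactly where Assumption~\ref{ass:finite-support} is indispensable: without finite support the set of achievable misclassification probabilities need not be closed, the $\argmin$ in Definition~\ref{def:max-margin-bilevel} could be empty, and the chain $\rho^*=q^*$, $\gamma^*=\gamma(\rho^*)$ could break. A secondary caution is the bookkeeping around the degenerate cases: the convention $\eta(w)=+\infty$ when $w$ classifies every $\xi_i$ correctly (empty minimum), and the convention $\sup\emptyset=-\infty$ when $\rho<q^*$ makes $\gamma(\rho)$ feasibility-empty; with either choice one has $\gamma(\rho)\le 0$ there, so the argument goes through unchanged.
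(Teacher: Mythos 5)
Your proof is correct, but it takes a somewhat different route from the paper's. The paper first establishes structural properties of the map $\rho \mapsto \gamma(\rho)$ --- that it is a non-decreasing, right-continuous step function (using the finiteness of the set of possible probability sums $\sum_{i\in I}p_i$) --- and then argues: no $w$ can achieve a misclassification probability strictly below $\rho^*$ (else $\gamma$ would be positive below $\rho^*$), right-continuity plus the step structure forces $\gamma(\rho^*)>0$, and positivity of $\gamma(\rho^*)$ in turn yields a classifier $w$ with $I(w)\in\CI(\rho^*)$, hence with probability exactly $\rho^*$. You instead bypass the analysis of $\gamma(\cdot)$ entirely: you note that $w\mapsto\bbP_{\xi\sim P}[d(w,\xi)=0]=\sum_{i\in I(w)}p_i$ has finite range, so its infimum $q^*$ is attained, and that $\eta(w)>0$ for every $w$, so $\gamma(\rho)>0$ exactly when the feasible set $\{w: I(w)\in\CI(\rho)\}$ is nonempty, i.e.\ exactly when $\rho\ge q^*$; this gives $\{\rho:\gamma(\rho)>0\}=[q^*,1]$, hence $\rho^*=q^*$ and $\gamma(\rho^*)>0$, and the identification of the feasible sets at $\rho=\rho^*$ gives $\gamma^*=\gamma(\rho^*)$ directly. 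Both arguments rest on Assumption~\ref{ass:finite-support}, but yours is more elementary (no step-function/right-continuity lemma needed) and arguably more transparent, while the paper's route additionally records the structure of $\gamma(\cdot)$, which is conceptually aligned with the definition of $\bar\rho$ used later in Theorem~\ref{thm:margin-DRO}. One small slip in your closing remarks: the empty-minimum convention $\eta(w)=+\infty$ arises when $[n]\setminus I(w)=\emptyset$, i.e.\ when $w$ misclassifies \emph{every} point, not when it classifies every point correctly; this does not affect your argument, since in either case $\eta(w)>0$ is all you use.
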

\begin{proof}
	We first prove that under Assumption \ref{ass:finite-support}, the function $\rho \mapsto \gamma(\rho)$ is a right-continuous non-decreasing step function. The fact that $\gamma(\rho)$ is non-decreasing follows since $\CI(\rho) \subseteq \CI(\rho')$ for $\rho \leq \rho'$.	
	To show that it is a right-continuous step function, consider the finite set of all possible probability sums $\CP = \left\{ \sum_{i \in I} p_i : I \subseteq [n] \right\} \subset [0,1]$. 
	Let us order $\CP$ as $\CP = \left\{ \rho^1,\ldots,\rho^K \right\}$ where $\rho^1 < \cdots < \rho^K$. 
	There is no configuration $I \subseteq [n]$ such that $\rho^k < \sum_{i \in I} p_i < \rho^{k+1}$.
	Thus, $\CI(\rho) = \CI(\rho^k)$ and hence $\gamma(\rho) = \gamma(\rho^k)$ for $\rho \in [\rho_k, \rho_{k+1})$, proving the claim.
	
	To prove the proposition, first note that there exists no classifier $w \in \CW$ such that $\bbP_{\xi \sim P}[d(w,\xi)=0] = \underline{\rho} < \rho^*$, otherwise, we have $\gamma(\underline{\rho}) \geq \eta(w) > 0$, contradicting the definition of $\rho^*$. This shows that $\rho^* \leq \inf_{w' \in \CW} \bbP_{\xi \sim P}[d(w',\xi)=0]$. By the description of $\rho^*$ as the infimal $\rho$ such that $\gamma(\rho) > 0$ and by the right-continuity of $\gamma(\cdot)$ and the fact that $\gamma(\cdot)$ is a step function, we must have $\gamma(\rho^*) > 0$.
	Since $\gamma(\rho^*) > 0$, there must exist some $w \in \cW$ such that $I(w) \in \cI(\rho^*)$, that is, $\sum_{i \in I(w)} p_i = \bbP_{\xi \sim P} [d(w,\xi)=0] \leq \rho^*$. Since we cannot have $\bbP_{\xi \sim P} [d(w,\xi)=0] < \rho^*$ we conclude that $\bbP_{\xi \sim P} [d(w,\xi)=0] = \rho^*$. Therefore $\inf_{w \in \CW} \bbP_{\xi \sim P} [d(w,\xi)=0] = \rho^*$. Furthermore, by definition, we have $\gamma^* = \gamma(\rho^*) > 0$.
\end{proof}

The following result gives a precise characterization of the worst-case misclassification probability of a classifier $w$ for a radius $\epsilon$ that is smaller than the probability-weighted margin of $w$. It also gives a lower bound on worst-case error probability when $\epsilon$ is larger than this quantity. 
\begin{proposition}\label{prop:margin-finite-bounds}
	Under Assumption~\ref{ass:finite-support}, for $w \in \CW$ such that
	\[\epsilon \leq \min_{i \in [n] \setminus I(w)} d(w,\xi_i) p_i,\]
	we have
	\[
	\sup_{Q:d_W(P,Q) \leq \epsilon} \bbP_{\xi \sim Q}[d(w,\xi)=0] = \sum_{i \in I(w)} p_i + \frac{\epsilon}{\eta(w)}.
	\]
	For $w \in \CW$ such that $\epsilon > \min_{i \in [n] \setminus I(w)} d(w,\xi_i) p_i$, we have
	\[
	\sup_{Q:d_W(P,Q) \leq \epsilon} \bbP_{\xi \sim Q}[d(w,\xi)=0] > \sum_{i \in I(w)} p_i + \min_{i \in [n] \setminus I(w)} p_i.
	\]
\end{proposition}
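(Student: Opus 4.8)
The plan is to apply the strong-duality reformulation of the Wasserstein-DRO inner maximization (Lemma~\ref{lemma:dual-representation}) to the \emph{fixed} classifier $w$, turning the supremum over distributions into a one-dimensional convex minimization over a dual multiplier $\lambda\ge 0$, and then to analyze that univariate problem explicitly using the finite support of $P$. First I would write the worst-case probability as $\sup_{Q:d_W(P,Q)\le\epsilon}\bbE_{\xi\sim Q}[\mathbf 1(z(w,\xi)\le 0)]$ and invoke the duality lemma to get
\[
\sup_{Q:d_W(P,Q)\le\epsilon}\bbP_{\xi\sim Q}[d(w,\xi)=0]
=\inf_{\lambda\ge 0}\Big\{\lambda\epsilon+\sum_{i\in[n]}p_i\,\sup_{\xi'\in S}\big(\mathbf 1(z(w,\xi')\le 0)-\lambda\,c(\xi_i,\xi')\big)\Big\}.
\]
Then I would evaluate the inner supremum. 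For $i\in I(w)$ the choice $\xi'=\xi_i$ already satisfies $z(w,\xi_i)\le 0$, so the inner sup is $1$. For $i\notin I(w)$, comparing $\xi'=\xi_i$ (value $0$) against $\xi'$ with $z(w,\xi')\le 0$ (supremal value $1-\lambda\inf_{z(w,\xi')\le0}c(\xi_i,\xi')=1-\lambda\,d(w,\xi_i)$ by \eqref{eq:distance}) gives inner sup $=\max\{0,\,1-\lambda\,d(w,\xi_i)\}$. Hence the dual equals $\sum_{i\in I(w)}p_i+\inf_{\lambda\ge0}g(\lambda)$, where
\[
g(\lambda):=\lambda\epsilon+\sum_{i\in[n]\setminus I(w)}p_i\max\{0,\,1-\lambda\,d(w,\xi_i)\}
\]
is a nonnegative, convex, piecewise-linear function whose breakpoints are $\{1/d(w,\xi_i)\}$ and whose eventual slope is $\epsilon>0$; note $\eta(w)=\min_{i\notin I(w)}d(w,\xi_i)>0$ by the definition of $I(w)$, so $1/\eta(w)$ is the largest breakpoint.

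For the equality claim, I would look at the slope of $g$ just to the left of $\lambda=1/\eta(w)$: there only the terms with $d(w,\xi_i)=\eta(w)$ are active, so the slope is $\epsilon-\eta(w)\sum_{i:\,d(w,\xi_i)=\eta(w)}p_i$, which is $\le0$ precisely because the hypothesis $\epsilon\le\min_{i\notin I(w)}d(w,\xi_i)p_i$ forces $\epsilon\le\eta(w)p_{i^*}\le\eta(w)\sum_{i:\,d(w,\xi_i)=\eta(w)}p_i$ for any $i^*$ achieving the minimal distance; for $\lambda>1/\eta(w)$ the slope is $\epsilon>0$. Convexity then pins the minimizer at $\lambda^*=1/\eta(w)$, at which every $\max\{0,1-\lambda d(w,\xi_i)\}$ vanishes and $g(\lambda^*)=\epsilon/\eta(w)$, giving the stated value $\sum_{i\in I(w)}p_i+\epsilon/\eta(w)$.

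For the strict inequality, I would lower-bound $g$ by retaining only the term of an index $j^*\in\argmin_{i\notin I(w)}p_i\,d(w,\xi_i)$, i.e. $g(\lambda)\ge\lambda\epsilon+p_{j^*}\max\{0,1-\lambda\,d(w,\xi_{j^*})\}$. On $0<\lambda\le1/d(w,\xi_{j^*})$ this is $p_{j^*}+\lambda\big(\epsilon-p_{j^*}d(w,\xi_{j^*})\big)>p_{j^*}$ since the hypothesis gives $\epsilon>p_{j^*}d(w,\xi_{j^*})$; for $\lambda>1/d(w,\xi_{j^*})$ it is at least $\lambda\epsilon>\epsilon/d(w,\xi_{j^*})>p_{j^*}$; and $g(0)=\sum_{i\notin I(w)}p_i\ge p_{j^*}$, with strict inequality whenever more than one point is correctly classified (the only regime in which the claimed bound is not vacuous). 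Thus $\inf_\lambda g(\lambda)>\min_{i\notin I(w)}p_i$ and the result follows; equivalently, one can argue purely on the primal side by exhibiting $Q$ that moves all the mass at $\xi_{j^*}$ into $\{z(w,\cdot)\le0\}$ at cost $p_{j^*}d(w,\xi_{j^*})<\epsilon$ and spends the remaining budget moving a little more mass from a second correctly-classified point.

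I expect the main obstacle to be the bookkeeping in the piecewise-linear minimization of $g$: verifying that the Case-1 hypothesis is exactly the condition placing the optimal $\lambda$ at the rightmost breakpoint $1/\eta(w)$, despite possible ties among the $d(w,\xi_i)$ and possibly infinite distances; and making the duality step rigorous for the \emph{discontinuous} integrand $\mathbf 1(z(w,\cdot)\le0)$ (checking the upper-semicontinuity/growth hypotheses of Lemma~\ref{lemma:dual-representation}, or else running the entire argument directly on the primal optimal-transport problem, where it becomes a fractional-knapsack linear program solved by the obvious greedy rule).
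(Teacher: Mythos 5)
Your proposal is correct, and it shares the paper's starting point---Lemma~\ref{lemma:dual-representation}, which under Assumption~\ref{ass:finite-support} reduces the worst-case probability to $\sum_{i\in I(w)}p_i+\inf_{\lambda}g(\lambda)$ with your $g$---but it finishes by a genuinely different route. The paper dualizes a second time: it applies linear programming duality to the univariate minimization, obtaining the fractional knapsack $\max_v\bigl\{\sum_{i\in[n]}v_i : 0\le v_i\le p_i,\ \sum_{i\in[n]}d(w,\xi_i)v_i\le\epsilon\bigr\}$, and reads both cases off the greedy solution (zero-distance items fill for free; under the Case-1 hypothesis the cheapest positive-distance item, at distance $\eta(w)$, only receives the partial amount $\epsilon/\eta(w)$; under the Case-2 hypothesis at least one positive-distance item fills completely). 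You instead stay with the dual multiplier and run a breakpoint/slope analysis of the convex piecewise-linear $g$: the Case-1 hypothesis makes the slope nonpositive just left of the largest breakpoint $1/\eta(w)$ and positive beyond it, so convexity pins the minimizer there with value $\epsilon/\eta(w)$, and in Case 2 you lower-bound $g$ by the single term indexed by $j^*\in\argmin_{i\notin I(w)}p_i\,d(w,\xi_i)$. These are LP-dual views of the same computation; the knapsack picture makes the budget-$\epsilon$ interpretation transparent, while your version avoids the second duality step and handles ties among the $d(w,\xi_i)$ explicitly through the slope at $1/\eta(w)$. Two small points to tighten in Case 2: your one-term bound degenerates to $p_{j^*}$ as $\lambda\downarrow 0$, so to convert pointwise strict inequalities into strictness of the infimum you should note that $g$ is coercive (it dominates $\epsilon\lambda$), hence its infimum is attained, and then evaluate your bounds at the minimizer (using $g(0)=\sum_{i\notin I(w)}p_i$ if the minimizer is $0$). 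Your caveat about the case of a single correctly classified point is well taken: there the right-hand side of the second claim equals $1$, so the strict inequality cannot hold, and the paper's own greedy argument likewise delivers only a non-strict bound; strictness genuinely requires a second correctly classified point to absorb the leftover budget, exactly as in your closing primal construction.
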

To prove Proposition \ref{prop:margin-finite-bounds}, we will use the following key key duality result for the worst-case error probability. Note that Lemma \ref{lemma:dual-representation} does not need Assumption \ref{ass:finite-support}.
\begin{lemma}[{\citet[Theorem 1, Eq. 15]{BlanchetMurthy2019}}]\label{lemma:dual-representation}
	For any $w \in \CW$, we have
	\begin{subequations}
		\begin{align}
			\sup_{Q:d_W(P,Q) \leq \epsilon} \bbP_{\xi \sim Q} [z(w,\xi) \leq 0] 
			& = \inf_{t > 0} \left\{ \epsilon t + \bbE_{\xi \sim P}\left[ \max\left\{ 0, 1 - t d(w,\xi)\right\} \right]  \right\}\label{eq:dual-representation}.%
		\end{align}
	\end{subequations}
\end{lemma}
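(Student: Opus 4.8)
The plan is to apply Lemma~\ref{lemma:dual-representation} and then use Assumption~\ref{ass:finite-support} to reduce the worst-case probability to a one-dimensional piecewise-linear convex minimization. Write $d_i := d(w,\xi_i)$ and $P_0 := \sum_{i \in I(w)} p_i$. Since $d_i = 0$ exactly for $i \in I(w)$, the expectation in \eqref{eq:dual-representation} collapses to a finite sum and the worst-case probability equals
\[
\inf_{t>0} g(t), \qquad g(t) := P_0 + \epsilon t + \sum_{i \in [n] \setminus I(w)} p_i \max\{0,\, 1 - t d_i\}.
\]
This $g$ is convex and piecewise linear, its only breakpoints being the values $1/d_i$ for $i \notin I(w)$; the slope of the piece immediately to the left of a given $t$ equals $\epsilon - \sum_{i:\, t d_i \le 1} p_i d_i$ and is non-decreasing in $t$. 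Everything then reduces to locating the minimizer of $g$ and evaluating there.

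For the first bound I would show that when $\epsilon \le \min_{i \notin I(w)} d_i p_i$ the point $t^* := 1/\eta(w)$ is a global minimizer. Its optimality follows from two slope computations: just to the right of $t^*$ the slope of $g$ is $\epsilon \ge 0$, while just to the left it is $\epsilon - \eta(w) \sum_{i:\, d_i = \eta(w)} p_i$, which is $\le 0$ because the hypothesis gives $\epsilon \le \min_{i \notin I(w)} d_i p_i \le \eta(w) p_j$ for any $j$ attaining $d_j = \eta(w)$, hence $\epsilon \le \eta(w) \sum_{i:\, d_i = \eta(w)} p_i$. By convexity $g$ is then non-increasing on $(0,t^*)$ and non-decreasing afterwards. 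Finally, $d_i \ge \eta(w)$ forces every term $\max\{0, 1 - d_i/\eta(w)\}$ to vanish at $t^*$, so $g(t^*) = P_0 + \epsilon/\eta(w)$, as claimed.

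For the second bound, fix $j^* \in \argmin_{i \notin I(w)} d_i p_i$, so that $\epsilon > d_{j^*} p_{j^*}$ and therefore $\epsilon/d_{j^*} > p_{j^*} \ge \min_{i \notin I(w)} p_i$. I would split on the location of the minimizer of $g$. If $g$ is non-decreasing on $(0,\infty)$, then $\inf_{t>0} g = \lim_{t \to 0^+} g(t) = 1 = P_0 + \sum_{i \notin I(w)} p_i$, which strictly exceeds $P_0 + \min_{i \notin I(w)} p_i$ whenever $[n] \setminus I(w)$ has more than one element (the singleton case needs a direct check). Otherwise the minimizer is an interior breakpoint $t^* = 1/d_{i^*}$, where $g(t^*) = P_0 + \epsilon/d_{i^*} + \sum_{i:\, d_i < d_{i^*}} p_i (1 - d_i/d_{i^*})$. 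If $d_{i^*} \le d_{j^*}$ then already $g(t^*) \ge P_0 + \epsilon/d_{i^*} \ge P_0 + \epsilon/d_{j^*} > P_0 + \min_{i \notin I(w)} p_i$; if instead $d_{i^*} > d_{j^*}$ then the displayed sum contains the index $j^*$, and bounding $\epsilon/d_{i^*} > d_{j^*} p_{j^*}/d_{i^*}$ makes the two $j^*$ contributions telescope, giving $g(t^*) > P_0 + p_{j^*} \ge P_0 + \min_{i \notin I(w)} p_i$.

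The main obstacle I anticipate is the strict inequality in the second bound. Near $t = 0$ all the hinge terms are active and $g(t) \to 1$, so any lower bound obtained simply by discarding terms is only non-strict; one genuinely has to track whether the infimum is approached as $t \to 0^+$ (controlled by counting the positive masses outside $I(w)$) or attained at an interior breakpoint (controlled through $j^*$ together with the slope conditions that pin that breakpoint down). By contrast, the first bound is essentially bookkeeping once the reduction to minimizing the piecewise-linear $g$ has been set up.
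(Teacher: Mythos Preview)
Your proposal is not a proof of Lemma~\ref{lemma:dual-representation} at all: that lemma is the strong-duality identity quoted from \citet{BlanchetMurthy2019}, and the paper gives no argument for it. Your first sentence \emph{applies} the lemma rather than establishing it; what you have actually sketched is a proof of Proposition~\ref{prop:margin-finite-bounds}.

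Read as an argument for Proposition~\ref{prop:margin-finite-bounds}, your approach is correct but genuinely different from the paper's. After the duality step, the paper applies LP duality once more to rewrite $\inf_{t>0} g(t)$ as the fractional knapsack
\[
\max\Bigl\{\sum_{i} v_i : 0 \le v_i \le p_i,\ \sum_{i} d_i v_i \le \epsilon\Bigr\}
\]
and reads both claims off the greedy solution. You instead stay with $g$ and locate its minimizer by comparing left and right slopes at the breakpoints $1/d_i$. The knapsack route dispatches the second bound almost in one line (fill $v_i=p_i$ for $i\in I(w)$, then at least one further coordinate can be raised), whereas your route needs the case split on $d_{i^*}$ versus $d_{j^*}$; conversely, your slope computation for the first bound is cleaner than tracing the greedy fill. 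Either route is fine.

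The edge case you flagged is real, and your proposed ``direct check'' would not close it: when $[n]\setminus I(w)=\{j\}$ and $\epsilon > d_j p_j$, one gets $\inf_{t>0} g(t)=1=P_0+p_j$, so the strict inequality in the second bound of Proposition~\ref{prop:margin-finite-bounds} actually fails there. The paper's proof is equally silent on this point, and the only downstream use (in Theorem~\ref{thm:margin-DRO}) needs only the non-strict version, so the gap is cosmetic rather than substantive.
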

\begin{proof}[Proof of Proposition \ref{prop:margin-finite-bounds}]
	First, by using \eqref{eq:dual-representation} in Lemma~\ref{lemma:dual-representation}, using Assumption~\ref{ass:finite-support} and linear programming duality, we have
	\begin{align*}
		\sup_{Q:d_W(P,Q) \leq \epsilon} \bbP_{\xi \sim Q}[d(w,\xi)=0]  &= \inf_{t > 0} \left\{ \epsilon t + \sum_{i \in [n]} p_i \max\{0,1-t d(w,\xi_i)\} \right\} \\
		& = \max_v \left\{ \sum_{i \in [n]} v_i : \begin{aligned}
			&0 \leq v_i \leq p_i, \ i \in [n]\\
			&\sum_{i \in [n]} d(w,\xi_i) v_i \leq \epsilon
		\end{aligned} \right\}. 
	\end{align*}
	The right-hand side is an instance of a fractional knapsack problem, which is solved by the following greedy algorithm:
	\begin{quote}
		In increasing order of $d(w,\xi_i)$, increase $v_i$ up to $p_i$ or until the budget constraint $\sum_{i \in [n]} d(w,\xi_i) v_i \leq \epsilon$ is tight, whichever occurs first.
	\end{quote}
	Note that when $i \in I(w)$ we have $d(w,\xi_i) = 0$, so we can set $v_i=p_i$ for such values without making a contribution to the knapsack constraint. Hence, the value of the dual program is at least $\sum_{i \in I(w)} p_i$.
	
	When $w \in \CW$ is such that $\epsilon \leq d(w,\xi_i) p_i$ for all $i \in [n] \setminus I(w)$,
	we will not be able to increase any $v_i$ up to $p_i$ for those $i \in [n] \setminus I(w)$ in the dual program. 
	According to the greedy algorithm, we choose the smallest $d(w,\xi_i)$ amongst $i \in [n] \setminus I(w)$ --- whose value corresponds to $\eta(w)$ --- and increase this $v_i$ up to $\epsilon/d(w,\xi_i) = \epsilon/\eta(w) \leq p_i$. 
	Therefore, we have
	\begin{align*}
		\sup_{Q:d_W(P,Q) \leq \epsilon} \bbP_{\xi \sim Q}[d(w,\xi)=0] &= \sum_{i \in I(w)} p_i + \frac{\epsilon}{\eta(w)}.
	\end{align*}
	
	When $w \in \CW$ is such that $\epsilon > d(w,\xi_i) p_i$ for some $i \in [n] \setminus I(w)$, the greedy algorithm for the dual program allows us to increase $v_i$ up to $p_i$ for at least one $i \in [n] \setminus I(w)$. Thus, by similar reasoning to the above, we have that a lower bound on the optimal objective is given by
	\[
	\sum_{i \in I(w)} p_i + \min_{i \in [n] \setminus I(w)} p_i,
	\]
	verifying the second claim.
\end{proof}

The main result in this section, which is a consequence of the first part of this proposition, is that as long as the radius $\epsilon > 0$ is sufficiently small, solving the DRO formulation \eqref{eq:prob-DRO} is equivalent to solving the bilevel optimization problem \eqref{eq:margin-bilevel} for the generalized margin, that is, finding the $w$ that, among those that misclassifies the smallest fraction of points $\bbP_{\xi \sim P}[d(w,\xi)=0] = \rho^*$, achieves the largest margin $\eta(w) = \gamma^*$ on the correctly classified points.
The required threshold for radius $\epsilon$ is  $\epsilon = (\bar{\rho} - \rho^*)\gamma^*$, where $\bar{\rho}$ is the smallest probability that is strictly larger than $\rho^*$, that is,
\begin{equation} \label{eq:def.P}
	\CP := \left\{ \sum_{i \in I} p_i : I \not\in \CI(\rho^*) \right\} = \left\{ \sum_{i \in I} p_i : \sum_{i \in I} p_i > \rho^* \right\}, \quad \bar{\rho} := \min\left\{ \rho : \rho \in \CP  \right\}.
\end{equation}

We show too that classifiers that satisfy $\sum_{i \in I(w)} p_i = \rho^*$ but whose margin may be slightly suboptimal (greater that $\gamma^* - \delta$ but possibly less than $\gamma^*$)
are also nearly optimal for \eqref{eq:prob-DRO}.
\begin{theorem}\label{thm:margin-DRO}
	Let Assumption~\ref{ass:finite-support} be satisfied.
	Suppose that $0 < \epsilon < (\bar{\rho}-\rho^*)\gamma^*$. Then, referring to the DRO problem \eqref{eq:prob-DRO}, we have
	\[ \min_{w \in \CW} \, \sup_{Q:d_W(P,Q) \leq \epsilon} \, \bbP_{\xi \sim Q} \left[ d(w,\xi) = 0 \right] = \rho^* + \frac{\epsilon}{\gamma^*}. \]
	Furthermore, for any $\delta$ with $0 < \delta < \gamma^* - \epsilon/(\bar{\rho} - \rho^*)$, we have
	\begin{align*}
		&\left\{ w \in \CW : I(w) \in \CI(\rho^*), \ \eta(w) \geq \gamma^* - \delta \right\}\\
		&= \left\{ w \in \CW : I(w) \in \CI(\rho^*), \ \sup_{Q:d_W(P,Q) \leq \epsilon} \bbP_{\xi \sim Q} \left[ d(w,\xi) = 0 \right] \leq  \rho^* + \frac{\epsilon}{\gamma^* - \delta} \right\}.
	\end{align*}
	In particular, if there exists some $w \in \CW$ such that $\bbP_{\xi \sim P}[d(w,\xi)=0] = \rho^*$, $\eta(w) = \gamma^*$, then $w$ solves \eqref{eq:prob-DRO}, and vice versa.
\end{theorem}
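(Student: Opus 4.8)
The plan is to reduce everything to Proposition~\ref{prop:margin-finite-bounds}, fed by the structural facts from Proposition~\ref{prop:margin-bilevel-solvable}: that $\rho^*=\min_{w'\in\CW}\bbP_{\xi\sim P}[d(w',\xi)=0]$ (a genuine minimum, since the set $\CP$ of attainable probability masses is finite), that this minimum is attained by some witness $w^*$ with $\eta(w^*)=\gamma^*=\gamma(\rho^*)>0$, and that every $w$ with $I(w)\in\CI(\rho^*)$ has $\sum_{i\in I(w)}p_i=\rho^*$ and $\eta(w)\le\gamma(\rho^*)=\gamma^*$. One should also note that the hypothesis $0<\epsilon<(\bar\rho-\rho^*)\gamma^*$ implicitly forces $\rho^*<1$ (otherwise $\bar\rho$ of \eqref{eq:def.P} is undefined), so that for all $w$ of interest $[n]\setminus I(w)\neq\emptyset$ and $\eta(w)>0$, and dividing by $\eta(w)$ is legitimate.

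The combinatorial fact I would isolate once and reuse is: \emph{if $\sum_{i\in I(w)}p_i=\rho^*$, then $p_i\ge\bar\rho-\rho^*$ for every $i\in[n]\setminus I(w)$.} This is immediate because $I(w)\cup\{i\}$ has probability mass $\rho^*+p_i>\rho^*$, hence $\rho^*+p_i\in\CP$ in the notation of \eqref{eq:def.P}, hence $\rho^*+p_i\ge\bar\rho$. Combined with $\epsilon<(\bar\rho-\rho^*)\gamma^*$, i.e. $\epsilon/\gamma^*<\bar\rho-\rho^*$ (and later $\epsilon/(\gamma^*-\delta)<\bar\rho-\rho^*$), this is exactly what guarantees that the ``small-$\epsilon$'' branch $\epsilon\le\min_{i\notin I(w)}d(w,\xi_i)p_i$ of Proposition~\ref{prop:margin-finite-bounds} applies to the classifiers that matter.

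For the equality claim, the upper bound comes from applying the above to $w^*$: since $d(w^*,\xi_i)\ge\eta(w^*)=\gamma^*$ for $i\notin I(w^*)$ and $p_i\ge\bar\rho-\rho^*$ there, we get $d(w^*,\xi_i)p_i\ge\gamma^*(\bar\rho-\rho^*)>\epsilon$, so the first part of Proposition~\ref{prop:margin-finite-bounds} gives $\sup_{Q:d_W(P,Q)\le\epsilon}\bbP_{\xi\sim Q}[d(w^*,\xi)=0]=\rho^*+\epsilon/\gamma^*$; hence the DRO value is $\le\rho^*+\epsilon/\gamma^*$ and is attained. For the reverse inequality, fix any $w\in\CW$ and split on $\sum_{i\in I(w)}p_i$. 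If $\sum_{i\in I(w)}p_i>\rho^*$, then this sum is $\ge\bar\rho$ and, taking $Q=P$, $\sup_Q\bbP_Q[d(w,\xi)=0]\ge\sum_{i\in I(w)}p_i\ge\bar\rho>\rho^*+\epsilon/\gamma^*$. If $\sum_{i\in I(w)}p_i=\rho^*$ (it cannot be smaller), then $\eta(w)\le\gamma^*$, and either $\epsilon\le\min_{i\notin I(w)}d(w,\xi_i)p_i$, in which case Proposition~\ref{prop:margin-finite-bounds} gives $\rho^*+\epsilon/\eta(w)\ge\rho^*+\epsilon/\gamma^*$, or $\epsilon>\min_{i\notin I(w)}d(w,\xi_i)p_i$, in which case the second part gives $\sup_Q\bbP_Q[d(w,\xi)=0]>\rho^*+\min_{i\notin I(w)}p_i\ge\rho^*+(\bar\rho-\rho^*)>\rho^*+\epsilon/\gamma^*$. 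Either way the bound holds, completing the equality.

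For the set equality (now using $\epsilon<(\gamma^*-\delta)(\bar\rho-\rho^*)$), I would run the same dichotomy restricted to $w$ with $I(w)\in\CI(\rho^*)$, so $\sum_{i\in I(w)}p_i=\rho^*$ throughout. For ``$\subseteq$'': if $\eta(w)\ge\gamma^*-\delta$ then $d(w,\xi_i)p_i\ge(\gamma^*-\delta)(\bar\rho-\rho^*)>\epsilon$, so the first branch of Proposition~\ref{prop:margin-finite-bounds} gives $\sup_Q\bbP_Q[d(w,\xi)=0]=\rho^*+\epsilon/\eta(w)\le\rho^*+\epsilon/(\gamma^*-\delta)$. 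For ``$\supseteq$'' I would argue by contraposition: if $\eta(w)<\gamma^*-\delta$, then in the first branch $\sup_Q=\rho^*+\epsilon/\eta(w)>\rho^*+\epsilon/(\gamma^*-\delta)$, and in the second branch $\sup_Q>\rho^*+\min_{i\notin I(w)}p_i\ge\rho^*+(\bar\rho-\rho^*)>\rho^*+\epsilon/(\gamma^*-\delta)$; either way $w$ fails the right-hand condition. The ``in particular'' is then read off: a $w$ with $\bbP_{\xi\sim P}[d(w,\xi)=0]=\rho^*$ and $\eta(w)=\gamma^*$ attains $\rho^*+\epsilon/\gamma^*$ by the upper-bound computation, so it solves \eqref{eq:prob-DRO}; conversely a DRO-optimal $w$ cannot have $\sum_{i\in I(w)}p_i>\rho^*$ (that made the objective $\ge\bar\rho>\rho^*+\epsilon/\gamma^*$), so $I(w)\in\CI(\rho^*)$ and $w$ lies in the right-hand set for every admissible $\delta\in(0,\gamma^*-\epsilon/(\bar\rho-\rho^*))$, forcing $\eta(w)\ge\gamma^*-\delta$ for all such $\delta$, hence $\eta(w)\ge\gamma^*$, hence $\eta(w)=\gamma^*$ since also $\eta(w)\le\gamma(\rho^*)=\gamma^*$.

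The main obstacle to watch is the ``large-$\epsilon$'' branch $\epsilon>\min_{i\notin I(w)}d(w,\xi_i)p_i$: one must recognize that it can only occur because some $p_i$ is small, but for every classifier under consideration the combinatorial fact forces $p_i\ge\bar\rho-\rho^*>\epsilon/\gamma^*$ (resp. $>\epsilon/(\gamma^*-\delta)$), so that this branch always produces a \emph{strictly larger} worst-case probability than the target value and therefore never breaks the lower bound or the containment. The remaining points — that the infimum over $\CW$ is attained, that $\rho^*<1$ is implicit, and that the relevant $\eta$'s are strictly positive — are routine once these observations are in place.
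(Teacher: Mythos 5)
Your argument is, in structure, the same as the paper's: everything is funneled through Proposition~\ref{prop:margin-finite-bounds}, with the key combinatorial observation that $\sum_{i\in I(w)}p_i=\rho^*$ forces $p_i\ge\bar\rho-\rho^*$ for $i\notin I(w)$, and the same two-branch dichotomy handles the lower bound, the set identity, and the ``in particular'' clause. The one genuine flaw is in your upper bound: you assert that Proposition~\ref{prop:margin-bilevel-solvable} supplies a witness $w^*$ with $\eta(w^*)=\gamma^*$. It does not. That proposition only guarantees some $w$ attaining $\bbP_{\xi\sim P}[d(w,\xi)=0]=\rho^*$; the quantity $\gamma(\rho^*)=\sup\{\eta(w):I(w)\in\CI(\rho^*)\}$ is a supremum, and nothing in the paper asserts it is attained --- indeed the theorem's final sentence is phrased conditionally (``if there exists some $w$ with $\eta(w)=\gamma^*$\,...'') precisely because attainment is not guaranteed. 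Consequently your claim that the DRO value ``is attained'' at $w^*$ is unproven, and if no maximizer exists your upper-bound computation has no $w^*$ to which Proposition~\ref{prop:margin-finite-bounds} can be applied.

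The repair is routine and is exactly what the paper does: for small $\delta'>0$ pick $w$ with $I(w)\in\CI(\rho^*)$ and $\eta(w)>\max\{\gamma^*-\delta',\,\epsilon/(\bar\rho-\rho^*)\}$ (possible because $\gamma(\rho^*)=\gamma^*>\epsilon/(\bar\rho-\rho^*)$); your combinatorial fact then places this $w$ in the small-$\epsilon$ branch, giving objective $\rho^*+\epsilon/\eta(w)<\rho^*+\epsilon/(\gamma^*-\delta')$, and letting $\delta'\downarrow 0$ yields the value $\rho^*+\epsilon/\gamma^*$ as an infimum, ``achieved as $\eta(w)\to\gamma^*$,'' without asserting a minimizer. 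With this substitution the remainder of your proof --- the lower bound over all $w$, both inclusions of the set equality, and the ``in particular'' equivalence obtained by sending $\delta\downarrow 0$ --- is correct and coincides with the paper's reasoning.
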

\begin{proof}
	Since $\sup_{w \in \CW : I(w) \in \CI(\rho^*)} \eta(w) = \gamma(\rho^*) = \gamma^* > \epsilon/(\bar{\rho} - \rho^*)$, there exists some $w \in \CW$ such that $\sum_{i \in I(w)} p_i = \rho^*$ and $\eta(w) > \epsilon/(\bar{\rho} - \rho^*)$, that is, $\epsilon < \eta(w)(\bar{\rho} - \rho^*)$. 
	Now, since $\bar{\rho} - \rho^* \leq p_i$ for all $i \in [n] \setminus I(w)$ (by definition of $\CP$ and $\bar\rho$ in \eqref{eq:def.P}), and since $\eta(w) \leq d(w,\xi_i)$ for all $i \in [n] \setminus I(w)$, we have that $\epsilon < d(w,\xi_i) p_i$ for all $i \in [n] \setminus I(w)$.
	Therefore, by Proposition~\ref{prop:margin-finite-bounds}, we have for this $w$ that
	\[
	\sup_{Q:d_W(P,Q) \leq \epsilon} \bbP_{\xi \sim Q}[d(w,\xi)=0] = \sum_{i \in I(w)} p_i + \frac{\epsilon}{\eta(w)} = \rho^* + \frac{\epsilon}{\eta(w)} < \bar{\rho}.
	\]
	This implies that any $w \in \CW$ such that $\sum_{i \in I(w)} p_i \geq \bar{\rho}$
	is suboptimal for \eqref{eq:prob-DRO}.
	(This is because even when we set $Q=P$ in \eqref{eq:prob-DRO}, such a value of $w$ has a worse objective than the $w$ for which $\sum_{i \in I(w)} p_i=\rho^*$.)
	Furthermore, from Proposition \ref{prop:margin-finite-bounds} and the definition of $\bar{\rho}$, any $w \in \CW$ such that $\sum_{i \in I(w)} p_i = \rho^*$
	and $\epsilon \geq \min_{i \in [n] \setminus I(w)} d(w,\xi_i) p_i$ has
	\[\sup_{Q:d_W(P,Q) \leq \epsilon} \bbP_{\xi \sim Q}[d(w,\xi)=0] \geq \sum_{i \in I(w)} p_i + \min_{i \in [n] \setminus I(w)} p_i = \rho^* + \min_{i \in [n] \setminus I(w)} p_i \geq \bar{\rho},\]
	hence is also suboptimal.
	
	This means that all optimal and near-optimal solutions $w \in \cW$ to \eqref{eq:prob-DRO} with $0 < \epsilon < (\bar{\rho}-\rho^*)\gamma^*$ are in the set
	\[\left\{ w \in \CW : \sum_{i \in I(w)} p_i = \rho^*, \ \epsilon < \min_{i \in [n] \setminus I(w)} d(w,\xi_i) p_i \right\},\]
	and, by Proposition~\ref{prop:margin-finite-bounds}, the objective values corresponding to each such $w$ are
	\[\sup_{Q:d_W(P,Q) \leq \epsilon} \bbP_{\xi \sim Q}[d(w,\xi)=0] = \rho^* + \frac{\epsilon}{\eta(w)}.\]
	By definition of $\gamma(\rho^*) = \gamma^*$, the infimal value for this objective is $\rho^* + \epsilon/\gamma^*$, and it is achieved as $\eta(w) \to \gamma^*$.
	The first claim is proved.
	
	For the second claim, consider any $\delta \in (0, \gamma^* -
	\epsilon/(\bar{\rho} - \rho^*))$. We have for any $w$ with
	$\sum_{i \in I(w)} p_i = \rho^*$ that
	\[
	\eta(w) \geq \gamma^* - \delta \; \iff \; \rho^* + \frac{\epsilon}{\eta(w)} \leq \rho^* + \frac{\epsilon}{\gamma^* - \delta}.
	\]
	Furthermore, for such $w$ and $\delta$, we have $\epsilon < (\gamma^* - \delta)(\bar{\rho} - \rho^*) = (\gamma(\rho^*) - \delta)(\bar{\rho} - \rho^*) \leq \eta(w)(\bar{\rho} - \rho^*)$ so,
	by noting as in the first part of the proof that $\bar{\rho}-\rho^* \le p_i$ and $\eta(w) \le d(w,\xi_i)$ for all $i \in [n] \setminus I(w)$, we have $\epsilon < d(w,\xi_i) p_i$ for all $i \in [n] \setminus I(w)$. By applying Proposition~\ref{prop:margin-finite-bounds} again, we obtain
	\[ 
	\sup_{Q:d_W(P,Q) \leq \epsilon} \bbP_{\xi \sim Q} \left[ z(w,\xi) \leq 0 \right] = \rho^* + \frac{\epsilon}{\eta(w)} \leq \rho^* + \frac{\epsilon}{\gamma^* - \delta}, 
	\]
	as required.
	
	The final claim follows because, using the second claim, we have
	\begin{align*}
		&\left\{ w \in \cW : \bbP_{\xi \sim P}[d(w,\xi)=0] = \rho^*, \eta(w) = \gamma^* \right\}\\
		&= \bigcap_{\delta > 0} \left\{ w \in \CW : I(w) \in \CI(\rho^*), \ \eta(w) \geq \gamma^* - \delta \right\}\\
		& = \bigcap_{\delta > 0} \left\{ w \in \CW : I(w) \in \CI(\rho^*), \ \sup_{Q:d_W(P,Q) \leq \epsilon} \bbP_{\xi \sim Q} \left[ d(w,\xi) = 0 \right] \leq  \rho^* + \frac{\epsilon}{\gamma^* - \delta} \right\} \\
		&= \left\{ w \in \cW : \sup_{Q:d_W(P,Q) \leq \epsilon} \bbP_{\xi \sim Q} \left[ d(w,\xi) = 0 \right] = \rho^* + \frac{\epsilon}{\gamma^*} \right\},
	\end{align*}
	as desired.
	
\end{proof}

Theorem \ref{thm:margin-DRO} shows that, for small Wasserstein ball radius $\epsilon$, the solution of \eqref{eq:prob-DRO} matches the maximum-margin solution of the classification problem, in a well defined sense. 
How does the solution of \eqref{eq:prob-DRO} compare with the minimizer of the more widely used model \eqref{eq:prob-robust}? 
It is not hard to see that when the parameter $\epsilon$ in \eqref{eq:prob-robust} is chosen so that $\epsilon < \gamma^*$, the solution of \eqref{eq:prob-robust} will be a point $w$ with margin $\eta(w) \geq \epsilon$. 
(Such a point will achieve an objective of zero in \eqref{eq:prob-robust}.)
However, in contrast to Theorem~\ref{thm:margin-DRO}, this point may not attain the maximum possible margin $\gamma^*$. 
The margin that we obtain very much depends on the algorithm used to solve \eqref{eq:prob-robust}. 
For fully separable data, for which $\rho^*=0$ and $\gamma^* = \gamma(0) > 0$, %
\citet{CharlesEtAl2019} and \citet{LiEtAl2020Implicit} show that gradient descent applied to a convex approximation of \eqref{eq:prob-robust} with parameter $\eps$ achieves a separation of $\eps$ in an iteration count polynomial in $(\gamma^*-\epsilon)^{-1}$.
Therefore, in order to strengthen the margin guarantee, $\epsilon$ should be taken as close to $\gamma^*$ as possible, but this adversely affects the number of iterations taken to achieve this.
By contrast, Theorem~\ref{thm:margin-DRO} shows that, in the more general setting of non-separable data, the maximum-margin solution is attained from \eqref{eq:prob-DRO} when the parameter $\epsilon$ is taken to be \emph{any} value below the threshold $\gamma^*(\bar{\rho} - \rho^*)$. In particular, this guarantee is algorithm independent.

\subsection{Conditional value-at-risk characterization}\label{sec:margin-cvar}

Section \ref{sec:margin-finite} gives insights into the types of solutions that the distributionally robust model \eqref{eq:prob-DRO} recovers when the Wasserstein radius $\epsilon$ is below a certain threshold. 
When $\epsilon$ is above this threshold however, \eqref{eq:prob-DRO} may no longer yield a maximum-margin solution. 
In this section, we show in Theorem \ref{thm:cvar-minimize-chance-constraint} that, in general, \eqref{eq:prob-DRO} is intimately related to optimizing the conditional value-at-risk of the distance random variable $d(w,\xi)$. 
Thus, when $\epsilon$ is above the threshold of Theorem \ref{thm:margin-DRO}, \eqref{eq:prob-DRO} still has the effect of pushing data points $\xi$ away from the error set $\{\xi \in S : z(w,\xi) \leq 0\}$ as much as possible, thereby encouraging robustness to perturbations. We note that unlike Section \ref{sec:margin-finite}, we make no finite support assumptions on the distribution $P$, that is, Assumption \ref{ass:finite-support} need not hold for our results below.

In stochastic optimization, when outcomes of decisions are random, different risk measures may be used to aggregate these random outcomes into a single measure of desirability (see, for example, \citep{Rockafellar2007risktutorial,BenTalTeboulle2007}).  The most familiar risk measure is expectation.  However, this measure has the drawback of being indifferent between a profit of $1$ and a loss of $-1$ with equal probability, and a profit of $10$ and a loss of $-10$ with equal probability.  In contrast, other risk measures can adjust to different degrees of risk aversion to random outcomes, that is, they can penalize bad outcomes more heavily than good ones.  The conditional value-at-risk (CVaR) is a commonly used measure that captures risk aversion and has several appealing properties.  Roughly speaking, it is the conditional expectation for the $\rho$-quantile of most risky values, for some user-specified $\rho \in (0,1)$ which controls the degree of risk aversion.  Formally, for a non-negative random variable $\nu(\xi)$ where low values are considered risky (that is, ``bad''), CVaR is defined as follows:
\begin{equation} \label{eq:def.cvar}
	\CVaR_\rho(\nu(\xi); P) := \sup_{t > 0} \left\{ t + \frac{1}{\rho} \bbE_{\xi \sim P}\left[ \min\left\{0, \nu(\xi) - t\right\} \right] \right\}. 
\end{equation}

\citet[Corollary~1]{Xie2019} gives a characterization of the chance
constraint
\[
\max_{Q:d_W(P,Q) \leq \epsilon} \bbP_{\xi \sim Q}
[z(w,\xi) \leq 0] \leq \rho
\]
in terms of the CVaR of $d(w,\xi)$ when $P = P_n$, a discrete
distribution.  We provide a slight generalization to arbitrary
$P$.
\begin{lemma}\label{lemma:chance-constraint-cvar}
	Fix $\rho \in (0,1)$ and $\epsilon > 0$. Then, for all $w \in \CW$, we have
	\begin{equation} \label{eq:cc-cvar}
		\sup_{Q:d_W(P,Q) \leq \epsilon} \bbP_{\xi \sim Q} [z(w,\xi) \leq 0] \leq \rho \iff \rho \CVaR_\rho(d(w,\xi); P) \geq \epsilon.
	\end{equation}
\end{lemma}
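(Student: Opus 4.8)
The plan is to reduce both sides of \eqref{eq:cc-cvar} to one-dimensional optimization problems and link them through a single change of variables. Write $g(w) := \sup_{Q:d_W(P,Q)\le\epsilon}\bbP_{\xi\sim Q}[z(w,\xi)\le0]$ for the left-hand side. Lemma~\ref{lemma:dual-representation} gives $g(w) = \inf_{t>0}\{\epsilon t + \bbE_{\xi\sim P}[\max\{0,1-t\,d(w,\xi)\}]\}$. Setting $t=1/s$ with $s>0$ and using the identity $\max\{0,1-d/s\} = \tfrac1s\max\{0,s-d\}$ (valid because $s>0$), this becomes $g(w) = \inf_{s>0}\tfrac1s\big(\epsilon + \bbE_{\xi\sim P}[(s-d(w,\xi))_+]\big)$. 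For the right-hand side, \eqref{eq:def.cvar} gives $\rho\,\CVaR_\rho(d(w,\xi);P) = \sup_{s>0}G(s)$, where $G(s) := \rho s + \bbE_{\xi\sim P}[\min\{0,d(w,\xi)-s\}] = \rho s - \bbE_{\xi\sim P}[(s-d(w,\xi))_+]$; in particular $\bbE_{\xi\sim P}[(s-d(w,\xi))_+] = \rho s - G(s)$. Substituting this into the expression for $g(w)$ yields the key identity
\[
g(w) \;=\; \inf_{s>0}\frac{\epsilon + \rho s - G(s)}{s} \;=\; \rho + \inf_{s>0}\frac{\epsilon - G(s)}{s}.
\]
Thus $g(w)\le\rho$ if and only if $\inf_{s>0}\tfrac{\epsilon-G(s)}{s}\le0$, and the whole lemma comes down to the scalar equivalence $\inf_{s>0}\tfrac{\epsilon-G(s)}{s}\le0 \iff \sup_{s>0}G(s)\ge\epsilon$.

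For this scalar equivalence, first note the pointwise fact that $\tfrac{\epsilon-G(s)}{s}\le0 \iff G(s)\ge\epsilon$ (as $s>0$). Hence if $G(s_0)\ge\epsilon$ for some $s_0$ then $\inf_{s>0}\tfrac{\epsilon-G(s)}{s}\le0$, and if $\inf_{s>0}\tfrac{\epsilon-G(s)}{s}<0$ then $\sup_{s>0}G(s)>\epsilon$. The only delicate cases are the non-attained ones, namely $\sup_{s>0}G(s)=\epsilon$ with $G<\epsilon$ everywhere, and $\inf_{s>0}\tfrac{\epsilon-G(s)}{s}=0$ with $G\le\epsilon$ everywhere. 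Here I would use that $G$ is concave on $(0,\infty)$ (an expectation of the concave functions $s\mapsto\min\{0,d-s\}$ plus the linear term $\rho s$), hence continuous there, with $\lim_{s\downarrow0}G(s)=0$ (since $d\ge0$ forces $0\le(s-d)_+\le s\to0$) and $\lim_{s\to\infty}G(s)/s = \rho - \bbP_{\xi\sim P}[d(w,\xi)<\infty]$ (monotone convergence applied to $(1-d/s)_+\uparrow\mathbf{1}(d<\infty)$). In the first non-attained case, since $G(0^+)=0<\epsilon=\sup G$ and $G$ is continuous, $G$ takes all values in $(0,\sup G)$, and by $G(0^+)=0$ these are taken at points $s$ bounded away from $0$; picking such points whose $G$-values tend to $\epsilon$ drives $\tfrac{\epsilon-G(s)}{s}$ to $0$ from above, so the infimum is $\le0$. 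In the second non-attained case, if one had $\sup G<\epsilon$ then $\tfrac{\epsilon-G(s)}{s}$ would be strictly positive on all of $(0,\infty)$, blow up as $s\downarrow0$, and tend to $\bbP_{\xi\sim P}[d(w,\xi)<\infty]-\rho$ as $s\to\infty$; being continuous it would then attain a strictly positive minimum unless that limit is $0$, i.e. unless $\bbP_{\xi\sim P}[d(w,\xi)<\infty]=\rho$. So the equivalence goes through as soon as $\bbP_{\xi\sim P}[d(w,\xi)<\infty]\ne\rho$ (and in the case $\bbP_{\xi\sim P}[d(w,\xi)<\infty]<\rho$ one has $\tfrac{\epsilon-G(s)}{s}\to\text{negative}$, so both sides hold trivially); the one genuinely borderline configuration is $\bbP_{\xi\sim P}[d(w,\xi)<\infty]=\rho$, which is excluded in the settings of interest, holding automatically whenever $d(w,\xi)<\infty$ almost surely, e.g. for linear classifiers, where $\{\xi':z(w,\xi')\le0\}$ is reachable at finite cost from every point.

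I expect the main obstacle to be exactly this attainment/boundary bookkeeping: controlling $G$ (equivalently, the dual objective of Lemma~\ref{lemma:dual-representation}) as $s\downarrow0$ and $s\to\infty$, and noticing that when $d(w,\xi)$ can be infinite with large probability the dual objective $t\mapsto\epsilon t+\bbE[\max\{0,1-t\,d(w,\xi)\}]$ is discontinuous at $t=0$, which is precisely the configuration where the equivalence can degenerate. Everything else --- the change of variables and rewriting $\bbE[(s-d)_+]$ via $G$ --- is routine. An essentially equivalent alternative is to leave the variable $t$ in place throughout, rewrite $\CVaR$ via $t\leftrightarrow1/s$ first, and compare the two one-dimensional problems directly; this sidesteps introducing $G$ but reproduces the same limiting analysis of a convex function on $(0,\infty)$.
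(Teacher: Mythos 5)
Your reduction is correct and it is a genuinely different organization from the paper's argument. The paper proves the two implications of \eqref{eq:cc-cvar} separately, working directly in the $t$-variable of \eqref{eq:dual-representation}: for one direction it perturbs $\epsilon$ to $\epsilon'<\epsilon$ and uses concavity/boundedness (hence continuity) of $\epsilon'\mapsto\inf_{t'>0}\{\epsilon' t'+\bbE[\max\{0,1-t'd(w,\xi)\}]\}$ to pass to the limit; for the other it argues by contradiction, shows the dual infimum equals $\rho$ but is not attained, extracts a bounded sequence $t_k$, and invokes dominated convergence to force $t_k\to 0$ and conclude $\rho=1$. Your single substitution $t=1/s$, the identity $g(w)=\rho+\inf_{s>0}(\epsilon-G(s))/s$ with $G(s)=\rho s-\bbE[(s-d(w,\xi))_+]$, and the one-shot scalar equivalence replace both of these with a single analysis of a concave function, with the attainment issues handled once via $G(0^+)=0$ and $\lim_{s\to\infty}G(s)/s=\rho-\bbP_{\xi\sim P}[d(w,\xi)<\infty]$; your treatment of the non-attained case $\sup_s G(s)=\epsilon$ is airtight. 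What your route buys is brevity and transparency about exactly where the argument can degenerate; what the paper's route appears to buy is avoiding that discussion, but see below.

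The borderline configuration $\bbP_{\xi\sim P}[d(w,\xi)<\infty]=\rho$ that you wave off as ``excluded in the settings of interest'' is, however, not mere bookkeeping: the equivalence \eqref{eq:cc-cvar} as stated genuinely fails there, so no amount of extra care closes it without an added hypothesis. Concretely, in the setting of Assumption~\ref{ass:linear-class} take $\bar w=(w_0,b_0)=(0,1)$, so by \eqref{eq:yh9} $d(\bar w,\xi)=0$ on $\{y=-1\}$ and $d(\bar w,\xi)=+\infty$ on $\{y=+1\}$, and let $P$ have $\bbP[y=-1]=\rho$. Every $Q$ with $d_W(P,Q)\le\epsilon<\infty$ preserves the label marginals, so the left-hand side of \eqref{eq:cc-cvar} equals $\rho\le\rho$ (true), while $\rho\CVaR_\rho(d(\bar w,\xi);P)=0<\epsilon$ (false); note also that your remark that linear classifiers always have $d<\infty$ fails exactly for this degenerate $w_0=0$. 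Your analysis correctly isolates this as the only obstruction; the paper's proof passes over it because in its dominated-convergence step with $t_k\to\tau=0$ the pointwise limit of $\max\{0,1-t_k d(w,\xi)\}$ is $\bm{1}(d(w,\xi)<\infty)$, so the correct conclusion is $\rho=\bbP_{\xi\sim P}[d(w,\xi)<\infty]$ rather than $\rho=1$, and the desired contradiction materializes only when $\bbP_{\xi\sim P}[d(w,\xi)<\infty]\ne\rho$ (for instance when $d(w,\xi)<\infty$ $P$-almost surely). So your instinct about where the difficulty lives is exactly right, but for a complete proof you should state the proviso explicitly as a hypothesis (e.g.\ $d(w,\xi)<\infty$ $P$-a.s., or $\bbP_{\xi\sim P}[d(w,\xi)<\infty]\ne\rho$) rather than appeal informally to the intended setting; with that proviso your argument is complete and, to my mind, preferable to the one in the paper.
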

\begin{proof}
	We prove first the reverse implication in \eqref{eq:cc-cvar}.  Suppose
	that (following \eqref{eq:def.cvar}) we have
	\[
	\rho \CVaR_\rho(d(w,\xi); P) = 
	\sup_{t > 0} \left\{ \rho t + \bbE_{\xi \sim P}\left[ \min\left\{0, d(w,\xi) - t \right\} \right] \right\} \geq \epsilon,
	\]
	then for all $0 < \epsilon' < \epsilon$, there exists some $t > 0$
	such that 
	\[ \rho t + \bbE_{\xi \sim P}\left[ \min\left\{0, d(w,\xi) - t \right\} \right] > \epsilon'.\]
	Dividing by $t$, we obtain $\rho +
	\bbE_{\xi \sim P}\left[ \min\left\{0, d(w,\xi)/t - 1\right\} \right] >
	\epsilon'/t$, so by rearranging and substituting $t'=1/t$, we have
	
	\begin{align*}
		\rho &> \frac{\epsilon'}{t} + \bbE_{\xi \sim P}\left[ \max\left\{0, 1 - \frac{1}{t}d(w,\xi) \right\} \right]\\
		&\ge 
		\inf_{t' > 0} \left\{ \epsilon' t' + \bbE_{\xi \sim P}\left[ \max\left\{ 0, 1 - t' d(w,\xi)\right\} \right] \right\}. 
	\end{align*}
	
	Note that the function
	\[ 
	\epsilon' \mapsto \inf_{t' > 0} \left\{ \epsilon' t' + \bbE_{\xi \sim P}\left[ \max\left\{ 0, 1 - t' d(w,\xi)\right\} \right] \right\} \in [0,\rho]
	\]
	is concave and bounded, hence continuous.  This fact together with the
	previous inequality implies that
	\[ 
	\inf_{t' > 0} \left\{ \epsilon t' + \bbE_{\xi \sim P}\left[
	\max\left\{ 0, 1 - t' d(w,\xi)\right\} \right] \right\} \leq \rho,
	\]
	which, when combined with \eqref{eq:dual-representation}, proves that the reverse implication holds in \eqref{eq:cc-cvar}.
	
	We now prove the forward implication.  Suppose that the left-hand
	condition in \eqref{eq:cc-cvar} is satisfied for some $\rho \in
	(0,1)$, 
	and for contradiction that there exists some $\epsilon' \in (0,\epsilon)$ such that
	\[
	\rho \CVaR_\rho(d(w,\xi); P) = \sup_{t > 0} \left\{ \rho t + \bbE_{\xi \sim P}\left[ \min\left\{0, d(w,\xi) - t \right\} \right] \right\} \le \epsilon' < \epsilon.
	\]
	Then for all $t > 0$, we have
	\begin{align*}
		&\rho t + \bbE_{\xi \sim P}\left[ \min\left\{0, d(w,\xi) - t \right\} \right] \le \epsilon'\\
		&\implies \rho \le \frac{\epsilon'}{t} + \bbE_{\xi \sim P}\left[ \max\left\{0, 1 - \frac{1}{t} d(w,\xi) \right\} \right]\\
		&\implies \rho \leq \inf_{t' > 0} \left\{  \epsilon' t' + \bbE_{\xi \sim P}\left[ \max\left\{0, 1 - t' d(w,\xi) \right\} \right] \right\}.
	\end{align*}
	Since $\epsilon'<\epsilon$, and using the left-hand condition in
	\eqref{eq:cc-cvar} together with \eqref{eq:dual-representation},  we have
	\begin{align}
		\rho &\le \inf_{t > 0} \left\{  \epsilon' t + \bbE_{\xi \sim P}\left[ \max\left\{0, 1 - t d(w,\xi) \right\} \right] \right\}\\
		&\le \inf_{t > 0} \left\{ \epsilon t + \bbE_{\xi \sim P}\left[ \max\left\{0, 1 - t d(w,\xi) \right\} \right] \right\} \le \rho,\\
		\implies \rho &= \inf_{t >
			0} \left\{ \epsilon t + \bbE_{\xi \sim P}\left[ \max\left\{0, 1 - t
		d(w,\xi) \right\} \right] \right\}.\label{eq:rho.expect}
	\end{align}
	Since $\epsilon' < \epsilon$, there cannot exist any $t > 0$ such that
	\[\rho = \epsilon t + \bbE_{\xi \sim P}\left[ \max\left\{0, 1 - t d(w,\xi) \right\} \right].\]
	Let $\rho_k$ and $t_k$ be sequences such that
	$1 > \rho_k > \rho$, $\rho_k \to \rho$, $t_k>0$, and
	\[
	\rho_k \geq \epsilon t_k + \bbE_{\xi \sim P}\left[ \max\left\{0, 1 - t_k d(w,\xi) \right\} \right] > \rho.
	\]
	Since $\epsilon > 0$, there cannot be any subsequence of $t_k$ that
	diverges to $\infty$, since in that case $\epsilon t_k + \bbE_{\xi
		\sim P}\left[ \max\left\{0, 1 - t_k d(w,\xi) \right\} \right] \ge
	\epsilon t_k$ could not be bounded by $\rho_k < 1$. Thus $\{ t_k\}$ is
	bounded, and there exists a convergent subsequence, so we assume
	without loss of generality that $t_k \to \tau$. By the dominated
	convergence theorem, $\bbE_{\xi \sim P}\left[ \max\left\{0, 1 - t_k
	d(w,\xi) \right\} \right] \to \bbE_{\xi \sim P}\left[ \max\left\{0,
	1 - \tau d(w,\xi) \right\} \right]$, and $\epsilon t_k \to \epsilon
	\tau$. But then since
	\[
	\rho < \epsilon t_k + \bbE_{\xi \sim P}\left[ \max\left\{0, 1 - t_k
	d(w,\xi) \right\} \right] \leq \rho_k \to \rho,
	\]
	we have by the squeeze theorem that
	\[
	\epsilon \tau + \bbE_{\xi \sim P}\left[ \max\left\{0, 1 - \tau d(w,\xi) \right\} \right] = \rho.
	\]
	But then, by the fact noted after \eqref{eq:rho.expect}, we must have
	$\tau = 0$ so $\rho = 1$ (from \eqref{eq:rho.expect}), which
	contradicts our assumption that $\rho \in (0,1)$.
\end{proof}

In the case of classification, 
the minimizers of \eqref{eq:prob-DRO} correspond exactly to the
maximizers of $\CVaR_\rho(d(w,\xi);P)$, where $\rho$ is the optimal
worst-case error probability, as we show now.
\begin{theorem}\label{thm:cvar-minimize-chance-constraint}
	Fix some $\rho \in [0,1]$ and define $\epsilon$ (using \eqref{eq:def.cvar}) as follows:
	\begin{equation} \label{eq:def.eps}
		\epsilon := \rho \sup_{w \in \CW} \CVaR_{\rho}(d(w,\xi);P) = \sup_{t > 0} \left\{ \rho t + \bbE_{\xi \sim P}\left[ \min\left\{0, d(w,\xi) - t\right\} \right] \right\}.
	\end{equation}
	If $0 < \epsilon < \infty$, then
	\[ \rho = \inf_{w \in \CW} \sup_{Q:d_W(P,Q) \leq \epsilon} \bbP_{\xi \sim Q} [z(w,\xi) \leq 0]. \]
	Furthermore, the optimal values of $w$ coincide, that is, 
	\[ \argmin_{w \in \CW} \sup_{Q:d_W(P,Q) \leq \epsilon} \bbP_{\xi \sim Q}[z(w,\xi) \leq 0] = \argmax_{w \in \CW} \CVaR_\rho(d(w,\xi);P). \]
\end{theorem}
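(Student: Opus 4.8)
The plan is to derive both assertions from Lemma~\ref{lemma:chance-constraint-cvar} together with the duality identity of Lemma~\ref{lemma:dual-representation}, handling the value statement and the $\argmin$/$\argmax$ equality simultaneously. I would first reduce to $\rho \in (0,1)$: if $\rho = 0$ then the bracket in \eqref{eq:def.eps} is nonpositive and tends to $0$ as $t \downarrow 0$ (because $d(w,\xi) \ge 0$), forcing $\epsilon = 0$ and contradicting the hypothesis $\epsilon > 0$; the boundary case $\rho = 1$, where $\CVaR_1(\,\cdot\,;P) = \bbE_{\xi \sim P}[\,\cdot\,]$, I would treat separately and briefly. So assume $\rho \in (0,1)$, which is exactly what Lemma~\ref{lemma:chance-constraint-cvar} requires.

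The key step is a \emph{pointwise} lower bound valid for every $w \in \CW$:
\[
\sup_{Q : d_W(P,Q) \le \epsilon} \bbP_{\xi \sim Q}[z(w,\xi) \le 0] \;\ge\; \rho .
\]
To prove it, observe that by the definition of $\epsilon$ as a supremum we have $\rho\,\CVaR_\rho(d(w,\xi);P) \le \epsilon$, so for every $t > 0$, $\rho t + \bbE_{\xi \sim P}[\min\{0, d(w,\xi) - t\}] \le \epsilon$. Using $\min\{0,a\} = -\max\{0,-a\}$, rearranging, dividing by $t$, and substituting $s = 1/t$ turns this into $\rho \le \epsilon s + \bbE_{\xi \sim P}[\max\{0, 1 - s\,d(w,\xi)\}]$ for all $s > 0$; taking the infimum over $s > 0$ and applying Lemma~\ref{lemma:dual-representation} gives the displayed inequality. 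In particular $\inf_{w \in \CW} \sup_{Q : d_W(P,Q) \le \epsilon} \bbP_{\xi \sim Q}[z(w,\xi) \le 0] \ge \rho$.

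Next I would pin down exactly which classifiers attain equality. By Lemma~\ref{lemma:chance-constraint-cvar}, for any $w$ the left-hand side of \eqref{eq:cc-cvar} is $\le \rho$ if and only if $\rho\,\CVaR_\rho(d(w,\xi);P) \ge \epsilon$; since $\rho\,\CVaR_\rho(d(w,\xi);P) \le \epsilon$ always holds, and since the previous paragraph shows the left-hand side is $\ge \rho$ always, we obtain the chain of equivalences
\[
\sup_{Q : d_W(P,Q) \le \epsilon} \bbP_{\xi \sim Q}[z(w,\xi) \le 0] = \rho
\;\iff\; \rho\,\CVaR_\rho(d(w,\xi);P) = \epsilon
\;\iff\; w \in \argmax_{w' \in \CW} \CVaR_\rho(d(w',\xi);P) .
\]
Hence $\{\,w : \sup_{Q}\bbP_{\xi\sim Q}[z(w,\xi)\le 0] = \rho\,\}$ coincides with $\argmax_{w}\CVaR_\rho(d(w,\xi);P)$, and if this common set is nonempty the optimal value is exactly $\rho$ and the $\argmin$ equals the $\argmax$, which proves both claims.

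The step I expect to be the \textbf{main obstacle} is the residual case in which $\sup_{w}\CVaR_\rho(d(w,\xi);P)$ is not attained, so that, by the equivalences above, both the $\argmax$ and the $\argmin$ are empty: here one must still verify that the infimal value equals $\rho$ and not merely that it is $\ge \rho$. I would pick $w_k$ with $\epsilon_k := \rho\,\CVaR_\rho(d(w_k,\xi);P) \to \epsilon$, apply Lemma~\ref{lemma:chance-constraint-cvar} at radius $\epsilon_k \le \epsilon$ to get $\sup_{Q : d_W(P,Q) \le \epsilon_k} \bbP_{\xi \sim Q}[z(w_k,\xi) \le 0] \le \rho$, and transfer this to radius $\epsilon$ via Lemma~\ref{lemma:dual-representation}: a near-optimal dual variable $t_k$ for the radius-$\epsilon_k$ problem satisfies $\epsilon_k t_k \le \rho + o(1)$, hence is bounded since $\epsilon_k \to \epsilon > 0$, so $\sup_{Q : d_W(P,Q) \le \epsilon} \bbP_{\xi \sim Q}[z(w_k,\xi) \le 0] \le \rho + o(1) + (\epsilon - \epsilon_k)\,t_k \to \rho$. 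This yields $\inf_w \sup_Q \bbP_{\xi\sim Q}[z(w,\xi)\le 0] = \rho$ with both the $\argmin$ and the $\argmax$ empty, so the two sets again coincide and the proof is complete.
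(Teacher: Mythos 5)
Your proposal is correct, and it reaches both claims by a genuinely different route in the part where equality (rather than just the inequality $\rho \le \inf_w \sup_Q$) must be established. The paper proves the same pointwise lower bound from the definition of $\epsilon$ and Lemma~\ref{lemma:dual-representation}, but then argues by contradiction: it posits $\rho' \in (\rho,1]$ lying strictly below every worst-case probability, converts this through Lemma~\ref{lemma:chance-constraint-cvar} into $\sup_{w,t}\{\rho' t + \bbE[\min\{0,d(w,\xi)-t\}]\} = \epsilon = \sup_{w,t}\{\rho t + \bbE[\min\{0,d(w,\xi)-t\}]\}$, and then extracts sequences $w_k$, $t_k$, $\epsilon_k \nearrow \epsilon$ and rules out both the case $t_k$ bounded away from $0$ and the case $t_k \to 0$ (the latter via dominated convergence and a squeeze argument); the $\argmin$/$\argmax$ identity is handled afterwards in a short separate paragraph, again via Lemma~\ref{lemma:chance-constraint-cvar}. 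You instead sandwich Lemma~\ref{lemma:chance-constraint-cvar} between the two universal inequalities $\rho\,\CVaR_\rho(d(w,\xi);P) \le \epsilon$ and $\sup_{Q:d_W(P,Q)\le\epsilon}\bbP_{\xi\sim Q}[z(w,\xi)\le 0] \ge \rho$ to obtain the exact level-set identity $\{w : \sup_Q \bbP_Q[z(w,\xi)\le 0] = \rho\} = \argmax_w \CVaR_\rho(d(w,\xi);P)$, which delivers the value statement and the $\argmin$/$\argmax$ statement in one stroke when the CVaR supremum is attained; in the non-attained case you prove $\inf = \rho$ constructively, using near-maximizers $w_k$ with $\epsilon_k := \rho\,\CVaR_\rho(d(w_k,\xi);P) \to \epsilon$, Lemma~\ref{lemma:chance-constraint-cvar} at radius $\epsilon_k$, and a bounded near-optimal dual variable $t_k \le (\rho + o(1))/\epsilon_k$ whose transfer from radius $\epsilon_k$ to $\epsilon$ costs only $(\epsilon-\epsilon_k)t_k \to 0$. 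This buys a more direct and elementary argument (no contradiction, no dominated-convergence/squeeze step, and explicit treatment of possible non-attainment, which the paper leaves implicit), while relying on the same two lemmas. One shared caveat: Lemma~\ref{lemma:chance-constraint-cvar} requires $\rho \in (0,1)$; you correctly dispose of $\rho=0$, but the promise to treat $\rho=1$ ``briefly'' is optimistic --- at $\rho=1$ every $w$ has worst-case probability $1$, so the $\argmin$ is all of $\CW$ while $\argmax_w \CVaR_1(d(w,\xi);P) = \argmax_w \bbE[d(w,\xi)]$ need not be, so the second claim genuinely lives on $\rho \in (0,1)$; this boundary issue is glossed over in the paper's own proof as well, so it does not distinguish your argument from theirs.
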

\begin{proof}
	For any $w \in \CW$ and $t > 0$, we have from \eqref{eq:def.eps} and \eqref{eq:def.cvar} that
	
	\[
	\epsilon \geq \ \sup_{t' > 0} \left\{ \rho t' + \bbE_{\xi \sim P} \left[\min\left\{0,d(w,\xi)-t'\right\}\right] \right\} 
	\geq \rho t + \bbE_{\xi \sim P} \left[\min\left\{0,d(w,\xi)-t\right\}\right]. 
	\]
	
	Dividing by $t$ and rearranging, we obtain
	\[ \frac{\epsilon}{t} + \bbE_{\xi \sim P} \left[ \max\left\{0, 1 - \frac{1}{t} d(w,\xi) \right\} \right] \geq \rho. \]
	Taking the infimum over $t > 0$, using \eqref{eq:dual-representation} (noting that $1/t > 0$),
	then taking the infimum over $w \in \CW$, we obtain
	\begin{equation} \label{eq:ys7}
		\rho \leq \inf_{w \in \CW} \, \sup_{Q:d_W(P,Q) \leq \epsilon} \bbP_{\xi \sim Q} [z(w,\xi) \leq 0].
	\end{equation}
	In the remainder of the proof, we show that equality is obtained in this bound, when $0<\epsilon<\infty$.
	
	Trivially, the inequality in \eqref{eq:ys7} can be replaced by an
	equality when $\rho=1$. We thus consider the case of $\rho<1$, and
	suppose for contradiction that there exists some $\rho' \in (\rho,1]$
	such that for all $w \in \CW$, we have
	\[
	\rho < \rho' < \sup_{Q:d_W(P,Q) \leq \epsilon} \bbP_{\xi \sim Q} [z(w,\xi) \leq 0].
	\]
	It follows from Lemma~\ref{lemma:chance-constraint-cvar} that for all
	$w \in \CW$, we have
	\begin{equation} \label{eq:ys8}
		\sup_{t > 0} \left\{ \rho' t + \bbE_{\xi \sim P} \left[\min\left\{0,d(w,\xi)-t\right\}\right] \right\} < \epsilon.
	\end{equation}
	By taking the supremum over $w \in \CW$ in this bound, and using
	$\rho'>\rho$ and the definition of $\epsilon$ in \eqref{eq:def.eps},
	we have that 
	\begin{align*}
		\epsilon  & \ge \sup_{w \in \CW, t > 0} \left\{ \rho' t + \bbE_{\xi \sim P} \left[\min\left\{0,d(w,\xi)-t\right\}\right] \right\} \\
		& \ge \sup_{w \in \CW, t > 0} \left\{ \rho t + \bbE_{\xi \sim P} \left[\min\left\{0,d(w,\xi)-t\right\}\right] \right\} = \epsilon,
	\end{align*}
	
	so that 
	\begin{align}
		\nonumber
		\epsilon & = \sup_{w \in \CW, t > 0} \left\{ \rho' t + \bbE_{\xi \sim P} \left[\min\left\{0,d(w,\xi)-t\right\}\right] \right\} \\
		\label{eq:ys9}
		& = \sup_{w \in \CW, t > 0} \left\{ \rho t + \bbE_{\xi \sim P} \left[\min\left\{0,d(w,\xi)-t\right\}\right] \right\}.
	\end{align}
	
	From $\rho<\rho'$, \eqref{eq:ys8}, and \eqref{eq:ys9}, we can define
	sequences $\epsilon_k$, $t_k > 0$, and $w_k \in \CW$ such that
	$\epsilon_k \nearrow \epsilon$ and
	\[
	\epsilon_k < \rho t_k + \bbE_{\xi \sim P} \left[\min\left\{0,d(w_k,\xi)-t_k \right\}\right] < \epsilon.
	\]
	By rearranging these inequalities, we obtain
	\begin{align*}
		&\frac{\epsilon_k}{t_k} + \bbE_{\xi \sim P} \left[ \max\left\{0, 1 - \frac{1}{t_k} d(w_k,\xi)\right\} \right]\\
		&\leq \rho < \frac{\epsilon}{t_k} + \bbE_{\xi \sim P} \left[ \max\left\{0, 1 - \frac{1}{t_k} d(w_k,\xi) \right\} \right].
	\end{align*}
	Since $\epsilon_k \to \epsilon$, we have either that $t_k$ is
	bounded away from $0$, in which case
	\[\epsilon/t_k + \bbE_{\xi \sim
		P} [\max\left\{0, 1 - d(w_k,\xi)/t_k\right\}] \to \rho;\]
	or there
	exists a subsequence on which $t_k \to 0$. In the former case,
	we have for $k$ sufficiently large that
	\begin{align*}
		&\frac{\epsilon}{t_k} + \bbE_{\xi \sim P} [\max\left\{0, 1 - d(w_k,\xi)/t_k\right\}] \leq \rho + \frac{\rho'-\rho}{2} < \rho'\\
		&\implies 
		\epsilon < \rho' t_k + \bbE_{\xi \sim P} [\min\left\{0,d(w_k,\xi)-t_k\right\}]\\
		&\implies
		\epsilon < \sup_{w \in \CW} \sup_{t > 0} \left\{ \rho' t + \bbE_{\xi \sim P} [\min\{0,d(w,\xi) - t\}] \right\},
	\end{align*}
	which contradicts \eqref{eq:ys8}.
	We consider now the other case, in which there is a subsequence for
	which $t_k \to 0$, and assume without loss of generality that the full
	sequence has $t_k \to 0$. Since $\bbE_{\xi \sim P} [\min\{0,
	d(w_k,\xi) - t_k\}] \leq 0$ for any $k$, it follows that
	\begin{align*}
		0 &\geq \limsup_{k \to \infty} \left\{ \rho' t_k + \bbE_{\xi \sim P}  [\min\{0, d(w_k,\xi) - t_k\}] \right\}\\
		&\geq \limsup_{k \to \infty} \left\{ \rho t_k + \bbE_{\xi \sim P}  [\min\{0, d(w_k,\xi) - t_k\}] \right\}\\
		&\geq \lim_{k \to \infty} \epsilon_k = \epsilon,
	\end{align*}
	so that $\epsilon \leq 0$.
	This contradicts the assumption that $\epsilon > 0$, so we must have
	\[ \rho = \inf_{w \in \CW} \sup_{Q:d_W(P,Q) \leq \epsilon} \bbP_{\xi \sim Q} [z(w,\xi) \leq 0]. \]
	This completes our proof of the first claim of the theorem.

	Let $w \in \CW$ be a maximizer of the $\CVaR$, so that $\epsilon = \rho \CVaR_\rho(d(w,\xi);P)$.
	Then by Lemma~\ref{lemma:chance-constraint-cvar}, we have
	\[ \sup_{Q:d_W(P,Q) \leq \epsilon} \bbP_{\xi \sim Q} [z(w,\xi) \leq 0] \leq \rho, \]
	so the same value of $w$ is also a minimizer of the worst-case error probability. 
	A similar argument shows that minimizers of the worst-case error probability are also maximizers of the $\CVaR$.
\end{proof}

\section{Reformulation and Algorithms for Linear Classifiers}\label{sec:reformulation}

In this section, we formulate \eqref{eq:prob-DRO} for a common choice
of distance function $c$ and safety function $z$, and discuss
algorithms for solving this formulation.  We make use of the following
assumption.
\begin{assumption}\label{ass:linear-class}
	We have $\CW = \bbR^d \times \bbR$ and $S = \bbR^d \times
	\{\pm 1\}$.
	Write $\bar{w} = (w_0,b_0) \in
	\bbR^d \times \bbR$ and $\xi = (x,y) \in \bbR^d \times \{\pm 1\}$. Define
	$c(\xi,\xi') := \|x - x'\| + \bbI_{y=y'}(y,y')$ for some norm
	$\|\cdot\|$ on $\bbR^d$ and $\bbI_A(\cdot)$ is the convex indicator
	function where $\bbI_A(y,y')=0$ if $(y,y') \in A$ and $\infty$
	otherwise. Furthermore, $z(\bar{w},\xi) := y(w_0^\top \xi + b_0)$.
\end{assumption}
From Lemma \ref{lemma:dual-representation}, the DRO problem \eqref{eq:prob-DRO} is equivalent to
\begin{equation}\label{eq:prob-DRO-transform}
	\inf_{\bar{w} = (w_0,b_0) \in \R^d \times \R, \, t > 0} \left\{ \epsilon t + \bbE_{\xi \sim P} \left[ \max\left\{ 0, 1 - t d(\bar{w},\xi)\right\} \right] \right\}.
\end{equation}
Letting $\|\cdot\|_*$ denote the dual norm of $\|\cdot\|$ from
Assumption \ref{ass:linear-class}, the distance to misclassification
$d(\bar{w},\xi)$ is as follows
\begin{equation} \label{eq:yh9}
	d(\bar{w},\xi) = d((w_0,b_0),(x,y)) =  \begin{cases}
		\frac{\max\left\{0, y(w_0^\top x + b_0) \right\}}{\|w_0\|_*}, &w_0 \neq 0\\
		\infty, &w_0 = 0, \ yb_0 > 0\\
		0, &w_0 = 0, \ yb_0 \leq 0.
	\end{cases} 
\end{equation}

When $w_0 \neq 0$, we can define the following nonlinear transformation:
\begin{equation}\label{eq:nonlinear-transform}
	w \leftarrow \frac{t w_0}{\|w_0\|_*}, \quad b \leftarrow \frac{t b_0}{\|w_0\|_*},
\end{equation}
noting that $t = \|w\|_*$, and substitute \eqref{eq:yh9} into \eqref{eq:prob-DRO-transform} to obtain 
\begin{equation}\label{eq:prob-DRO-linear-transform}
	\inf_{w \in \bbR^d, b \in \bbR} \left\{ \epsilon \|w\|_* + \bbE_{\xi \sim P} \left[ \max\left\{ 0, 1 - \max\left\{0, y(w^\top x + b) \right\} \right\} \right] \right\}.
\end{equation}
In fact, the next result shows that this formulation is equivalent to
\eqref{eq:prob-DRO-transform} even when $w_0=0$. (Here, we use
the term ``$\delta$-optimal solution'' to refer to a point whose
objective value is within $\delta$ of the optimal objective value
for that problem.)
\begin{theorem}
	Under Assumption \ref{ass:linear-class},
	\eqref{eq:prob-DRO-linear-transform} is equivalent to
	\eqref{eq:prob-DRO-transform}.  Moreover, any $\delta$-optimal
	solution $(w,b)$ for \eqref{eq:prob-DRO-linear-transform} can be
	converted into a $\delta$-optimal solution $t$ and $\bar{w}=(w_0,b_0)$
	for \eqref{eq:prob-DRO-transform} as follows:
	\begin{equation} \label{eq:yd5}
		t=\|w\|_*, \quad (w_0,b_0) := \begin{cases}
			\left( \frac{w}{\|w\|_*}, \frac{b}{\|w\|_*} \right) &w \neq 0\\
			(0,b), &w = 0.
		\end{cases} 
	\end{equation}
\end{theorem}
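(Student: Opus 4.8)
The plan is to exploit the positive homogeneity built into the change of variables \eqref{eq:nonlinear-transform} and to split both problems according to whether the slope vector ($w_0$ for \eqref{eq:prob-DRO-transform}, $w$ for \eqref{eq:prob-DRO-linear-transform}) vanishes. The bulk of the argument is a direct substitution using the closed form \eqref{eq:yh9} for $d(\bar w,\xi)$; the only real obstacle is the degenerate slice ($w_0 = 0$ in \eqref{eq:prob-DRO-transform}, correspondingly $w = 0$ in \eqref{eq:prob-DRO-linear-transform}), where the image of $w=0$ under \eqref{eq:yd5} has $t = \|w\|_* = 0$, which lies on the boundary of the open ray $\{t>0\}$ over which \eqref{eq:prob-DRO-transform} is posed and so must be treated by a limiting argument.

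\textbf{Slope vector nonzero.} For $(w_0,b_0)$ with $w_0 \neq 0$ and any $t > 0$, set $(w,b)$ as in \eqref{eq:nonlinear-transform}. Then $\|w\|_* = (t/\|w_0\|_*)\|w_0\|_* = t$, and since $t/\|w_0\|_* > 0$, formula \eqref{eq:yh9} gives
\[
  t\,d(\bar w,\xi) \;=\; \frac{t}{\|w_0\|_*}\max\{0,\, y(\la w_0,x\ra + b_0)\} \;=\; \max\{0,\, y(\la w,x\ra + b)\}.
\]
Substituting this into the integrand of \eqref{eq:prob-DRO-transform}, together with $\epsilon t = \epsilon\|w\|_*$, reproduces exactly the objective of \eqref{eq:prob-DRO-linear-transform} at $(w,b)$. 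Conversely, given $w\neq 0$, the map \eqref{eq:yd5} returns $t = \|w\|_* > 0$ and $(w_0,b_0)$ with $\|w_0\|_* = 1$, which is carried back to $(w,b)$ by \eqref{eq:nonlinear-transform}; hence \eqref{eq:nonlinear-transform} is onto $\{w\neq 0\}\times\bbR$. Therefore the infimum of \eqref{eq:prob-DRO-transform} over $\{w_0\neq 0,\, t>0\}$ equals the infimum of \eqref{eq:prob-DRO-linear-transform} over $\{w\neq 0\}$, and, once the two optimal values are shown to coincide, any $\delta$-optimal $(w,b)$ with $w\neq 0$ is sent by \eqref{eq:yd5} to a point of equal objective value, hence $\delta$-optimal for \eqref{eq:prob-DRO-transform}.

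\textbf{Slope vector zero (the obstacle).} When $w_0 = 0$, \eqref{eq:yh9} forces $d(\bar w,\xi)\in\{0,\infty\}$, so for every $t>0$ the integrand $\max\{0,\,1 - t\,d(\bar w,\xi)\}$ equals $1$ when $yb_0\le 0$ and $0$ when $yb_0 > 0$; thus the objective of \eqref{eq:prob-DRO-transform} on this slice is $\epsilon t + \bbP_{\xi\sim P}[yb_0 \le 0]$, whose infimum over $t>0$ and $b_0\in\bbR$ is $\min\{\bbP_{\xi\sim P}[y=1],\, \bbP_{\xi\sim P}[y=-1]\}$ (let $t\downarrow 0$ and choose the sign of $b_0$). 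On the other side, at $w = 0$ the objective of \eqref{eq:prob-DRO-linear-transform} is $\bbE_{\xi\sim P}\big[\max\{0,\,1 - \max\{0, yb\}\}\big]$; a short case split on the sign of $b$ shows this is always at least $\bbP_{\xi\sim P}[yb\le 0]$ and equals it as soon as $|b|\ge 1$, so its infimum over $b$ is again $\min\{\bbP_{\xi\sim P}[y=1],\, \bbP_{\xi\sim P}[y=-1]\}$. The two degenerate-slice infima therefore coincide, and combining with the nonzero case gives $\inf\eqref{eq:prob-DRO-transform} = \min\{\,\inf_{w\neq 0}\eqref{eq:prob-DRO-linear-transform},\ \inf_{w=0}\eqref{eq:prob-DRO-linear-transform}\,\} = \inf\eqref{eq:prob-DRO-linear-transform}$, which is the asserted equivalence.

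\textbf{Conversion in the degenerate case.} If $(w,b)$ with $w = 0$ is $\delta$-optimal for \eqref{eq:prob-DRO-linear-transform}, the case split above shows its objective dominates $\bbP_{\xi\sim P}[yb\le 0]$, which is precisely the limit as $t\downarrow 0$ of the objective of \eqref{eq:prob-DRO-transform} at $\bar w = (0,b)$; since the two optimal values agree, the image $(t,\bar w) = (0,(0,b))$ produced by \eqref{eq:yd5} is $\delta$-optimal when \eqref{eq:prob-DRO-transform} is read at $t = 0$ via this right limit (equivalently, for any $\eta > 0$ one may instead take $t = \eta$ and obtain a genuinely feasible $(\delta+\epsilon\eta)$-optimal point). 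This reconciliation of $t = \|w\|_* = 0$ with the constraint $t>0$ is the single place where the proof is not a mechanical substitution; it goes through because $\epsilon t\to 0$ and because the integrand of \eqref{eq:prob-DRO-transform} is constant in $t$ on $\{\xi : d(\bar w,\xi) = \infty\}$.
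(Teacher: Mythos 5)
Your proof is correct, and it takes a mildly but genuinely different route from the paper's in the degenerate case. Both arguments hinge on the closed form \eqref{eq:yh9} and the change of variables \eqref{eq:nonlinear-transform}, and your treatment of the nonzero-slope slice (objective-preserving, surjective substitution, hence equal slice infima) is essentially the paper's two substitution steps compressed into one observation. The difference is in the slice $w_0=0$ (resp.\ $w=0$): the paper proves the two inequalities separately, and for the direction ``linear $\leq$ transform'' it constructs the divergent sequence $b^k = k b_0$ and invokes dominated convergence to drive the expected smoothed loss down to $\bbP_{\xi \sim P}[y b_0 \leq 0]$ as in \eqref{eq:yh5}--\eqref{eq:yh6}; you instead compute both degenerate-slice infima in closed form, noting that the value $\bbP_{\xi\sim P}[yb\le 0]$ is already \emph{attained} by the linear formulation at any finite $|b|\ge 1$, so both slice infima equal $\min\{\bbP_{\xi\sim P}[y=1],\bbP_{\xi\sim P}[y=-1]\}$ and no limiting sequence or DCT is needed. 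What your version buys is a shorter, more elementary argument and an explicit acknowledgment of a wrinkle the paper glosses over: when $w=0$, the conversion \eqref{eq:yd5} outputs $t=\|w\|_*=0$, which violates the constraint $t>0$ in \eqref{eq:prob-DRO-transform}, so $\delta$-optimality there must be read either through the right limit $t\downarrow 0$ or repaired by taking $t=\eta>0$ at the cost of an extra $\epsilon\eta$ in the optimality gap --- your patch is exactly the right one. What the paper's sequence-based argument buys is a statement that is pointwise in $(w_0,b_0)$ (inequality \eqref{eq:yd6} for every fixed $\bar w$), which is marginally more information than equality of slice infima, though for the theorem as stated your slice-wise computation plus the pointwise bound you do prove in the conversion step suffices.
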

\begin{proof}
	The first part of the proof shows that the optimal value of \eqref{eq:prob-DRO-linear-transform} is less than or equal to that of \eqref{eq:prob-DRO-transform}, while the second part proves the converse.
	
	To prove that the optimal value of \eqref{eq:prob-DRO-linear-transform} is less than or equal to that of \eqref{eq:prob-DRO-transform}, it suffices to show that given any $\bar{w}=(w_0,b_0)$, we can construct a sequence $\{(w^k,b^k)\}_{k \in \bbN}$ such that
	\begin{align}
		\nonumber
		& \epsilon \|w^k\|_* + \bbE_{\xi \sim P} \left[ \max\left\{ 0, 1 - \max\left\{0, y((w^k)^\top x + b^k) \right\} \right\} \right] \\
		\label{eq:yd6}
		& \quad\quad \to \inf_{t \geq 0} \left\{ \epsilon t + \bbE_{\xi \sim P} \left[ \max\left\{ 0, 1 - t d(\bar{w},\xi)\right\} \right] \right\}. 
	\end{align} 
	Consider first the case of $w_0 \neq 0$, and let $t_k>0$ be a sequence
	such that
	\begin{align}
		\nonumber
		& \lim_{k \to \infty} \, \left\{ \epsilon t_k + \bbE_{\xi \sim P} \left[ \max\left\{ 0, 1 - t_k d(\bar{w},\xi)\right\} \right] \right\} \\
		\label{eq:jt2}
		& = \inf_{t > 0} \left\{ \epsilon t + \bbE_{\xi \sim P} \left[ \max\left\{ 0, 1 - t d(\bar{w},\xi)\right\} \right] \right\}.
	\end{align}
	Following \eqref{eq:nonlinear-transform}, we define $w^k := t_k
	w_0/\|w_0\|_*$ and $b^k := t_k b_0 / \|w_0 \|_*$.
	We then have from \eqref{eq:yh9} that
	\begin{align*}
		\max \{ 0, y ( (w^k)^\top x+b^k) \}
		&=
		\max \left\{ 0, t_k \frac{y(w_0^\top x + b_0)}{\| w_0 \|_*} \right\}  \\
		&= t_k \frac{\max \{ 0, y(w_0^\top x + b_0)}{\| w_0 \|_*}
		= t_k d(\bar{w},\xi).
	\end{align*}
	Thus, the left-hand sides of \eqref{eq:jt2} and \eqref{eq:yd6} are
	equivalent, so \eqref{eq:yd6} holds.
	
	Next, we consider the case of $\bar{w}=(w_0,b_0)$ with $w_0 =
	0$.
	Note that $d(\bar{w},\xi) = 0$ when $y b_0 \leq
	0$ and $d(\bar{w},\xi) = \infty$ when $y b_0 > 0$, we have
	$\max\left\{ 0, 1 - t d(\bar{w},\xi) \right\} = \bm{1}(yb \leq 0)$
	for all $t>0$, where $\bm{1}(\cdot)$ has the value $1$ when its
	argument is true and $0$ otherwise. Thus, we have 
	\begin{align}
		\nonumber
		& \inf_{t>0} \left\{ \epsilon t + \bbE_{\xi \sim P} \left[ \max\left\{ 0, 1 - t d(\bar{w},\xi)\right\} \right] \right\} \\
		\label{eq:yh5}
		& = \bbP_{\xi \sim P} [y b_0 \leq 0] = \begin{cases} \bbP_{\xi \sim P} [y \leq 0], &b_0 > 0\\
			1, & b_0=0\\
			\bbP_{\xi \sim P}[y \geq 0], &b_0 < 0.
		\end{cases} 
	\end{align}
	Now choose $w^k=0$ and $b^k = k b_0$ for $k=1,2,\dotsc$. We then have
	\begin{align}
		\nonumber
		& \max\left\{ 0, 1 - \max\left\{0, y((w^k)^\top x + b^k) \right\} \right\} \\
		\nonumber
		&= \max\left\{ 0, 1 - \max\left\{0, k y b_0 \right\} \right\}\\
		\nonumber
		&= \max\left\{ 0, 1 - \max\left\{0, k y b_0 \right\} \right\} \bm{1}(b_0 > 0) + \bm{1}(b_0=0) \\
		\nonumber
		& \quad\quad + \max\left\{ 0, 1 - \max\left\{0, k y b_0 \right\} \right\}\bm{1}(b_0 < 0)\\
		\nonumber
		&= \left( \max\{0,1 - k y b_0 \} \bm{1}(y > 0) + \bm{1}(y \leq 0) \right) \bm{1}(b_0 > 0) + \bm{1}(b_0=0) \\
		\label{eq:yd8}
		& \quad\quad + \left( \bm{1}(y \geq 0) + \max\{0,1 - k y b_0 \} \bm{1}(y < 0) \right) \bm{1}(b_0 < 0).
	\end{align}
	Now notice that for the first and last terms in this last expression,
	we have by taking limits as $k \to \infty$ that
	\begin{align*}
		\left( \max\{0,1 - y b_0 k \} \bm{1}(y > 0) + \bm{1}(y \leq 0) \right) \bm{1}(b_0 > 0) & \to \bm{1}(y \leq 0) \bm{1}(b_0 > 0), \\
		\left( \bm{1}(y \geq 0) + \max\{0,1 - y b_0 k \} \bm{1}(y < 0) \right) \bm{1}(b_0 < 0) & \to \bm{1}(y \geq 0) \bm{1}(b_0 < 0),
	\end{align*}
	both pointwise, and everything is bounded by $1$.  Therefore, by the
	dominated convergence theorem, we have from \eqref{eq:yd8} that
	\begin{equation} \label{eq:yh6}
		\bbE_{\xi \sim P} \left[ \max\left\{ 0, 1 - \max\left\{0, y((w^k)^\top x + b^k) \right\} \right\} \right] \to \begin{cases} \bbP_{\xi \sim P} [y \leq 0], &b_0 > 0\\
			1, & b_0=0\\
			\bbP_{\xi \sim P}[y \geq 0], &b_0 < 0.
		\end{cases}
	\end{equation}
	By comparing \eqref{eq:yh5} with \eqref{eq:yh6}, we see that
	\eqref{eq:yd6} holds for the case of $w_0=0$ too. This completes our
	proof that the optimal value of \eqref{eq:prob-DRO-linear-transform}
	is less than or equal to that of \eqref{eq:prob-DRO-transform}.
	
	We now prove the converse, that the optimal value of
	\eqref{eq:prob-DRO-transform} is less than or equal to that of
	\eqref{eq:prob-DRO-linear-transform}. Given $w$ and $b$, we show
	that there exists $\bar{w} = (w_0,b_0)$ such that
	\begin{align} \nonumber
		& \epsilon \|w\|_* + \bbE_{\xi \sim P} \left[ \max\left\{ 0, 1 - \max\left\{0, y(w^\top x + b) \right\} \right\} \right] \\
		\label{eq:yh1}
		& \geq \inf_{t>0} \left\{ \epsilon t + \bbE_{\xi \sim P} \left[ \max\left\{ 0, 1 - t d(\bar{w},\xi)\right\} \right] \right\}. 
	\end{align}
	When $w \neq 0$, we take $t=\|w\|_*$, $w_0 = w/\|w\|_* = w/t$, and
	$b_0 = b/\|w\|_* = b/t$, and use \eqref{eq:yh9} to obtain \eqref{eq:yh1}.
	
	Specifically, we have
	\begin{align*}
		& \epsilon \|w\|_* - \bbE_{\xi \sim P} \left[ \max\left\{ 0, 1 - \max\left\{0, y(w^\top x + b) \right\} \right\} \right] \\
		&= \epsilon t - \bbE_{\xi \sim P} \left[ \max\left\{ 0, 1 - \max\left\{0,  \frac{ty (w_0^\top x + b_0)}{\|w_0\|_*}  \right\} \right\} \right] \\
		&= \epsilon t - \bbE_{\xi \sim P} \left[ \max\left\{ 0, 1 - t\frac{\max\{0,y (w_0^\top x + b_0)\}}{\|w_0\|_*}  \right\} \right] \\
		&= \epsilon t -  \bbE_{\xi \sim P} \left[ \max\left\{ 0, 1-t d(\bar{w},\xi) \right\} \right] \\
		& \ge \inf_{t>0} \, \left\{  \epsilon t -  \bbE_{\xi \sim P} \left[ \max\left\{ 0, 1-t d(\bar{w},\xi) \right\} \right] \right\},
	\end{align*}
	as claimed.
	
	For the case of $w = 0$, we set $b_0=b$ and obtain
	\begin{align*}
		& \epsilon \|w\|_* + \bbE_{\xi \sim P} \left[ \max\left\{ 0, 1 - \max\left\{ 0, y(w^\top x + b) \right\} \right\} \right] \\
		&= \bbE_{\xi \sim P} \left[ \max\left\{ 0, 1 - \max\left\{ 0, y b \right\} \right\} \right]\\
		&\geq \bbP_{\xi \sim P} \left[ y b \leq 0 \right]
		=\bbP_{\xi \sim P} \left[ y b_0 \leq 0 \right].
	\end{align*}
	By comparing with \eqref{eq:yh5}, we see that \eqref{eq:yh1} holds in this case too. 
	Hence, the objective value of \eqref{eq:prob-DRO-transform} is less than or equal to that of \eqref{eq:prob-DRO-linear-transform}.
	
	For the final claim, we note that the optimal values of the problems
	\eqref{eq:prob-DRO-transform} and \eqref{eq:prob-DRO-linear-transform}
	are equal and, from the second part of the proof above, the transformation \eqref{eq:yd5} gives a solution $t$
	and $\bar{w}=(w_0,b_0)$ whose objective in
	\eqref{eq:prob-DRO-transform} is at most that of $(w,b)$ in
	\eqref{eq:prob-DRO-linear-transform}.  Thus, whenever $(w,b)$ is
	$\delta$-optimal for \eqref{eq:prob-DRO-linear-transform}, then the
	given values of $t$ and $\bar{w}$ are $\delta$-optimal for
	\eqref{eq:prob-DRO-transform}.
\end{proof}

The formulation \eqref{eq:prob-DRO-linear-transform} can be written as the regularized risk minimization problem
\begin{equation}\label{eq:prob-ramploss}
	\inf_{w,b} \, \left\{ \epsilon \|w\|_* + \bbE_{\xi \sim P} \left[ L_R(y(w^\top x + b)) \right] \right\},
\end{equation}
where $L_R$ is the ramp loss function defined by
\begin{equation} \label{eq:ramploss}
	L_R(r) := \max\left\{0,1-r\right\} - \max\{0,-r\} = \begin{cases}
		1, &r \leq 0\\
		1-r, &0 < r < 1\\
		0, &r \geq 1.
	\end{cases}
\end{equation}
Here, the risk of a solution $(w,b)$ is defined to be the expected
ramp loss $\bbE_{\xi \sim P} \left[ L_R(y(w^\top x + b)) \right]$,
and the regularization term $\|w\|_*$ is defined via the norm that is dual to the one introduced in Assumption \ref{ass:linear-class}.

\begin{remark}\label{rem:reformulation-comparison}
	The formulation \eqref{eq:prob-DRO-linear-transform} is
	reminiscent of \citet[Proposition~2]{KuhnEtAl2019tutorial}
	(see also references therein), where other distributionally
	robust risk minimization results were explored, except the
	risk was defined via the expectation of a \emph{continuous and
		convex} loss function, and the reformulation was shown to be
	the regularized risk defined on \emph{the same} loss function.
	In contrast, the risk in \eqref{eq:prob-DRO} is defined as the
	expectation of the \emph{discontinuous and non-convex} $0$-$1$
	loss function $\bm{1}(y (w^\top x + b) \leq 0)$, and the
	resulting reformulation uses the ramp loss $L_R$, a continuous
	but still nonconvex approximation of the $0$-$1$ loss.  \epr
\end{remark}

\begin{remark}\label{rem:ramp-loss}
	The ramp loss $L_R$ has been studied in the context of
	classification by \citet{ShenEtAl2003}, \citet{WuLiu2007}, and
	\citet{CollobertEtAl2006} to find classifiers that are
	robust to outliers.  The reformulation
	\eqref{eq:prob-ramploss} suggests that the ramp loss together
	with a regularization term may have the additional benefit of
	also encouraging robustness to adversarial perturbations in
	the data.  In previous work, there has been several variants
	of ramp loss with different slopes and break points.  The
	formulation \eqref{eq:prob-ramploss-empirical} suggests a
	principled form for ramp loss in classification
	problems.  \epr
\end{remark}

\begin{remark}\label{rem:kernel}
Instead of considering $(w,b) \in \bbR^d \times \bbR$, we may consider non-linear classifiers via kernels (and the associated reproducing kernel Hilbert space-based classifiers). \citet[Section 3.3]{Shafieezadeh-AbadehEtAl2019} examined kernelization of linear classifiers in the context of different Wasserstein DRO-based classification models. They provide approximation results relating the well-known kernel trick to these problems under some assumptions on the kernel $k$. Their results can easily be applied to the ramp loss reformulation \eqref{eq:prob-ramploss} as well.
\epr
\end{remark}

In practice, the distribution $P$ in \eqref{eq:prob-ramploss} is taken to be the empirical distribution $P_n$ on given data points $\{\xi_i\}_{i \in [n]}$, so \eqref{eq:prob-ramploss} becomes
\begin{equation}\label{eq:prob-ramploss-empirical}
	\inf_{w,b} \,  \epsilon \|w\|_* + \frac{1}{n} \sum_{i \in [n]} L_R(y_i(w^\top x_i + b)).
\end{equation}
This problem can be formulated as a mixed-integer program (MIP) and solved to global optimality using off-the-shelf software; see \citep{Brooks2011,BelottiEtAl2016}. 
Despite significant advances in the computational state of the art, the scalability of MIP-based approaches with training set size $m$ remains limited. 
Thus, we consider here an alternative approach based on smooth approximation of $L_R$ and continuous optimization algorithms.

Henceforth, we consider $\|\cdot\| = \|\cdot\|_* = \|\cdot\|_2$ to be the Euclidean norm.
For a given $\epsilon$ in \eqref{eq:prob-ramploss-empirical}, there exists $\beps \ge 0$ such that a strong local minimizer $(w(\epsilon),b(\epsilon))$ of  \eqref{eq:prob-ramploss-empirical} with $w(\epsilon) \neq 0$ is also a strong local minimizer of the following problem:
\begin{equation}\label{eq:prob-ramploss-squarenorm}
	\min_{w,b} \, \frac12 \beps \|w\|^2  + \frac{1}{n} \sum_{i\in [n]} L_R(y_i (w^\top x_i + b) ),
\end{equation}
where we define $\beps = \epsilon/\| w(\epsilon) \|$. In the following
result, we use the notation
\[
g(w,b) := \frac{1}{n} \sum_{i\in [n]} L_R(y_i (w^\top x_i + b) ),
\]
for the summation term in \eqref{eq:prob-ramploss-empirical} and
\eqref{eq:prob-ramploss-squarenorm}.

\begin{theorem} \label{th:w2}
	Suppose that for some $\epsilon>0$, there exists a local minimizer
	$(w(\epsilon),b(\epsilon))$ of \eqref{eq:prob-ramploss-empirical} with
	$w(\epsilon) \neq 0$ and a constant $\tau>0$ such that for all
	$(v,\beta) \in \R^d \times \R$ sufficiently small, we have
	\begin{equation} \label{eq:slm}
		\epsilon \|w(\epsilon)\|+ g(w(\epsilon),b(\epsilon)) + \tau \|v\|^2 
		\le \epsilon \|w(\epsilon)+v\| +  g(w(\epsilon)+v,b(\epsilon)+\beta).
	\end{equation}
	Then for $\beps  = \epsilon/\| w(\epsilon) \|$, $w(\epsilon)$ is also a strong local minimizer of \eqref{eq:prob-ramploss-squarenorm}, in the sense that
	\[
	\frac12 \beps \|w(\epsilon)\|^2 + g(w(\epsilon),b(\epsilon)) + \frac{\tau}{2} \|v\|^2 
	\le \frac12 \beps \| w(\epsilon)+v\|^2 +  g(w(\epsilon)+v,b(\epsilon)+\beta),
	\]
	for all $(v,\beta)$ sufficiently small.
\end{theorem}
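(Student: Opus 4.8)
The plan is to derive the claimed quadratic-growth inequality for \eqref{eq:prob-ramploss-squarenorm} directly from the hypothesis \eqref{eq:slm} about \eqref{eq:prob-ramploss-empirical}, by comparing the two regularizers $w \mapsto \epsilon\|w\|$ and $w \mapsto \tfrac12\beps\|w\|^2$ near $w(\epsilon)$. Write $w^* := w(\epsilon)$, $b^* := b(\epsilon)$, and $r := \|w(\epsilon)\| > 0$, so that $\beps = \epsilon/r$ and $\beps r = \epsilon$. The choice $\beps = \epsilon/\|w(\epsilon)\|$ is exactly what makes the two regularizers agree at $w^*$ up to an additive constant: $\tfrac12\beps\|w^*\|^2 = \tfrac12\beps r^2 = \tfrac12\epsilon r = \epsilon\|w^*\| - \tfrac12\epsilon r$.

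The only ingredient beyond \eqref{eq:slm} is a one-dimensional ``tangent line'' estimate. Completing the square, for every $s \ge 0$ we have $\tfrac12\beps s^2 - \epsilon s = \tfrac12\beps(s-r)^2 - \tfrac12\beps r^2 \ge -\tfrac12\epsilon r$; equivalently, the convex parabola $s \mapsto \tfrac12\beps s^2$ lies above its tangent line $s \mapsto \epsilon s - \tfrac12\epsilon r$ at $s = r$. Applying this with $s = \|w^*+v\|$ gives $\tfrac12\beps\|w^*+v\|^2 \ge \epsilon\|w^*+v\| - \tfrac12\epsilon r$ for all $v$. Combining with \eqref{eq:slm}, which holds for $(v,\beta)$ in a neighborhood of the origin, I would obtain for all such $(v,\beta)$ that
\[
\begin{aligned}
\tfrac12\beps\|w^*+v\|^2 + g(w^*+v,b^*+\beta)
&\ge \epsilon\|w^*+v\| + g(w^*+v,b^*+\beta) - \tfrac12\epsilon r\\
&\ge \epsilon\|w^*\| + g(w^*,b^*) + \tau\|v\|^2 - \tfrac12\epsilon r\\
&= \tfrac12\beps\|w^*\|^2 + g(w^*,b^*) + \tau\|v\|^2,
\end{aligned}
\]
where the middle step is \eqref{eq:slm} and the last step uses the constant-matching identity above. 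Since $\tau\|v\|^2 \ge \tfrac{\tau}{2}\|v\|^2$, this is precisely the asserted strong-local-minimizer bound for \eqref{eq:prob-ramploss-squarenorm} (indeed with the slightly larger growth constant $\tau$).

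There is no substantial obstacle. The two things to keep straight are: (i) the rescaling $\beps = \epsilon/\|w(\epsilon)\|$ must be used so that the tangent line to $\tfrac12\beps s^2$ at $s = \|w(\epsilon)\|$ has slope exactly $\epsilon$ --- this is what lets the \emph{same} point $w(\epsilon)$ serve as the critical point for both regularizers; and (ii) the growth term in \eqref{eq:slm} controls only the $v$-component and not $\beta$ (moving $b$ alone may leave $g$ flat), so the bound we recover is likewise only in $\|v\|$, which is exactly what the statement asks for. No smallness of $(v,\beta)$ is needed for the tangent-line step, so the neighborhood in the conclusion can be taken to be the same as the one in \eqref{eq:slm}.
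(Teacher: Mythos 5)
Your proof is correct, but it takes a genuinely different (and cleaner) route than the paper. The paper expands $\epsilon\|w(\epsilon)+v\|$ in a second-order Taylor series about $w(\epsilon)$, bounds the Hessian term $\tfrac{\epsilon}{\|w\|}v^T\bigl(I - \tfrac{ww^T}{w^Tw}\bigr)v$ by $\tfrac{\epsilon}{\|w\|}\|v\|^2$, rewrites the result as $\tfrac12\epsilon\|w\| + \tfrac12\tfrac{\epsilon}{\|w\|}\|w+v\|^2 + O(\|v\|^3)$, and then shrinks the neighborhood so that the cubic remainder is absorbed into $\tfrac{\tau}{2}\|v\|^2$ --- this absorption is exactly where the degradation from $\tau$ to $\tau/2$ comes from. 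You instead use the exact, global tangent-line (equivalently AM--GM) inequality $\tfrac12\beps s^2 \ge \epsilon s - \tfrac12\epsilon\|w(\epsilon)\|$ for $s = \|w(\epsilon)+v\|$, valid for all $v$, with equality at $v=0$; combined with \eqref{eq:slm} and the identity $\epsilon\|w(\epsilon)\| - \tfrac12\epsilon\|w(\epsilon)\| = \tfrac12\beps\|w(\epsilon)\|^2$, this gives the conclusion directly. What your approach buys: no Taylor expansion or remainder bookkeeping, the quadratic-growth constant $\tau$ rather than $\tau/2$ (which of course implies the stated bound), and the conclusion on the \emph{same} neighborhood where \eqref{eq:slm} holds, rather than a possibly smaller one. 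Your observation (ii), that the growth is controlled only in $\|v\|$ and not in $\beta$, matches the statement and is handled identically in both arguments.
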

\begin{proof}
	For simplicity of notation, we denote $(w,b) = (w(\epsilon),b(\epsilon))$ throughout the proof.
	
	From a Taylor-series approximation of the term $\|w+v \|$, we have
	\begin{align*}
		& \epsilon \|w\| + g(w,b) + \tau \|v\|^2 \\
		& \le \epsilon \|w+v\| + g(w+v,b+\beta) \\
		& =
		\left[ \epsilon \|w\| + \frac{\epsilon}{\|w\|}w^Tv + \frac12 \frac{\epsilon}{\|w\|}v^T \left( I - \frac{ww^T}{w^Tw} \right)v \right]\\
		&\quad + O(\|v\|^3)  + g(w+v,b+\beta) \\
		& \le 
		\left[ \epsilon \|w\| + \frac{\epsilon}{\|w\|}w^Tv + \frac12 \frac{\epsilon}{\|w\|}v^T v \right] + O(\|v\|^3)  + g(w+v,b+\beta) \\
		& = \frac12 \epsilon \|w\| + \frac12 \frac{\epsilon}{\|w\|} (w+v)^T(w+v) + O(\|v\|^3) + g(w+v,b+\beta).
	\end{align*}
	By rearranging this expression, and taking $v$ small enough that the $O(\|v\|^3)$ term is dominated by $(\tau/2) \|v\|^2$, we have
	\[
	\frac12 \epsilon \|w\| + g(w,b) + \frac{\tau}{2} \|v\|^2 \le \frac12 \frac{\epsilon}{\|w\|} (w+v)^T(w+v) + g(w+v,b+\beta).
	\]
	By substituting $\beps = \epsilon/\| w \|$, we obtain the result.
\end{proof}

We note that the condition \eqref{eq:slm} is satisfied when the local
minimizer satisfies a second-order sufficient condition.

To construct a smooth approximation for $L_R(r) = \max\{0,1-r\} -
\max\{0,-r\}$, we follow \citet{BeckTeboulle2012} and approximate the
two max-terms with the softmax operation: For small $\sigma > 0$ and
scalars $\alpha$ and $\beta$,
\[\max\{\alpha,\beta\} \approx \sigma \log\left( \exp\left( \frac{\alpha}{\sigma} \right) + \exp\left( \frac{\beta}{\sigma} \right) \right).\]
Thus, we can approximate $L_R(r)$ by the smooth function $\psisig(r)$,
parametrized by $\sigma>0$ and defined as follows:
\begin{align} 
	\nonumber
	\psisig(r) & := \sigma \log\left( 1 +
	\exp\left( \frac{1-r}{\sigma} \right) \right) - \sigma \log\left(
	1 + \exp\left( -\frac{r}{\sigma} \right) \right) \\
	\label{eq:psisig}
	& = \sigma
	\log\left( \frac{\exp(1/\sigma) + \exp(r/\sigma)}{1 +
		\exp(r/\sigma)} \right).
\end{align}
For any $r \in \R$, we have that
$\lim_{\sigma \downarrow 0} \psisig(r) = L_R(r)$, so the approximation
\eqref{eq:psisig} becomes increasingly accurate as
$\sigma \downarrow 0$. 

By substituting the approximation $\psisig$ in \eqref{eq:psisig} into \eqref{eq:prob-ramploss-squarenorm},
we obtain
\begin{equation}\label{eq:prob-psisig}
	\min_{w,b} \, \left\{ F_{\beps,\sigma}(w) := \frac12 \beps \|w\|^2 + \frac{1}{n} \sum_{i\in [n]} \psisig( y_i (w^\top x_i + b) ) \right\}.  
\end{equation}
This is a smooth nonlinear optimization problem that is nonconvex because
$\psisig''(r) <0$ for $r<1/2$ and $\psisig''(r) >0$ for $r>1/2$.
It can be minimized by any standard method for smooth nonconvex optimization. 
Stochastic gradient approaches with minibatching are best suited to cases in which $n$ is very large. For problems of modest size, methods based on full gradient evaluations are appropriate, such as nonlinear conjugate gradient methods (see \cite[Chapter~5]{NocW06} or L-BFGS \cite{LiuN89}.
Subsampled Newton methods (see for example \cite{bollapragada2019exact,xu2016sub}), in which the gradient is approximated by averaging over a subset of the $n$ terms in the summation in \eqref{eq:prob-psisig} and the Hessian is approximated over a typically smaller subset, may also be appropriate. 
It is well known that these methods are highly unlikely to converge to saddle points,
but they may well converge to local minima of the nonconvex function that are not global minima. 
We show in the next section that, empirically, the global minimum is often found, even for problems involving highly nonseparable data. 
In fact, as  proved in Section~\ref{sec:localmin}, under certain (strong) assumptions on the data, spurious local solutions do not exist.

\section{Numerical Experiments}\label{sec:numerical}

We report on computational tests on the linear classification problem
described above, for separable and nonseparable data sets.  We observe
that on separable data, despite the nonconvexity of the problem, the
the smoothed formulation \eqref{eq:prob-psisig} appears to have a
unique local minimizer, found reliably by standard procedures for
smooth nonlinear optimization, for sufficiently large training set
size $n$.  Moreover, the classifier obtained from the ramp loss
formulation is remarkably robust to adversarial perturbations of the
training data: A solution whose classification performance is similar
to the original separating hyperplane is frequently identified even
when a large fraction of the labels from the separable data set are
flipped randomly to incorrect values and when the incorrectly labelled
points are moved further away from the decision boundary.

Our results are intended to be ``proof of concept" in that they both
motivate and support our analysis in Section~\ref{sec:localmin} that
the minimizer of the regularized risk minimization problem
\eqref{eq:prob-ramploss} is the only point satisfying even first-order
conditions, and that the ramp loss can identify classifiers that are robust to perturbations.  Our analysis in Section~\ref{sec:localmin} focuses
  on separable data sets and spherically symmetric distributions, but
  we test here for a non-spherically-symmetric distribution too, and
  also experiment with nonseparable data sets, which are discussed
  only briefly in Section~\ref{sec:localmin}.

\subsection{Test problems, formulation details, and optimization algorithms}\label{sec:numerical-test}

We generate three binary classification problems in which the training
data $(x,y) \sim P$ is such that $x \sim P_x$, where $P_x$ one of
three possible distributions over $\bbR^d$: (1) $N(0,10 I)$; (2) $N(0,
\Sigma)$ where $\Sigma$ is a positive definite matrix with random
orientation whose eigenvalues are log-uniformly distributed in
$[1,10]$; (3) a Laplace distribution with zero mean and covariance
matrix $10 I$. For each $x$, we choose the label $y = \sign\left(
(w^*)^\top x \right)$ where the ``canonical separating hyperplane''
$w^* = (1,0,0,\dotsc,0)$, that is, $y$ is determined by the sign of
the first component of $x$.

We modify this separable data set to obtain nonseparable data sets as
follows. First, we choose a random fraction $\kappa$ of training points
$(x_i,y_i)$ to modify. Within this fraction, we select the points for
which the first component $(x_i)_1$ of $x_i$ is positive, and ``flip''
the label $y_i$ from $+1$ to $-1$. Second, we replace $(x_i)_1$ by
$2(x_i)_1+1$ for these points $i$, moving them further from the
canonical separating hyperplane. In our experiments, we set $\kappa$
to the values $.1$, $.2$ and $.3$.  Since the points $(x_i,y_i)$ for
which $(x_i)_1<0$ are not changed, and $(x_i)_1 < 0$ with probability
$1/2$ for $P_x$ above, the total fractions of of training points that
are altered by this process are (approximately) $.05$, $.1$ and $.15$,
respectively.

We report on computations with the formulation \eqref{eq:prob-psisig}
with $\sigma=.05$ and $\beps=.1$, and various values of $n$.
(The results are not particularly sensitive to the choice of $\sigma$,
except that smaller values yield functions that are less smooth and
thus require more iterations to minimize. The value $\beps=.1$ tends
to yield  solutions $(w,b)$ for which $\|w\| = O(1)$.)

We tried various smooth unconstrained optimization solvers for the
resulting smooth optimization problem --- the PR$+$ version of
nonlinear conjugate gradient~\cite[Chapter~5]{NocW06}, the L-BFGS
method \cite{LiuN89}, and Newton's method with diagonal damping ---
all in conjunction with a line-search procedure that ensures weak
Wolfe conditions.  These methods behaved in a roughly similar manner
and all were effective in finding minimizers.  Our tables report
results obtained only with nonlinear conjugate gradient.

\subsection{Unique local minimizer} \label{sec:numerical-performance}

We performed tests on the separable data sets generated from multiple
instances of the three distributions described above, each from
multiple starting points. Our goal was to determine which instances
appear to have a unique local solution: If the optimization algorithm
converges to the same point from a wide variety of starting points, we
take this observation as empirical evidence that the instance has a
single local minimizer, which is therefore the global minimizer. In
particular, for each distribution and several values of dimension $d$,
we seek the approximate smallest training set size $n$ for which all
instances of that distribution with that dimension appear to have a
single local minimizer. In our experiment, we try $d=5,10,20,40$,
values of $n$ of the form $100 \times 2^i$ for $i=0,1,2,\dotsc$, and
10 different instances generated randomly from each distribution. We
solved \eqref{eq:prob-psisig} for each instance for the hyperplane
$(w,b)$, starting from 20 random points on the unit ball in
$\bbR^{d+1}$. If the same solution is obtained for all 20 starting
points, and this event occurs on all 10 instances, we declare the
corresponding value of $n$ to be the value that yields a unique
minimizer for this distribution and this value of $d$.

\begin{table}[]
	\centering
	\begin{tabular}{l|r|r|r|r}
          \backslashbox{Distribution}{$d$} & 5 & 10 & 20 & 40 \\ \hline
          $N(0,10I)$ & 800 & 1600 & 1600 & 6400 \\
          $N(0,\Sigma)$ & 1600 & 1600 & 3200 & 6400 \\
          $\mbox{\rm Laplace}(0,10I)$ & 1600 & 1600 & 6400 &12800 
	\end{tabular}
	\caption{Approximate training set size $n$ for a problem with
          dimension $d$ to have a single (global) minimizer,
          empirically determined. \label{tab:dvsn}}
\end{table}

The values so obtained are reported in Table~\ref{tab:dvsn}.  We note
that $n$ grows only slowly with $d$, at an approximately linear
rate. These results suggest not only that the underlying loss
\eqref{eq:prob-ramploss-squarenorm} has a single local minimizer,
despite its nonconvexity, but also that this behavior can be observed
for modest training set sizes $n$ in the empirical problem
\eqref{eq:prob-psisig}.

\subsection{Adversarial robustness}\label{sec:numerical-adversarial}

We now explore robustness of classifiers to adversarial perturbations
via the nonseparable data. Note that the ``flipping'' of a point
$(x_i,y_i)$ described in \cref{sec:numerical-test} can be interpreted
as an adversarial perturbation. The motivation behind this is that for
a positively labelled point $(x_i)_1 > 0$, we imagine that the
``correct'' side of the canonical hyperplane is the negative side
$(w^*)^\top x_i \leq 0$ (hence we set $y=-1)$ but we perturb the
$(x_i)_1$ to the ``incorrect'' positive side $(w^*)^\top x_i \geq
0$. For these experiments we fix the dimension to the value $d=10$ and
use $n=10,000$ training points in \eqref{eq:prob-psisig}.  For
comparison, we also solve a model identical to \eqref{eq:prob-psisig}
except that the smoothed ramp-loss function $\psisig$ is replaced by a
smoothed version of the familiar hinge-loss function $L_H(r) =
\max\{0,1-r\}$, which is $\sigma \log\left(1 +
\exp((1-r)/\sigma)\right)$, where again $\sigma=.05$. (Note that the
latter formulation is convex, unlike \eqref{eq:prob-psisig}.)

We measure the performance of the classifier $(w,b)$ obtained from
\eqref{eq:prob-psisig} for various values of the flip fraction
$\kappa$, and the performance of the classifier obtained from the
corresponding empirical hinge-loss objective, in two different ways.
For both methods, we generated 20 random instances of the problem from
each of the three distributions, and measure the outcomes using Monte
Carlo sampling from $n_{\text{test}}=100,000$ test points drawn from
the original separable distribution $P$. In the first method, we
simply calculate the fraction of test points that are misclassified by
$(w,b)$, and calculate the mean and standard deviation of this
quantity over the 20 instances, for each value of $\kappa$.  In the
second method, following \cref{sec:margin-cvar}, we measure the
adversarial robustness of a classifier $(w,b)$ via the conditional
value-at-risk $\CVaR_{\rho}(d((w,b),(x,y));P)$ of the distance
function $d((w,b),(x,y)) = \max\left\{ 0, \frac{y (w^\top x +
  b)}{\|w\|_2} \right\}$ according to \eqref{eq:def.cvar}.  The
empirical value of $\CVaR_{\rho}$ is calculated over the
$n_{\text{test}}=100,000$ test points. A higher value of
$\CVaR_{\rho}$ value means more robustness to perturbations, as the
distances to the classifying hyperplane are larger. Each $\rho \in
[0,1]$ gives a different risk measure, where smaller $\rho$ means that
we focus more on the lower tail of the distribution of $d((w,b),(x,y))$.
We thus compute $\CVaR_{\rho}$ over a range of values of $\rho$ and
compare the \emph{CVaR curves} obtained in this way.

\crefrange{fig:adv-testerror}{fig:adv-cvar} plot our results comparing
ramp and hinge loss for the three distributions. As we increase the
fraction of points flipped, \cref{fig:adv-testerror} shows that the
test error of hinge loss degrades severely, while the test error of
ramp loss is much more stable.  \cref{fig:adv-cvar} shows that the
ramp loss CVaR curve always lies on or above the hinge loss CVaR curve, with
the gap increasing as the fraction of flips increases. These results
provide convincing evidence that the ramp loss leads to more robust
classifiers than the hinge loss.
\newcommand{\wid}{0.36\textwidth}
\newcommand{\hgt}{0.2}
\newcommand{\scalefactor}{0.85}
\newcommand{\ybound}{0.5}
\pgfplotsset{footnotesize,}
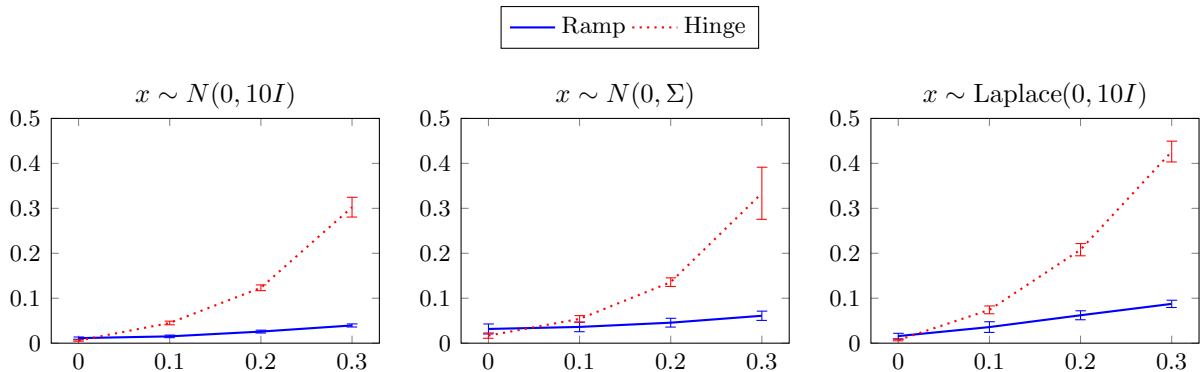
\begin{figure}[htb]
	\centering
	\begin{tikzpicture}
		\begin{axis}[
			at={(0.0\linewidth, 0)},
			width=\wid,height=\hgt\textheight,
			title={$x \sim N(0,10I)$},
			xtick=data,
			yticklabel style={/pgf/number format/fixed},
			yticklabel style={/pgf/number format/fixed},
			ymin=0,ymax=\ybound,
			legend style={at={(0.57\textwidth,1.5)}, legend columns=-1},
			error bars/y dir=both,
			error bars/y fixed,
			error bars/y explicit,
			error bars/error bar style={solid},
			]
			\addplot+[thick, mark=none, color=blue] table[x=frac, y=rampavg_dist1, y error=rampstd_dist1, col sep=comma] {adv_testerror.csv};
			\addplot+[thick, mark=none, color=red, dotted] table[x=frac, y=hingeavg_dist1, y error=hingestd_dist1, col sep=comma] {adv_testerror.csv};
			\legend{Ramp,Hinge}
		\end{axis}
		\begin{axis}[
			at={(0.33\linewidth, 0)},
			width=\wid,height=\hgt\textheight,
			title={$x \sim N(0,\Sigma)$},
			xtick=data,
			yticklabel style={/pgf/number format/fixed},
			yticklabel style={/pgf/number format/fixed},
			ymin=0,ymax=\ybound,
			legend style={legend pos=north west},
			error bars/y dir=both,
			error bars/y fixed,
			error bars/y explicit,
			error bars/error bar style={solid},
			]
			\addplot+[thick, mark=none, color=blue] table[x=frac, y=rampavg_dist2, y error=rampstd_dist2, col sep=comma] {adv_testerror.csv};
			\addplot+[thick, mark=none, color=red, dotted] table[x=frac, y=hingeavg_dist2, y error=hingestd_dist2, col sep=comma] {adv_testerror.csv};
		\end{axis}
		\begin{axis}[
			at={(0.66\linewidth, 0)},
			width=\wid,height=\hgt\textheight,
			title={$x \sim \text{Laplace}(0,10I)$},
			xtick=data,
			yticklabel style={/pgf/number format/fixed},
			yticklabel style={/pgf/number format/fixed},
			ymin=0,ymax=\ybound,
			error bars/y dir=both,
			error bars/y explicit,
			error bars/error bar style={solid},
			]
			\addplot+[thick, mark=none, color=blue] table[x=frac, y=rampavg_dist3, y error=rampstd_dist3, col sep=comma] {adv_testerror.csv};
			\addplot+[thick, mark=none, color=red, dotted] table[x=frac, y=hingeavg_dist3, y error=hingestd_dist3, col sep=comma] {adv_testerror.csv};
		\end{axis}
	\end{tikzpicture}
	\caption{Test error (vertical axis) versus fraction flipped (horizontal axis) for nonseparable data, by distribution type. Note: test error is averaged over 20 trials, with error bars shown for one standard deviation.}
	\label{fig:adv-testerror}
\end{figure}

\renewcommand{\wid}{0.33\textwidth}
\renewcommand{\hgt}{0.15}
\renewcommand{\scalefactor}{0.85}
\renewcommand{\ybound}{2.6}
\pgfplotsset{footnotesize,}

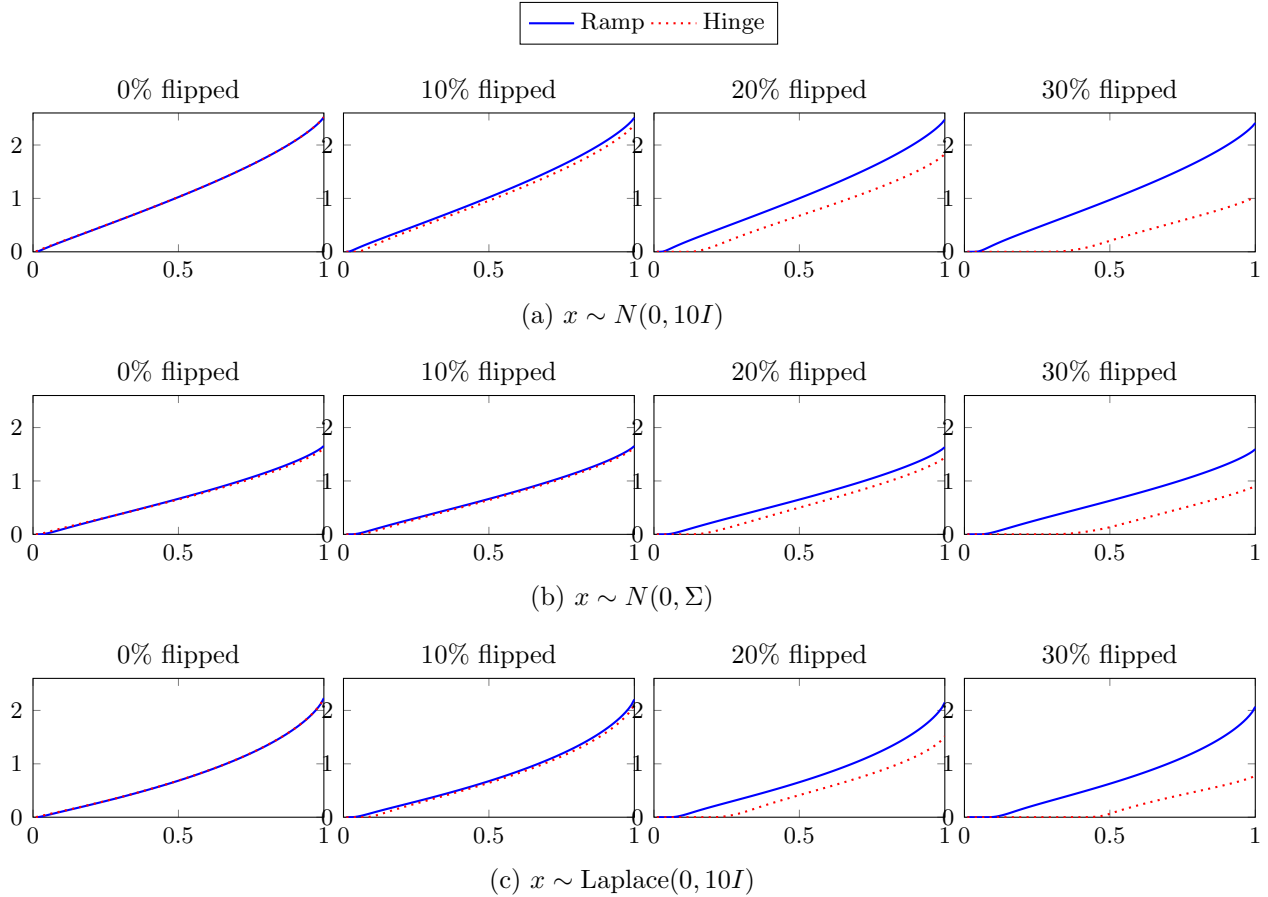
\begin{figure}[htb]
\centering
\captionsetup[subfigure]{position=top}%
\begin{subfigure}{\textwidth}
\centering
\begin{tikzpicture}
	\begin{axis}[
		at={(0, 0)},
		width=\wid,height=\hgt\textheight,
		title={0\% flipped},
		xtick={0,.5,1},
		yticklabel style={/pgf/number format/fixed},
		xmin=0,xmax=1,
		yticklabel style={/pgf/number format/fixed},
		ymin=0,ymax=\ybound,
		legend style={at={(0.6\linewidth,1.8)}, legend columns=-1},
		]
		\addplot [thick, color=blue] table[x=rho, y=rampavg_dist1_0.0, col sep=comma] {adv_cvar.csv};
		\addplot [thick, color=red, dotted] table[x=rho, y=hingeavg_dist1_0.0, col sep=comma] {adv_cvar.csv};
		\legend{Ramp,Hinge}
	\end{axis}
	\begin{axis}[
		at={(0.25\linewidth, 0)},
		width=\wid,height=\hgt\textheight,
		title={10\% flipped},
		xtick={0,.5,1},
		yticklabel style={/pgf/number format/fixed},
		xmin=0,xmax=1,
		yticklabel style={/pgf/number format/fixed},
		ymin=0,ymax=\ybound,
		legend style={at={(0.56\textwidth,1.8)}, legend columns=-1},
		]
		\addplot [thick, color=blue] table[x=rho, y=rampavg_dist1_0.1, col sep=comma] {adv_cvar.csv};
		\addplot [thick, color=red, dotted] table[x=rho, y=hingeavg_dist1_0.1, col sep=comma] {adv_cvar.csv};
	\end{axis}
	\begin{axis}[
		at={(0.5\linewidth, 0)},
		width=\wid,height=\hgt\textheight,
		title={20\% flipped},
		xtick={0,.5,1},
		yticklabel style={/pgf/number format/fixed},
		xmin=0,xmax=1,
		yticklabel style={/pgf/number format/fixed},
		ymin=0,ymax=\ybound,
		legend style={at={(-0.6,-0.2)}, anchor=north,legend columns=-1},
		]
		\addplot [thick, color=blue] table[x=rho, y=rampavg_dist1_0.2, col sep=comma] {adv_cvar.csv};
		\addplot [thick, color=red, dotted] table[x=rho, y=hingeavg_dist1_0.2, col sep=comma] {adv_cvar.csv};
	\end{axis}
	\begin{axis}[
		at={(0.75\linewidth, 0)},
		width=\wid,height=\hgt\textheight,
		title={30\% flipped},
		xtick={0,.5,1},
		yticklabel style={/pgf/number format/fixed},
		xmin=0,xmax=1,
		yticklabel style={/pgf/number format/fixed},
		ymin=0,ymax=\ybound,
		legend style={at={(0.75\textwidth,0.45*\hgt\textheight)}, anchor=north},
		]
		\addplot [thick, color=blue] table[x=rho, y=rampavg_dist1_0.3, col sep=comma] {adv_cvar.csv};
		\addplot [thick, color=red, dotted] table[x=rho, y=hingeavg_dist1_0.3, col sep=comma] {adv_cvar.csv};
	\end{axis}
\end{tikzpicture}
\caption{$x \sim N(0,10I)$}
\label{fig:adv-cvar-dist1}
\end{subfigure}
\begin{subfigure}{\textwidth}
\centering
\begin{tikzpicture}
\begin{axis}[
	at={(0, 0)},
	width=\wid,height=\hgt\textheight,
	title={0\% flipped},
	xtick={0,.5,1},
	yticklabel style={/pgf/number format/fixed},
	xmin=0,xmax=1,
	yticklabel style={/pgf/number format/fixed},
	ymin=0,ymax=\ybound,
	legend style={at={(0.6\linewidth,1.8)}, legend columns=-1},
	]
	\addplot [thick, color=blue] table[x=rho, y=rampavg_dist2_0.0, col sep=comma] {adv_cvar.csv};
	\addplot [thick, color=red, dotted] table[x=rho, y=hingeavg_dist2_0.0, col sep=comma] {adv_cvar.csv};
\end{axis}
\begin{axis}[
	at={(0.25\linewidth, 0)},
	width=\wid,height=\hgt\textheight,
	title={10\% flipped},
	xtick={0,.5,1},
	yticklabel style={/pgf/number format/fixed},
	xmin=0,xmax=1,
	yticklabel style={/pgf/number format/fixed},
	ymin=0,ymax=\ybound,
	legend style={at={(0.56\textwidth,1.8)}, legend columns=-1},
	]
	\addplot [thick, color=blue] table[x=rho, y=rampavg_dist2_0.1, col sep=comma] {adv_cvar.csv};
	\addplot [thick, color=red, dotted] table[x=rho, y=hingeavg_dist2_0.1, col sep=comma] {adv_cvar.csv};
\end{axis}
\begin{axis}[
	at={(0.5\linewidth, 0)},
	width=\wid,height=\hgt\textheight,
	title={20\% flipped},
	xtick={0,.5,1},
	yticklabel style={/pgf/number format/fixed},
	xmin=0,xmax=1,
	yticklabel style={/pgf/number format/fixed},
	ymin=0,ymax=\ybound,
	legend style={at={(-0.6,-0.2)}, anchor=north,legend columns=-1},
	]
	\addplot [thick, color=blue] table[x=rho, y=rampavg_dist2_0.2, col sep=comma] {adv_cvar.csv};
	\addplot [thick, color=red, dotted] table[x=rho, y=hingeavg_dist2_0.2, col sep=comma] {adv_cvar.csv};
\end{axis}
\begin{axis}[
	at={(0.75\linewidth, 0)},
	width=\wid,height=\hgt\textheight,
	title={30\% flipped},
	xtick={0,.5,1},
	yticklabel style={/pgf/number format/fixed},
	xmin=0,xmax=1,
	yticklabel style={/pgf/number format/fixed},
	ymin=0,ymax=\ybound,
	legend style={at={(0.75\textwidth,0.45*\hgt\textheight)}, anchor=north},
	]
	\addplot [thick, color=blue] table[x=rho, y=rampavg_dist2_0.3, col sep=comma] {adv_cvar.csv};
	\addplot [thick, color=red, dotted] table[x=rho, y=hingeavg_dist2_0.3, col sep=comma] {adv_cvar.csv};
\end{axis}
\end{tikzpicture}
\caption{$x \sim N(0,\Sigma)$}
\label{fig:adv-cvar-dist2}
\end{subfigure}
\begin{subfigure}{\textwidth}
\centering
\begin{tikzpicture}
\begin{axis}[
	at={(0, 0)},
	width=\wid,height=\hgt\textheight,
	title={0\% flipped},
	xtick={0,.5,1},
	yticklabel style={/pgf/number format/fixed},
	xmin=0,xmax=1,
	yticklabel style={/pgf/number format/fixed},
	ymin=0,ymax=\ybound,
	legend style={at={(0.6\linewidth,1.8)}, legend columns=-1},
	]
	\addplot [thick, color=blue] table[x=rho, y=rampavg_dist3_0.0, col sep=comma] {adv_cvar.csv};
	\addplot [thick, color=red, dotted] table[x=rho, y=hingeavg_dist3_0.0, col sep=comma] {adv_cvar.csv};
\end{axis}
\begin{axis}[
	at={(0.25\linewidth, 0)},
	width=\wid,height=\hgt\textheight,
	title={10\% flipped},
	xtick={0,.5,1},
	yticklabel style={/pgf/number format/fixed},
	xmin=0,xmax=1,
	yticklabel style={/pgf/number format/fixed},
	ymin=0,ymax=\ybound,
	legend style={at={(0.56\textwidth,1.8)}, legend columns=-1},
	]
	\addplot [thick, color=blue] table[x=rho, y=rampavg_dist3_0.1, col sep=comma] {adv_cvar.csv};
	\addplot [thick, color=red, dotted] table[x=rho, y=hingeavg_dist3_0.1, col sep=comma] {adv_cvar.csv};
\end{axis}
\begin{axis}[
	at={(0.5\linewidth, 0)},
	width=\wid,height=\hgt\textheight,
	title={20\% flipped},
	xtick={0,.5,1},
	yticklabel style={/pgf/number format/fixed},
	xmin=0,xmax=1,
	yticklabel style={/pgf/number format/fixed},
	ymin=0,ymax=\ybound,
	legend style={at={(-0.6,-0.2)}, anchor=north,legend columns=-1},
	]
	\addplot [thick, color=blue] table[x=rho, y=rampavg_dist3_0.2, col sep=comma] {adv_cvar.csv};
	\addplot [thick, color=red, dotted] table[x=rho, y=hingeavg_dist3_0.2, col sep=comma] {adv_cvar.csv};
\end{axis}
\begin{axis}[
	at={(0.75\linewidth, 0)},
	width=\wid,height=\hgt\textheight,
	title={30\% flipped},
	xtick={0,.5,1},
	yticklabel style={/pgf/number format/fixed},
	xmin=0,xmax=1,
	yticklabel style={/pgf/number format/fixed},
	ymin=0,ymax=\ybound,
	legend style={at={(0.75\textwidth,0.45*\hgt\textheight)}, anchor=north},
	]
	\addplot [thick, color=blue] table[x=rho, y=rampavg_dist3_0.3, col sep=comma] {adv_cvar.csv};
	\addplot [thick, color=red, dotted] table[x=rho, y=hingeavg_dist3_0.3, col sep=comma] {adv_cvar.csv};
\end{axis}
\end{tikzpicture}
\caption{$x \sim \text{Laplace}(0,10I)$}
\label{fig:adv-cvar-dist3}
\end{subfigure}
\caption{$\CVaR_\rho(d((w,b),(x,y)); P)$ (vertical axis) versus $\rho$ (horizontal axis) on nonseparable data, by distribution type and fraction flipped. Note: $\CVaR_\rho$ is averaged over 20 trials.}
\label{fig:adv-cvar}
\end{figure}

\section{Benign Nonconvexity of Ramp Loss on Linearly Separable Symmetric Data}\label{sec:localmin}

We consider \eqref{eq:prob-ramploss-squarenorm}, setting $b = 0$ for simplicity to obtain
\begin{equation}\label{eq:prob-benign}
	\min_w \left\{ F_\epsilon(w) := \frac12 \epsilon \|w\|_2^2 + \bbE_{(x,y) \sim P} \left[L_R( y w^\top x ) \right] \right\}.
\end{equation}
In this section, we explore the question: is the nonconvex problem
\eqref{eq:prob-benign} \emph{benign}, in the sense that, for
reasonable data sets, descent algorithms for smooth nonlinear
optimization will find the global minimum?  In the formulation
\eqref{eq:prob-benign}, we make use of the true distribution $P$
rather than its empirical approximation $P_n$, because results
obtained for $P$ will carry through to $P_n$ for large $n$, with high
probability.
Exploring this question for general data distributions is difficult,
so we examine \emph{spherically symmetric distributions}.
\begin{definition}\label{def:spherically-symmetric-distribution}
	Let $\Pi$ be a distribution on $\bbR^d$. We say that $\Pi$ is \emph{spherically symmetric about $0$} if, for all measurable sets $A \subset \bbR^d$ and all orthogonal matrices $H \in \bbR^{d \times d}$, we have
	\[ \bbP_{x \sim \Pi} [x \in A] = \bbP_{x \sim \Pi}[Hx \in A]. \]
\end{definition}
Spherically symmetric distributions include normal distributions and Student's $t$-distributions with  covariances $\sigma^2 I$. One useful characterization is that $x = r \cdot s$ where $r$ is a random variable on $\bbR_+$ and $s$ is a uniform random variable on the unit sphere $\{s \in \bbR^d : \|s\|_2 = 1\}$, with $r$ and $s$ independent.

We make the following assumption on the data-generating distribution
\begin{assumption}\label{ass:ramp-loss-symmetric}
The distribution $P$ has the form $y = \sign((w^*)^\top x)$
and $x \sim P_x$ where $P_x$
is some spherically symmetric distribution about $0$ on $\bbR^d$ which is absolutely continuous with respect to Lebesgue measure on $\bbR^d$ (so the probability of lower-dimensional sets is $0$) and $w^*$ is some unit Euclidean norm vector in $\bbR^d$.
\end{assumption}
Under this assumption, we will show that $F_{\epsilon}$ defined in
\eqref{eq:prob-benign} has a single local minimizer $w(\epsilon)$ in
the direction of the canonical hyperplane $w^*$: $w(\epsilon) = \alpha
w^*$ for some $\alpha>0$. Since the function is also bounded below (by
zero) and coercive, this local minimizer is the global minimizer.

We now investigate differentiability properties of the objective $F_\epsilon$.
\begin{lemma} \label{lem:Fe}
	When $w \neq 0$, the function $F_\epsilon(w)$ is differentiable in $w$ with gradient
	\[
	\nabla F_{\epsilon}(w) = \epsilon w - \bbE_{(x,y) \sim P} \left[ \bm{1}\left( 0 \leq y w^\top x \leq 1 \right) y x \right].
	\]
	At $w=0$, the directional derivative of $F_{\epsilon}$ in the direction $w^*$ is $F_\epsilon'(0;w^*) \leq -\bbE_{x \sim P_x} [|(w^*)^\top x|] < 0$.
\end{lemma}
\begin{proof}
We appeal to \citet[Theorem 2.7.2]{Clarke1990nonsmooth} which shows
how to compute the generalized gradient of a function defined via
expectations. We note that for every $(x,y)$, $w \mapsto L_R(y w^\top
x)$ is a regular function since it is a difference of two convex
functions, and is differentiable everywhere except when $y w^\top x
\in \{0,1\}$, with gradient $-\bm{1}(0 < y w^\top x < 1) yx$. When $w
\neq 0$, the set of $(x,y) \sim P$ such that $y w^\top x \in \{0,1\}$
is a measure-zero set under Assumption
  \ref{ass:ramp-loss-symmetric}, so \citet[Theorem
  2.7.2]{Clarke1990nonsmooth} states that the generalized gradient of
$\bbE_{(x,y) \sim P}[L_R(y w^\top x)]$ is the singleton set $\left\{
-\bbE_{(x,y) \sim P} \left[ \bm{1}\left( 0 < y w^\top x < 1 \right) y
  x \right] \right\}$. As it is a singleton, this coincides with the
gradient at $w$. Furthermore, we can write $\bbE_{(x,y) \sim P} \left[ \bm{1}\left( 0 < y w^\top x < 1 \right) y x \right] = \bbE_{(x,y) \sim P} \left[ \bm{1}\left( 0 \leq y w^\top x \leq 1 \right) y x \right]$ since $y w^\top x \in \{0,1\}$ is a set of measure $0$ by \cref{ass:ramp-loss-symmetric}. This proves the first claim.
	
For the final claim, note first that the gradient of the
regularization term $\tfrac12 \epsilon \|w\|_2^2$ is zero at $w=0$.
Thus we need consider only the $L_R$ term in applying the definition
of directional derivative to \eqref{eq:prob-benign}. For the direction
$w^*$, we have
\begin{align*}
		F_{\epsilon}'(0;w^*) &= \lim_{\alpha \downarrow 0} \, \frac{1}{\alpha} \left( \bbE_{(x,y) \sim P} [L_R(y(\alpha (w^*)^\top x))] - \bbE_{(x,y) \sim P} [L_R(0)] \right) \\
		&= \lim_{\alpha \downarrow 0} \, \frac{1}{\alpha} \left( \bbE_{x \sim P_x} [L_R(\alpha |(w^*)^\top x|)] - 1 \right) \\
		&= \lim_{\alpha \downarrow 0} \, \frac{1}{\alpha} \left( \bbE_{x \sim P_x} \left[(1-\alpha |(w^*)^\top x|) \cdot \bm{1}(0 \leq \alpha |(w^*)^\top x| \leq 1) \right] - 1 \right)\\
		&= \lim_{\alpha \downarrow 0} \, \frac{1}{\alpha} \left( \bbP_{x \sim P_x}[0 \leq \alpha |(w^*)^\top x| \leq 1] - 1 \right)\\
		&\quad - \lim_{\alpha \downarrow 0}\, \bbE_{x \sim P_x} \left[|(w^*)^\top x| \cdot \bm{1}(0 \leq \alpha |(w^*)^\top x| \leq 1) \right].
\end{align*}
Now observe that $g_\alpha(x) := |(w^*)^\top x| \bm{1}(0 \leq \alpha |(w^*)^\top x| \leq 1)$ monotonically increases pointwise to $g(x) = |(w^*)^\top x|$  as $\alpha \downarrow 0$, therefore by the monotone convergence theorem $\lim_{\alpha \downarrow 0} \bbE_{x \sim P_x} [|(w^*)^\top x| \bm{1}(0 \leq \alpha |(w^*)^\top x| \leq 1)] = \bbE_{x \sim P_x} [|(w^*)^\top x|]$. Furthermore, $\lim_{\alpha \downarrow 0} \frac{1}{\alpha} \left(\bbP_{x \sim P_x}[0 \leq \alpha |(w^*)^\top x| \leq 1] - 1\right) \leq 0$. Therefore
\[ F_\epsilon'(0;w^*) \leq - \bbE_{x \sim P_x} [|(w^*)^\top x|] < 0. \]
\end{proof}

Lemma~\ref{lem:Fe} shows that $w=0$ is not a local minimum of
$F_\epsilon$, hence any reasonable descent algorithm will not converge
to it.  We now investigate stationary points $\grad F_\epsilon(w) = 0$
for $w \neq 0$ under \cref{ass:ramp-loss-symmetric}.  To this end, we will use the following properties of spherically symmetric distributions.
\begin{lemma}[{\citet[Corollary 4.3]{FourdrinierEtAl2018bookCh4}, \citet[Theorem C.3]{Paolella2018book}}]\label{lemma:spherically-symmetric-conditional-marginal}
	Let $x \sim P_x$ be a spherically symmetric distribution on $\bbR^d$ about $0$. Decompose $x = (x^1,x^2)$ where $x^1 \in \bbR^{p}$ and $x^2 \in \bbR^{d-p}$, with $1 \leq p \leq d-1$. The marginal distribution of $x^1$ and the conditional distribution $x^1 \mid x^2$ are spherically symmetric on $\bbR^p$ about $0$.
\end{lemma}
\begin{lemma}\label{lemma:spherically-symmetric-full-dim-unbounded}
Let $P$ be a spherically symmetric distribution on $\bbR^d$ absolutely
continuous with respect to Lebesgue measure on $\bbR^d$ (i.e., it is a
nondegenerate distribution which has zero measure on any
lower-dimensional set). Consider a closed full-dimensional unbounded
polyhedron $A$ that contains the origin. Then $\bbP_{x \sim P_x}[x \in
  A] > 0$.
\end{lemma}
\begin{proof}
Consider the disjoint union
\[
A = \bigcup_{k \in \bbN} A_k, \quad \text{where} \ A_k = \left\{ x \in
A : k-1 \leq \|x\|_2 < k \right\}.
\]
By our assumptions on $A$, each $A_k$ is non-empty and
full-dimensional. Note that $\bbP_{x \sim P_x}[x \in A_k] \leq \bbP_{x
  \sim P_x}[x \in A] \leq \sum_{k' \in \bbN} \bbP_{x \sim P_x}[x \in
  A_{k'}]$ for every $k \in \bbN$. Note that whenever $\bbP_{x \sim
  P_x}[x \in A_k] = 0$, we must also have $\bbP_{x \sim P_x}[k-1 \leq
  \|x\|_2 < k] = 0$ also, since we can cover $\{x : k-1 \leq \|x\|_2 <
k\}$ with finitely many rotated copies of $A_k$, since it is
full-dimensional, and each of these has identical measure by spherical
symmetry of $P$. Now, if all $\bbP_{x \sim P_x}[x \in A_k] = 0$, then
$\bbP_{x \sim P_x}[x \in \bbR^d] = \sum_{k \in \bbN} \bbP_{x \sim P_x}[k-1 \leq \|x\|_2 < k] = 0$ which is a contradiction. This implies
that there is at least one $\bbP_{x \sim P_x}[x \in A_k] > 0$, hence
$\bbP_{x \sim P_x}[x \in A] > 0$.
\end{proof}
We also use this general property of distributions, which we present without proof. Given a set $A \subseteq \bbR^d$, let $\Conv(A)$ and $\Cone(A)$ be the convex and conic hull respectively.
\begin{lemma}\label{lemma:centroids}
  Let $P_x$ be a distribution over $\bbR^d$. For any measurable set $A
  \subseteq \bbR^d$, $\bbE_{x \sim P_x}[\bm{1}(x \in A) x] = \bbP_{x
    \sim P_x}[x \in A] a \in \Cone(A)$ for some  $a \in \Conv(A)$.
\end{lemma}

We first prove that when $d=2$, points which are not positive multiples of $w^*$ cannot be stationary points. We will then show how the proof for general $d$ essentially reduces to this setting. Since $d=2$, we will write $w = (w_1,w_2)$ and $x = (x_1,x_2)$.
\begin{theorem}\label{thm:2d-ramp-loss-localmin}
Consider $d=2$ and suppose \cref{ass:ramp-loss-symmetric} holds. For
each vector $w \neq (0,0)$ that is not a positive multiple of $w^*$, we have $\grad
F_{\epsilon}(w) \neq (0,0)$.
\end{theorem}
\begin{proof}
From the expression for $\nabla F_\epsilon(w)$ in Lemma~\ref{lem:Fe},
the result will be proved if we can show that
  \begin{equation} \label{eq:EE4}
\begin{aligned}
  & \bbE_{(x,y) \sim P} \left[ \bm{1}(0 \leq yw^\top x \leq 1) yx \right] \\
  & = \bbE_{x \sim P_x} \left[ \bm{1}(0 \leq \sign ((w^*)^Tx) w^\top x \leq 1)  \sign ((w^*)^Tx) x \right]
\end{aligned}
\end{equation}
is not a positive
multiple of $w$ whenever $w$ is not a positive
multiple of $w^*$. Observe that the ``good'' region $\{x : 0 \leq
\sign((w^*)^\top x) w^\top x \leq 1\}$ is the union of two almost
disjoint polyhedra $\{x : (w^*)^\top x \geq 0, 0 \leq w^\top x \leq
1\} \cup \{x : (w^*)^\top x \leq 0, -1 \leq w^\top x \leq 0\}$. Define
\[
\cR := \left\{ x : (w^*)^\top x \geq 0, 0 \leq w^\top x \leq 1 \right\}.
\]
Since $\{x : (w^*)^\top x \leq 0, -1 \leq w^\top x \leq 0\} = \{-x : x
\in \cR\}$ can be obtained by an orthogonal transformation of $\cR$,
we have by spherical symmetry of $P_x$ that
\begin{align*}
		&\bbE_{x \sim P_x} \left[ \bm{1}(0 \leq \sign((w^*)^\top x) w^\top x \leq 1) \sign((w^*)^\top x) x \right]\\
  &= \bbE_{x \sim P_x} \left[ \bm{1}(0 \leq w^\top x \leq 1, (w^*)^\top x \geq 0) x \right] \\
  & \quad - \bbE_{x \sim P_x} \left[ \bm{1}(-1 \leq w^\top x \leq 0, (w^*)^\top x \leq 0) x \right]\\
  &= \bbE_{x \sim P_x} \left[ \bm{1}(0 \leq w^\top x \leq 1, (w^*)^\top x \geq 0) x \right] \\
  & \quad + \bbE_{x \sim P_x} \left[ \bm{1}(0 \leq w^\top (-x) \leq 1, (w^*)^\top (-x) \geq 0) (-x) \right]\\
		&= \bbE_{x \sim P_x} \left[ \bm{1}(x \in \cR) x \right] + \bbE_{x \sim P_x} \left[ \bm{1}(-x \in \cR) (-x) \right] = 2 \bbE_{x \sim P_x} \left[ \bm{1}(x \in \cR) x \right].
\end{align*}
Therefore we need to show that $\bbE_{x \sim P_x} \left[ \bm{1}(x \in \cR) x \right]$ is not a positive multiple
of $w$ whenever $w$ is not a positive multiple of
$w^*$. Since $P_x$ is spherically symmetric, we can without loss of
generality change the basis so that $w^* = (1,0)$, so that $y =
\sign(x_1)$ and $\cR = \left\{ x : x_1 \geq 0, 0 \leq w^\top x \leq 1
\right\}$.
	
Notice that $\sign(x_1) x = (|x_1|,\sign(x_1) x_2)$, so that
\[
\bbE_{x
  \sim P_x} \left[ \bm{1}(0 \leq \sign(x_1) w^\top x \leq 1) \sign(x_1)
  x \right]
\]
has a non-negative first component. Therefore, whenever $w_1 < 0$,
\eqref{eq:EE4} cannot be a positive multiple of $w$.

Consider now the case of $w_1 = 0$.  Since we have already dealt with
the case $w= (0,0)$ in Lemma~\ref{lem:Fe}, and are excluding it from
consideration here, we must have $w_2 \neq 0$. Then
\[
\cR = \left\{ x : x_1 \geq 0, 0 \leq w^\top x \leq 1 \right\} =
\begin{cases}
  \left\{ x : x_1 \geq 0, 0 \leq x_2 \leq 1/|w_2| \right\}, & 
  w_2>0, \\
  \left\{ x : x_1 \geq 0, -1/|w_2| \leq x_2 \leq 0 \right\}, & 
  w_2<0,
\end{cases}
\]
For either sign of $w_2$, since $P_x$ is spherically symmetric and
absolutely continous with respect to Lebesgue measure and $\cR$ is
full-dimensional, unbounded, and contains the origin, by
\cref{lemma:spherically-symmetric-full-dim-unbounded}, $\bbP_{x \sim P_x} \left[x \in \cR \right] > 0$. Additionally, by \cref{ass:ramp-loss-symmetric}, $\bbP_{x \sim P_x} \left[x_1 = 0, x \in \cR \right] = 0$, so that
\[\bbP_{x \sim P_x} \left[x_1 > 0, x \in \cR \right] > 0.\]
Therefore $\bbE_{x \sim P_x}[\bm{1}(x \in \cR) x_1] > 0$,
hence $\bbE_{x \sim P_x} \left[
   \bm{1}(x \in \cR) x \right]$ is not a multiple of $w=(0,w_2)$.
	
We now consider $w_1 > 0$ and without loss of generality $w_2 > 0$. (When $w_2 < 0$, an analogous $\cR$ can be obtained via a reflection across the $x_1$-axis; See \cref{fig:localmin} for an illustration.)
	\begin{figure}[htb]
		\centering
		\begin{tikzpicture}	
			\def\picangle{65}
			\def\rad{2}
			
			\coordinate (O) at (0,0);%
			
			\coordinate (R1) at (0,\rad);%
			\coordinate (S) at +(\picangle:\rad);%
			\coordinate (R2) at +(\picangle-90:\rad);%
			
			\coordinate (R31) at ($(S)!0.8*\rad!90:(O)$);%
			\coordinate[label=above:{$(w^*)^\top x=0$}] (R1s) at +(90:1.5*\rad);%
			\coordinate (R1t) at +(-90:1.5*\rad);%
			\coordinate (R21) at +(\picangle-90:2*\rad);%
			\coordinate (S1) at +(\picangle:1.3*\rad);%
			
			\coordinate (R32) at (intersection of O--R1 and S--R31);%
			
			\path[pattern=crosshatch dots,pattern color=gray!60] (O) --  (R32) -- (R31) -- (R21) -- (O);
			\node at (barycentric cs:O=1,R32=1,R31=1,R21=1) {$\cR$};

			\draw (R1s) -- (R1t);
			\path (O) -- (S1) coordinate[pos=0](Ss) coordinate[pos=1,label=above:{$\Span(\{w\})$}](St);
			\draw (Ss) -- (St);
			\path (O) -- (R21) coordinate[pos=0](R2s) coordinate[pos=1.1,label=right:{$w^\top x=0$}](R2t);
			\draw (R2s) -- (R2t);
			\path (R32) -- (R31) coordinate[pos=0](R3s) coordinate[pos=1.1,label=right:{$w^\top x=1$}](R3t);
			\draw (R3s) -- (R3t);

			\def\picangle{-65}
			\def\rad{2}
			\def\xoffset{6}
			
			\coordinate (O) at (\xoffset,0);%
			
			\coordinate (R1) at (\xoffset,-\rad);%
			
			\coordinate (S) at ($(O)+(\picangle:\rad)$);%
			\coordinate (R2) at ($(O)+(\picangle+90:\rad)$);%
			
			\coordinate (R31) at ($(S)!0.8*\rad!-90:(O)$);%
			\coordinate (R1s) at ($(O)+(-90:1.5*\rad)$);%
			\coordinate[label=above:{$(w^*)^\top x=0$}] (R1t) at ($(O)+(90:1.5*\rad)$);%
			\coordinate (R21) at ($(O)+(\picangle+90:2*\rad)$);%
			\coordinate (S1) at ($(O)+(\picangle:1.3*\rad)$);%
			
			\coordinate (R32) at (intersection of O--R1 and S--R31);%
			
			\path[pattern=crosshatch dots,pattern color=gray!60] (O) --  (R32) -- (R31) -- (R21) -- (O);
			\node at (barycentric cs:O=1,R32=1,R31=1,R21=1) {$\cR$};

			\draw (R1s) -- (R1t);
			\path (O) -- (S1) coordinate[pos=0](Ss) coordinate[pos=1,label=below:$\Span(\{w\})$](St);
			\draw (Ss) -- (St);
			\path (O) -- (R21) coordinate[pos=0](R2s) coordinate[pos=1.1,label=right:{$w^\top x = 0$}](R2t);
			\draw (R2s) -- (R2t);
			\path (R32) -- (R31) coordinate[pos=0](R3s) coordinate[pos=1.1,label=right:{$w^\top x=1$}](R3t);
			\draw (R3s) -- (R3t);
			
			\draw[dashed] (-1,0) -- (\xoffset+5,0);
		\end{tikzpicture}
		\caption{Illustration of $\cR$ when $w_1 > 0$ for $w_2 > 0$ (left) and $w_2 < 0$ (right). Note that the two regions are reflections of one another across the $x_1$-axis (dashed line) when the sign of $w_2$ flips.}
		\label{fig:localmin}
	\end{figure}
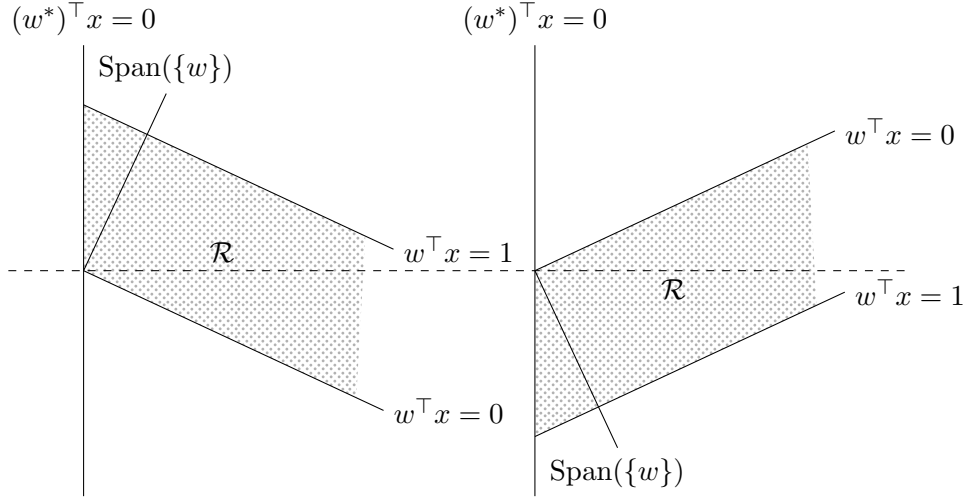	
	We define the lines $R_1,R_2,R_3$ which bound $\cR$, and $S = \Span(\{w\})$ which we will use in our analysis:
	\begin{align*}
		R_1 &= \{ x : (w^*)^\top x = 0\} = \left\{ x : x_1 = 0 \right\}\\
		R_2 &= \{x : w^\top x = 0\}\\
		R_3 &= \{ x : w^\top x = 1\}\\
		S &= \left\{ t w : t \in \bbR \right\}.
	\end{align*}
	Note that $S$ is orthogonal to $R_2$ and $R_3$. We consider the following decomposition of $\cR$:
	\begin{align*}
		\cT &= \text{(closed) triangle bounded by $R_1$, $R_3$ and $S$}\\
		\cT' &= \text{reflection of $\cT$ across $S$}\\
		\cR' &= \cR \setminus (\cT \cup \cT'). 
	\end{align*}
        This decomposition is illustrated in \cref{fig:localmin2}.
	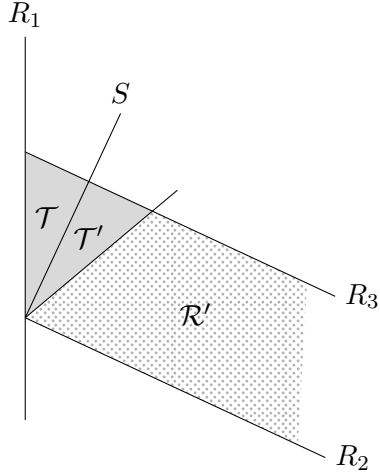
\begin{figure}[tb]
		\centering
		\begin{tikzpicture}	
			\def\picangle{65}
			\def\rad{2}
			
			\coordinate (O) at (0,0) {};%
			
			\coordinate (R1) at (0,\rad) {};%
			\coordinate (S) at +(\picangle:\rad)  {};%
			\coordinate (R2) at +(\picangle-90:\rad)  {};%
			\coordinate (R4) at +(2*\picangle-90:\rad)  {};%
			
			\coordinate (R31) at ($(S)!0.8*\rad!90:(O)$)  {};%
			\coordinate (R11) at +(90:1.7*\rad)  {};%
			\coordinate (R21) at +(\picangle-90:2*\rad)  {};%
			\coordinate (S1) at +(\picangle:1.5*\rad)  {};%
			\coordinate (R41) at +(2*\picangle-90:1.2*\rad)  {};%
			
			\coordinate (R32) at (intersection of O--R1 and S--R31) {};%
			\coordinate (R42) at (intersection of O--R4 and S--R31) {};%
			
			\coordinate (Of) at (0,0);
			\coordinate (Sf) at (S);
			\coordinate (R2f) at (R21);
			\coordinate (R3f1) at (R31);
			\coordinate (R3f2) at (R32);
			\coordinate (R4f) at (R42);
			\path[pattern=crosshatch dots,pattern color=gray!60] (O) --  (R42) -- (R31) -- (R21) -- (O);
			\node at (barycentric cs:O=1,R42=1,R31=1,R21=1) {$\cR'$};
			\path[fill=gray!30] (O) --  (R32) -- (S) -- (O);
			\node at (barycentric cs:O=1,R32=1,S=1) {$\cT$};
			\path[fill=gray!30] (O) --  (S) -- (R42) -- (O);
			\node at (barycentric cs:O=1,S=1,R42=1) {$\cT'$};

			\path (O) -- (R11) coordinate[pos=-0.4](R1s) coordinate[pos=1.1,label=above:$R_1$](R1t);
			\draw (R1s) -- (R1t);
			\path (O) -- (S1) coordinate[pos=0](Ss) coordinate[pos=1,label=above:$S$](St);
			\draw (Ss) -- (St);
			\path (O) -- (R21) coordinate[pos=0](R2s) coordinate[pos=1.1,label=right:$R_2$](R2t);
			\draw (R2s) -- (R2t);
			\path (R32) -- (R31) coordinate[pos=0](R3s) coordinate[pos=1.1,label=right:$R_3$](R3t);
			\draw (R3s) -- (R3t);
			\path (O) -- (R41) coordinate[pos=0](R4s) coordinate[pos=1.1](R4t);
			\draw (R4s) -- (R4t);	
		\end{tikzpicture}
		\caption{Decomposition of $\cR$ into different regions.}
		\label{fig:localmin2}
	\end{figure}
We will now show the following three facts.
\begin{enumerate}
\item We show that $\cT' \subset \cR$, so that in fact $\cR' \cup \cT
  \cup \cT' = \cR$.  To see that $\cT' \subset \cR$, we will show that
  its three extreme points are in $\cR$. These correspond exactly to
  the three extreme points of $\cT$, namely $p_1 = R_1 \cap S =
  (0,0)$, $p_2 = R_1 \cap R_3$ and $p_3 = S \cap R_3$. Clearly the
  reflection of $p_1$ and $p_3$ are themselves since they are already
  on $S$. For $p_2 = R_1 \cap R_3$, we know its reflection $p_2'$ is
  in $R_3$ since $R_3$ is orthogonal to $S$. We check that the first
  coordinate of $p_2'$ is nonnegative in order to deduce that it is in
  $\cR$.
  In fact this claim follows from the fact that $p_3 = t w$ for some $t>0$, from $w_1>0$, and from the explicit formula $p_2' = p_3+ (p_3-p_2)$, which tells that the first component of $p_2'$ is $2tw_1$. Since this value is nonnegative, we are done.

\item $\bbE_{x \sim P_x} \left[ \bm{1}(x \in \cT \cup \cT')x \right] \in
  S$. The distribution $P_x$ is symmetric across $S$ since a reflection
  across a line through the origin is an orthogonal transformation. By
  construction, $\cT \cup \cT'$ is symmetric across $S$, hence
  $\bbE_{x \sim P_x} \left[ \bm{1}(x \in \cT \cup \cT') x \right] \in
  S$.
		
\item We show now that $\bbE_{x \sim P_x} \left[ \bm{1}(x \in \cR') x
  \right] \not\in S$.  Since $P_x$ is spherically symmetric and
  absolutely continous with respect to Lebesgue measure and $\cR'$ is
  full-dimensional, unbounded, and its closure $\cl(\cR')$ contains the origin, by
  \cref{lemma:spherically-symmetric-full-dim-unbounded} we have $0 < \bbP_{x \sim
  	P_x}[x \in \cl(\cR')] = \bbP_{x \sim
    P_x}[x \in \cR']$, where the equality follows by absolute continuity of $P_x$ (\cref{ass:ramp-loss-symmetric}).
  Also, since $(0,0) \in \cT \cup \cT'$, it is
  not in $\cR'$ (but it is an extreme point). Therefore $(0,0) \not\in
  \Conv(\cR')$. By \cref{lemma:centroids}, $\bbE_{x \sim P_x} \left[
    \bm{1}(x \in \cR') x \right] = \bbP_{x \sim P_x}[x \in \cR'] a \neq
  (0,0)$ where $a \in \Conv(\cR')$. Finally, since $\cT$ was defined
  as a triangle with one side on $S$, we clearly have $\cR \cap S
  \subset \cT$. Clearly, $S$ cannot intersect any part of $\cR'$,
  hence $S \cap \Cone(\cR') = \{(0,0)\}$, so that $\bbE_{x \sim P_x} \left[
    \bm{1}(x \in \cR') x \right] \not\in S$.
\end{enumerate}

Since we have
\[
\bbE_{x \sim P_x} \left[ \bm{1}(x \in \cR) x
  \right] = \bbE_{x \sim P_x} \left[ \bm{1}(x \in \cT \cup \cT') x
  \right] + \bbE_{x \sim P_x} \left[ \bm{1}(x \in \cR') x \right],
\]
where the second fact shows that the first vector on the right-hand
side is in $S$ while  the third fact  shows that the second vector on the
right-hand side is not in $S$, we conclude that $\bbE_{x \sim P_x}
\left[ \bm{1}(x \in \cR) x \right] \notin S$, as required.
\end{proof}

We now prove the claim about uniqueness and form of the global minimizer for
the case of general dimension $d$.
\begin{theorem}\label{thm:ramp-loss-localmin}
	For arbitrary dimension $d$, suppose \cref{ass:ramp-loss-symmetric} holds. Then for $w \neq 0$, $\grad F_{\epsilon}(w) \neq 0$ whenever $w$ is not a positive multiple of $w^*$. Furthermore, a unique stationary point $w(\epsilon) = \alpha(\epsilon) w^*$ exists for a unique $\alpha(\epsilon) > 0$.
\end{theorem}
\begin{proof}
	Since $P_x$ is spherically symmetric, without loss of generality, consider $w^* = (1,0,\ldots,0)$. Since $y x_1 = \sign(x_1)x_1 = |x_1|$, we cannot have $w = -\alpha w^*$ for $\alpha > 0$ be a stationary point,
	because
	\begin{align*}
	\grad_{x_1}F_{\epsilon}(-\alpha w^*) &= -\epsilon\alpha - \bbE_{(x,y) \sim P} \left[ \bm{1}(0 \leq y w^\top x \leq 1) y x_1 \right]\\
	&= -\epsilon\alpha - \bbE_{x \sim P_x} \left[ \bm{1}(0 \leq -\alpha |x_1| \leq 1) |x_1| \right] \leq -\eps\alpha < 0.
	\end{align*}
	
	Now consider $w \neq 0$ that is not a multiple of $w^*$. Consider the two-dimensional plane in $\bbR^d$ spanned by $w$ and $w^*$. Change the basis if necessary so that $w = (w_1,w_2,0,\ldots,0)$ (this is without loss of generality as $P_x$ is symmetric hence invariant to orthogonal transformations). With this change of basis, the first two entries of $\bbE_{(x,y) \sim P} \left[ \bm{1}(0 \leq yw^\top x \leq 1) yx \right]$ are determined fully by what happens on the $(x_1,x_2)$ coordinates. More formally, we can without loss of generality consider the marginal distribution $P_x(x_1,x_2)$ on $\bbR^2$ obtained by integrating out $x_3,\ldots,x_d$ (this is spherically symmetric by \cref{lemma:spherically-symmetric-conditional-marginal}). Then \cref{thm:2d-ramp-loss-localmin} applies to prove our first claim.
	
	Finally, consider $w = \alpha w^* = (\alpha,0,\ldots,0)$ for $\alpha > 0$. Then the components of
	\[\bbE_{(x,y) \sim P} \left[ \bm{1}(0 \leq yw^\top x \leq 1) yx \right]\]
	are, for $j \in [d]$,
	\[ \bbE_{(x,y) \sim P} \left[ \bm{1}(0 \leq yw^\top x \leq 1) yx_j \right] = \bbE_{x \sim P_x} \left[ \bm{1}(0 \leq |x_1| \leq 1/\alpha) \sign(x_1) x_j \right]. \]
	By \cref{lemma:spherically-symmetric-conditional-marginal} the conditional distribution $P_x(x_j \mid x_1)$ is still spherically symmetric about $0$ for $j \geq 2$, therefore $\bbE_{x \sim P_x} \left[ \bm{1}(0 \leq |x_1| \leq 1/\alpha) \sign(x_1) x_j \mid x_1 \right] = \bm{1}(0 \leq |x_1| \leq 1/\alpha) \sign(x_1) \bbE_{x \sim P_x} \left[ x_j \mid x_1 \right] = 0$ for any $x_1$. Consequently, for $j \geq 2$, we have
	\[ \bbE_{x \sim P_x} \left[ \bm{1}(0 \leq |x_1| \leq 1/\alpha) \sign(x_1) x_j \right] = 0. \]
	Now consider $j=1$. Then first component of $\bbE_{(x,y) \sim P} \left[ \bm{1}(0 \leq yw^\top x \leq 1) yx \right]$ is 
	\[g(\alpha) := \bbE_{x \sim P_x} \left[ \bm{1}(0 \leq |x_1| \leq 1/\alpha) |x_1| \right].\] Consequently, by \cref{lem:Fe}
	the first component of the gradient is
	\[ \grad_{x_1} F_{\epsilon}(\alpha w^*) = \epsilon \alpha - g(\alpha).\]
	Since $P_x$ is absolutely continuous with respect to Lebesgue measure, $g$ must be continuous. Also, we have $\lim_{\alpha \to \infty} g(\alpha) = 0$ and $\lim_{\alpha \downarrow 0} g(\alpha) = \bbE_{x \sim P_x}[|x_1|] > 0$. Furthermore, $g$ is non-increasing by definition. We conclude that there must exist a unique $\alpha(\epsilon) > 0$ for which $\epsilon \alpha(\epsilon) = g(\alpha(\epsilon))$, and $w(\epsilon) = \alpha(\epsilon) w^*$ is the unique stationary point.
\end{proof}

\paragraph{Label Flipping.}
Our experiments in Section~\ref{sec:numerical} showed that solutions
of the problems analyzed in this section showed remarkable resilience
to ``flipping'' of the labels $y$ on a number of samples.
To give some insight into this phenomenon, suppose that $y = \delta \sign((w^*)^\top x)$ where $\delta \in \{\pm 1\}$ is a random variable independent of $(x,y)$, and $\delta = +1$ (resp. $-1$) with probability $p$ (resp. $1-p$). Then some simple transformations give
\begin{align*}
\bbE_{(x,y) \sim P} \left[ L_R(y w^\top x) \right] &= p \cdot \bbE_{x \sim P_x} \left[ L_R( \sign((w^*)^\top x) w^\top x) \right]\\
&\quad + (1-p) \cdot \bbE_{x \sim P_x} \left[ L_R(- \sign((w^*)^\top x) w^\top x) \right]\\
&= p \cdot \bbE_{x \sim P_x} \left[ L_R( \sign((w^*)^\top x) w^\top x) \right]\\
&\quad + (1-p) \cdot \bbE_{x \sim P_x} \left[ L_R(\sign((-w^*)^\top (-x)) w^\top (-x)) \right]\\
&= p \cdot \bbE_{x \sim P_x} \left[ L_R( \sign((w^*)^\top x) w^\top x) \right]\\
&\quad + (1-p) \cdot \bbE_{x \sim P_x} \left[ L_R(\sign((-w^*)^\top x) w^\top x) \right]
\end{align*}
where the last equality uses the fact that $x$ and $-x$ have the same distribution as $P_x$ is spherically symmetric. We see that in the noisy label setting, the objective $F_{\epsilon}$ is a combination of two noise-free objectives
\begin{align*}
F_{\epsilon}(w) &= p \cdot \left( \epsilon \|w\|_2^2 + \bbE_{x \sim P_x} \left[ L_R( \sign((w^*)^\top x) w^\top x) \right] \right)\\
&\quad + (1-p) \cdot \left( \epsilon \|w\|_2^2 + \bbE_{x \sim P_x} \left[ L_R( \sign((-w^*)^\top x) w^\top x) \right] \right),
\end{align*}
one where labels are $y = \sign((w^*)^\top x)$ with weight $p$, and
the other where labels are $y = \sign((-w^*)^\top x)$ generated by the
\emph{opposite} hyperplane. When $p > 1-p$, more weight is dedicated
to the $w^*$-generated points, hence the solution to
\eqref{eq:prob-benign} is $w^*$, and vice versa. This informal
analysis explains to a large extent the results reported in
Section~\ref{sec:numerical-adversarial}.

\bibliographystyle{abbrvnat}

\end{document}